\documentclass{article}



\usepackage[final]{neurips_2025}



\usepackage[utf8]{inputenc} 
\usepackage[T1]{fontenc}    
\usepackage{hyperref}       
\usepackage{url}            
\usepackage{booktabs}       
\usepackage{amsfonts}       
\usepackage{nicefrac}       
\usepackage{microtype}      

\usepackage{amsmath,amsfonts,bm}
\usepackage{amssymb}
\usepackage{amsthm}

\usepackage{url}
\usepackage{booktabs}       
\usepackage{amsfonts}       
\usepackage{nicefrac}       
\usepackage{microtype}      
\usepackage{xcolor}         
\usepackage{natbib}         
\usepackage{makecell}       
\usepackage{multirow}       
\usepackage{multicol}
\usepackage{amsmath}        
\usepackage{graphicx}       
\usepackage{subcaption}  
\usepackage{color}
\usepackage[normalem]{ulem}
\usepackage{tikz}
\usepackage{colortbl}
\usepackage{xcolor}
\usepackage{xspace}
\usepackage{enumitem}
\usepackage{appendix}
\usepackage{mathtools}
\usepackage{wrapfig}
\usepackage{hyperref}
\usepackage{algorithm}
\usepackage{algpseudocode}
\usepackage{setspace}
\usepackage{varwidth}

\newtheorem{theorem}{Theorem}
\newtheorem{lemma}{Lemma}
\newtheorem{corollary}{Corollary}
\def\L{\bm{\mathcal{L}}}

\renewcommand{\arraystretch}{0.8}

\title{Polyline Path Masked Attention for Vision Transformer}

\author{
	\vspace{-25pt}\\
	\textbf{Zhongchen Zhao$^{1}$,
		\hspace{0.1cm}	Chaodong Xiao$^{2,3}$,
		\hspace{0.1cm}	Hui Lin$^{1}$,
		\hspace{0.1cm}	Qi Xie$^{1,}$\thanks{Corresponding author.},
		\hspace{0.1cm}	Lei Zhang$^{2,3}$,
		\hspace{0.1cm}	Deyu Meng$^{1,4}$} \\
	$^1$Xi'an Jiaotong University \hspace{0.1cm}  $^{2}$The Hong Kong Polytechnic University \hspace{0.1cm} $^{3}$OPPO Research \\ $^{4}$Pazhou Laboratory (Huangpu)
	Institute \\
	\texttt{zhongchenzhao@stu.xjtu.edu.cn, xie.qi@mail.xjtu.edu.cn}\\
	\vspace{-25pt}
}

\begin{document}
	
	\maketitle

	\begin{abstract}
		Global dependency modeling and spatial position modeling are two core issues of the foundational architecture design in current deep learning frameworks.
		Recently, Vision Transformers (ViTs) have achieved remarkable success in computer vision, leveraging the powerful global dependency modeling capability of the self-attention mechanism. 
		Furthermore, Mamba2 has demonstrated its significant potential in natural language processing tasks by explicitly modeling the spatial adjacency prior through the structured mask.
		In this paper, we propose \textbf{Polyline Path Masked Attention} (\textbf{PPMA}) that integrates the self-attention mechanism of ViTs with an enhanced structured mask of Mamba2, harnessing the complementary strengths of both architectures.
		Specifically, we first ameliorate the traditional structured mask of Mamba2 by introducing a 2D polyline path scanning strategy and derive its corresponding structured mask, polyline path mask, which better preserves the adjacency relationships among image tokens.
		Notably, we conduct a thorough theoretical analysis on the structural characteristics of the proposed polyline path mask and design an efficient algorithm for the computation of the polyline path mask.
		Next, we embed the polyline path mask into the self-attention mechanism of ViTs, enabling explicit modeling of spatial adjacency prior. 
		Extensive experiments on standard benchmarks, including image classification, object detection, and segmentation, demonstrate that our model outperforms previous state-of-the-art approaches based on both state-space models and Transformers. 
		For example, our proposed PPMA-T/S/B models achieve \textbf{48.7\%}/\textbf{51.1\%}/\textbf{52.3\%} mIoU on the ADE20K semantic segmentation task, surpassing RMT-T/S/B by 0.7\%/1.3\%/0.3\%, respectively.
		Code is available at \url{https://github.com/zhongchenzhao/PPMA}.
	\end{abstract}

	\section{Introduction}
	
	The research of foundational models has long been a cornerstone of deep learning.
	In computer vision, Convolutional Neural Networks (CNNs)~\cite{huang2017densely,resnet} and Vision Transformers (ViTs)~\cite{vaswani2017attention,vit,SwinTransformer,huang2020ccnet} currently represent the dominant architectures.
	Notably, ViTs have become the most mainstream architecture in large models through the powerful self-attention mechanism, which can capture the non-local self-similarity within global receptive fields.
	However, the quadratic complexity of the Transformer, when implementing self-attention, severely limits its application in large image processing models.
	Moreover, as shown in Fig.~\ref{fig:contribution} (b), classic positional encoding methods~\cite{vaswani2017attention,SwinTransformer,RoPE} in ViTs lack the explicit modeling capability of spatial distance between image tokens, and largely ignore the important spatial adjacency priors in texture, shape, semantics, and so on.
	This increases the learning pressure and limits its capability for fine-grained image feature extraction.
	
	Compared to CNNs and Transformers, the recently proposed Mamba~\cite{mamba} achieves linear complexity while maintaining global receptive fields, demonstrating strong potential as the next-generation architecture.
	Specifically, Mamba follows the State Space Models (SSMs) paradigm
    and employs the selective scan mechanism with the state transition matrix to recursively propagate dependencies among
    \begin{wrapfigure}{h}{0.618\linewidth}
    	\centering
    	\vspace{-0mm}
    	\includegraphics[width=0.618\textwidth]{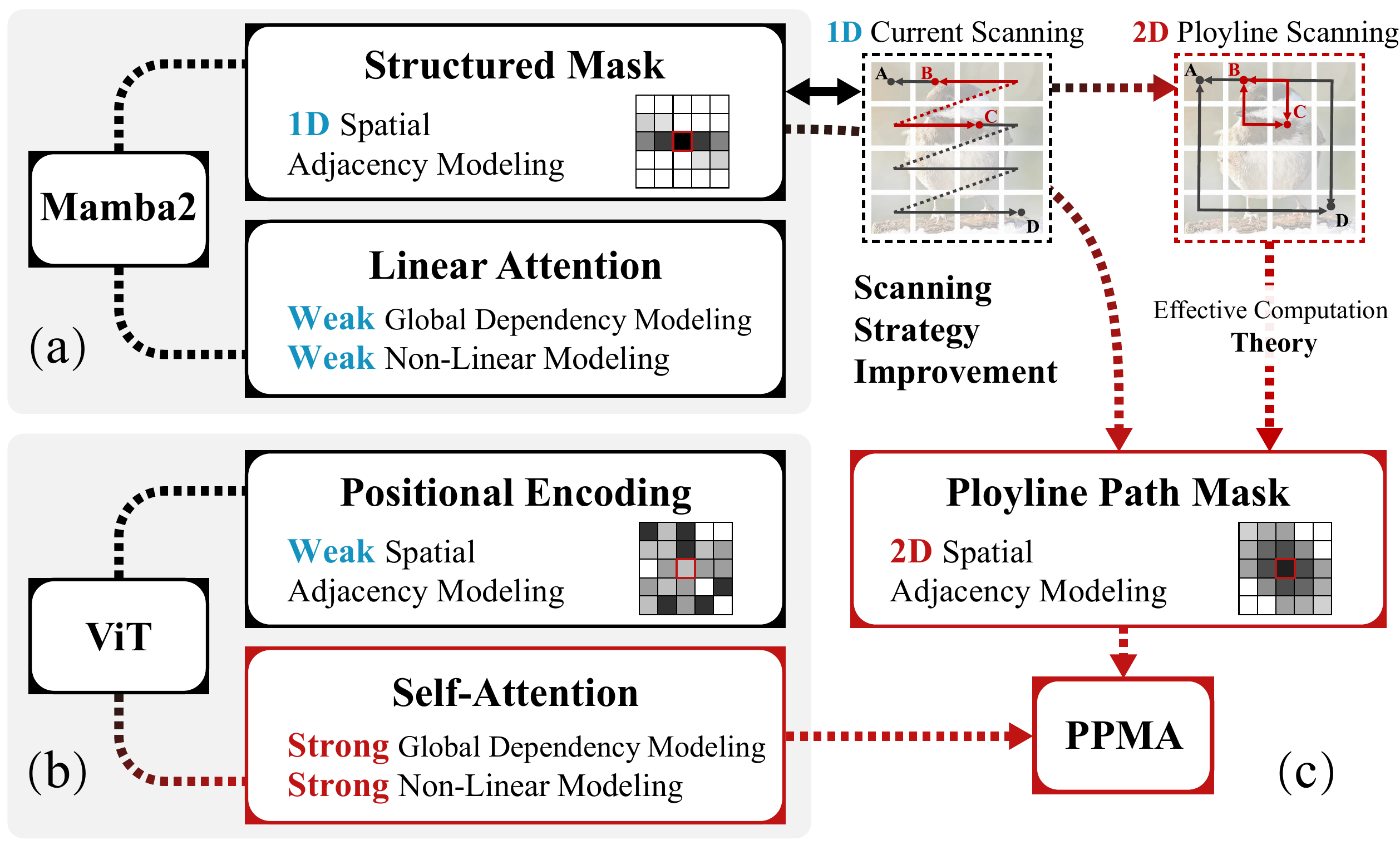}
    	\vspace{-6mm}
    	\caption{
    		(a)-(b) Illustration of the modules in Mamba2 and ViT. 
    		(c) Our method adapts the structured mask of Mamba2 to 2D scanning and integrates it with ViT's self-attention.
    	}
    	\label{fig:contribution}
    	\vspace{-4mm}
    \end{wrapfigure}
    tokens in a 
    sequence.
	Building on this foundation, Mamba2~\cite{mamba2} further refines the state transition matrix into a lightweight structured mask 
	and introduces a unified theoretical framework, 
	Structured State Space Duality (SSD), to bridge SSMs and attention variants.
	Under SSD, the core selective scan mechanism of Mamba2 can be reformulated as a form of structured masked attention, \textit{i.e.}, 
	a \textbf{Linear Attention}~\cite{katharopoulos2020transformers} element-wise multiplied by the \textbf{structured mask}, as 
	illustrated in Fig.~\ref{fig:contribution} (a).
	Notably, this structured mask explicitly encodes the sequence adjacency of tokens, enabling Mamba2 to match or surpass Transformers 
	across various natural language processing (NLP) tasks.
	Following its success in NLP, Mamba~\cite{mamba} has been rapidly adapted to various visual domains, including: 
	high-level tasks (classification, object detection, segmentation \cite{zhu2024ViM,Vmamba,GrootV,SpatialMamba}), 
	low-level tasks (super-resolution, denoising, deraining \cite{MambaIR,MambaIRv2,Freqmamba}), 
	image generation~\cite{DiM}, video analysis~\cite{Videomamba}, point cloud analysis~\cite{PointMamba,Voxelmamba}, and remote sensing images~\cite{Rs-mamba}. 
	
	Although Mamba~\cite{mamba} has demonstrated impressive results on certain vision tasks, 
	empirical results on high-level vision tasks demonstrate that even state-of-the-art (SOTA) Mamba-based backbones~\cite{Vmamba,GrootV,SpatialMamba,mlla} still underperform SOTA Transformer-based backbones~\cite{biformer,RMT} with a substantial performance gap. As shown in Fig.~\ref{fig:contribution} (a), this gap mainly stems from two issues: 
	(I) \textbf{1D Scanning Issue.} 
	Mamba's 1D scanning strategy arranges the tokens of a 2D image into a 1D sequence, which inevitably disrupts the inherent spatial adjacency within 2D images and limits the effectiveness of its recursive selective scanning mechanism.
	(II) \textbf{Weak Global Dependency Modeling Issue.} 
	The linear attention in Mamba2 omits the non-linear softmax layer, leading to a decrease in the precision and stability of global dependency modeling of images.
	
	In this paper, we present \textbf{Polyline Path Masked Attention} (\textbf{PPMA}), a brand-new method that effectively combines the advantages of ViTs and Mamba2.
	Specifically, as illustrated in Fig.~\ref{fig:contribution} (c), to address the 1D Scanning Issue when applying current Mamba to 2D images, we propose a novel 2D polyline path scanning strategy and derive an efficient calculation method for its corresponding structured mask, the polyline path mask.
	Then, we embed the polyline path mask as an explicit positional encoding into the ViT framework.
	This not only avoids the Weak Global Dependency Modeling Issue of Mamba2, but also alleviates the positional encoding issue of ViTs.
	As a result, our method fully leverages the powerful global context modeling capability of the self-attention mechanism in ViTs together with the explicit spatial adjacency modeling capability of the polyline path mask inspired by Mamba2, achieving SOTA performance on mainstream high-level vision tasks.
	
	To the best of our knowledge, this is the first work to integrate Mamba2's structured mask mechanism into ViTs.
	The main contributions of this study are summarized as follows:
	\vspace{-2mm}
	\begin{itemize}
		\item We propose a 2D polyline path scanning strategy for visual Mamba, which better preserves the inherent 2D spatial structure of images compared to existing scanning strategies.
		Building on this, we further derive a novel structured mask, termed polyline path mask, which is more suitable for 2D images than the traditional structured mask used in Mamba2.
		
		\item We conduct a comprehensive theoretical analysis for the proposed 2D polyline path mask. 
		Specifically, we theoretically prove that it can be decomposed into two 1D structured masks with clear physical meanings (i.e., horizontal and vertical scanning masks).
		More importantly, by leveraging this decomposability, we derive an efficient algorithm to reduce its computational complexity from $\mathcal{O}(N^2)$ in the naive calculation to $\mathcal{O}(N^{\frac{3}{2}})$.
		
		\item The polyline path mask can be seamlessly integrated into various attention variants in a plug-and-play manner without introducing a substantial increase in computational complexity. 
		In this paper, we incorporate it into the vanilla self-attention and criss-cross attention, deriving the Polyline Path Masked Attention (PPMA).
		
		\item Leveraging PPMA, we construct a hybrid Mamba2-Transformer model. 
		Experimental results demonstrate that our model achieves SOTA performance on standard benchmarks for image classification, object detection, and segmentation.
	\end{itemize}

	\section{Related Work}
	
	\textbf{Vision Transformers.}
	ViTs have become foundational in large-scale vision models such as SAM~\cite{SAM} and Sora~\cite{brooks2024video}, primarily due to their self-attention mechanism that effectively captures the long-range dependency.
	Moreover, the spatial structural information provided by positional encodings (e.g., APE~\cite{vaswani2017attention}, RPE~\cite{SwinTransformer}, and RoPE~\cite{RoPE}) is also crucial to ViTs.
	However, traditional positional encodings fail to explicitly encode spatial adjacency.
	Recent works, such as RMT~\cite{RMT} and VVT~\cite{sun2023vicinity}, porpose to incorporate RetNet’s input-independent temporal decay mask~\cite{retnet} into ViTs for more explicit spatial modeling based on the Manhattan distance. In comparison, Mamba2's input-dependent selective structured mask not only explicitly encodes the relative positional information in the spatial space but also captures the semantic continuity in the feature space.

	\textbf{Mamba.}
	As a state space model, Mamba introduces an input-dependent selection mechanism into the state transition matrix $\textbf{A}$, achieving Transformer-level performance with linear complexity on NLP tasks.
	Building on this foundation, Mamba2~\cite{mamba2} further simplifies the matrix $\textbf{A}$ to a scalar $a$, enabling more hardware-efficient parallelizable training without sacrificing performance. 
	Moreover, Mamba2~\cite{mamba2} demonstrates that its formulation is mathematically equivalent to a 1-semiseparable structured masked attention, and develops the State Space Duality (SSD) framework to connect structured SSMs and attention variants.
	Furthermore, Mamba2 points out that other potential structured masked attentions can also be integrated into the SSD framework.
	
	In this paper, we introduce a novel structured masked attention, termed polyline path masked attention, tailored for vision tasks.
	Different from the previous 2D selective SSM framework~\cite{2DMamba} based on Mamba~\cite{mamba}, our Mamba2-based polyline path mask is more lightweight and can be plugged seamlessly into various attention variants.
	Moreover, compared to MambaVision~\cite{MambaVision} which naively concatenates Mamba's blocks and ViT self-attention layers, our method more effectively harnesses complementary strengths of both architectures.

	\section{Preliminaries}
	\textbf{Mamba2's Recurrent Form.} 
	Mamba2~\cite{mamba2} initially adopts a recurrent form with linear complexity for sequence modeling.
	Specifically, Mamba2 employs the selective state space models to map the input sequence $\bm{x} \! \in \! \mathbb{R}^{N \! \times \! C}$ to the output sequence $\bm{y} \! \in \! \mathbb{R}^{N \! \times \! C}$, i.e., for $i \! = \! 1 \! : \! N$, 
	\begin{equation}\label{eq:ssm_Recurrent}
		{\bm{h}}_i = a_i {{\bm h}_{i-1}} + {{\bm B}_i^{\top}} {\bm{x}_i},  \qquad
		{{\bm y}_i} = {{\bm C}_i} {{\bm h}_i},
	\end{equation}
	where ${\bm{x}_i}, {{\bm y}_i} \! \in \! \mathbb{R}^{1 \! \times \! C}$, ${{\bm h}_i} \!\! \in \!\! \mathbb{R}^{D \! \times \! C}$ 
	denotes the hidden state, 
	$a_i \! \in \! \mathbb{R}$ and ${\bm B}_i, {\bm C}_i \! \in \! \mathbb{R}^{1 \! \times \! D}$ are input-dependent parameters 
	learned by multilayer 
	perceptron (MLP) layers, 
	the scalar $a_i$ serves as a decay factor bounded in $[0, 1]$,
	$N, C$ and $D$ denote the sequence length, channel number, and hidden state dimension, respectively.

	\textbf{Mamba2's Attention Form.} 
	Leveraging the SSD framework in Mamba2~\cite{mamba2}, the recurrent form of Mamba2 in Eq.~\eqref{eq:ssm_Recurrent} can be reformulated as its equivalent dual form, i.e., structured masked attention, by eliminating the hidden state $\bm{h_i}$ via substitution:
	\vspace{-2mm}
	\begin{equation}\label{eq:ssm_Attention}
		\bm{y} = \left( \bm{C} \bm{B^{\top}} \odot \bm{L}^{1D} \right) \bm{x}, \qquad
		\bm{L}^{1D}_{ij} = a_{i:j}=  \begin{cases}
			a_{i} \times \dots \times a_{j+1} & i > j \\
			1 & i = j \\
			0 & i < j
		\end{cases},
	\end{equation}
	where $\odot$ denotes the Hadamard (element-wise) product, $\bm{B}, \bm{C} \in \mathbb{R}^{N \times D}$, and the 1D structured mask ${\bm L}^{1D} \in \mathbb{R}^{N \times N}$ is a 1-semiseparable matrix which can be efficiently calculated with a complexity of $\mathcal{O}(N^2)$ by the chunkwise algorithm~\cite{mamba2}. 
	Mamba2's attention form (Eq.~\eqref{eq:ssm_Attention}) enables more efficient parallelizable training than its recurrent form (Eq.~\eqref{eq:ssm_Recurrent}).
	Notably, parameters $\bm{C} $ and $\bm{B}$ in Eq.~\eqref{eq:ssm_Attention}  are learned analogously to the query $\bm{Q}$ and key $\bm{K}$ in ViTs, respectively.
	Thus, Eq.~\eqref{eq:ssm_Attention} reveals that the selective state transition function in Mamba2 is equivalent to the Hadamard product  of a linear attention map $\bm{C}\bm{B^{\top}}$ and a 1D structured mask $\bm{L}^{1D}$.
	Here, the structured mask can be interpreted as a form of relative positional encoding~\cite{mamba2}. 
	
	\begin{figure}[t]
		\centering
		\includegraphics[width=0.95\textwidth]{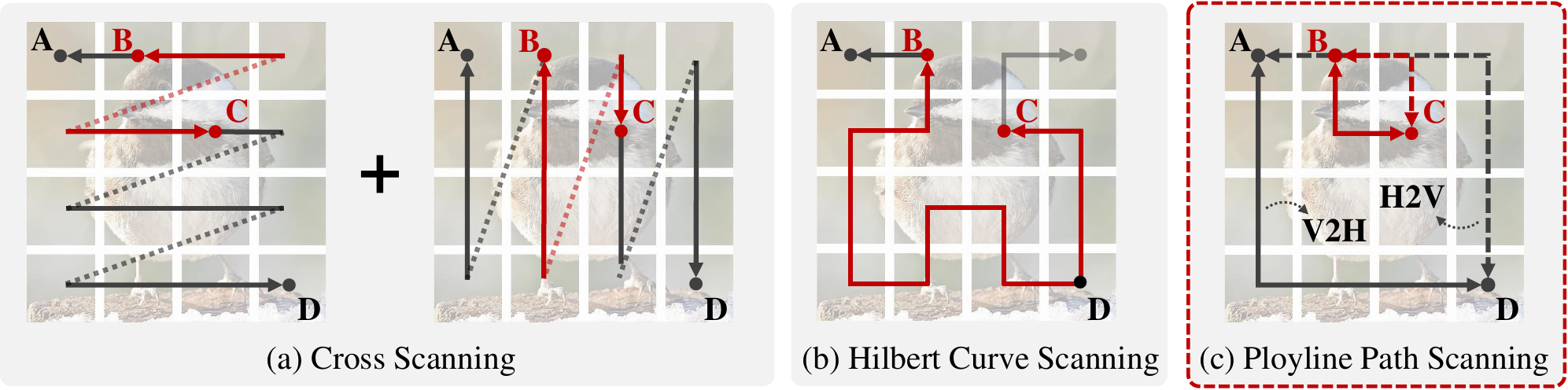}
		\caption{
			Compared to existing scanning strategies (a) and (b), which flatten 2D tokens into a 1D sequence, our polyline path scanning (c) better preserves the adjacency of 2D tokens.
		}
		\label{fig:scanning}
		\vspace{-5mm}
	\end{figure}
	
	In this work, we extend the structured masked attention in Mamba2 from 1D sequences to 2D images.
    Specifically, we extend the 1D structured mask $\bm{L}^{1D}$ to the 2D polyline path mask $\bm{L}^{2D}$ by introducing a novel 2D scanning strategy, and propose an efficient algorithm for computing and applying this polyline path mask $\bm{L}^{2D}$.
    The proposed $\bm{L}^{2D}$ can be substituted into Eq.~\eqref{eq:ssm_Attention} for replacing $\bm{L}^{1D}$ or adopted in ViTs as the explicit positional encoding.

	\section{Method}
	In this section, we introduce the idea of adapting the structured mask of Mamba2 to 2D scanning and integrating it into the self-attention mechanism of ViTs, achieving an explicit positional encoding.
	Specifically, we 
	1) introduce the definition of 2D polyline path mask in Sec.~\ref{PolylinePathMask};
	2) analyze the theoretical properties of the proposed polyline path mask and introduce an efficient algorithm for the proposed polyline path mask in Sec.~\ref{EfficienctComputation};
	3) apply the polyline path mask to standard self-attention and criss-cross attention of ViTs in Sec.~\ref{PolylinePathMaskedAttention}.
	
	\subsection{Definition of Polyline Path Mask}
	\label{PolylinePathMask}
	As a sequence autoregressive framework, visual Mamba starts from employing a scanning strategy to flatten a 2D image into a 1D  sequence of image tokens.  This scanning strategy plays an important role in Mamba’s performance, since the order of tokens is determined by it.
	As illustrated in Fig.~\ref{fig:scanning}, previous works~\cite{zhu2024ViM,Vmamba,PointMamba} have proposed various scanning strategies for visual Mamba.
	However, these strategies fail to fully preserve the inherent spatial adjacency of 2D tokens.
	For example, as shown in Fig.~\ref{fig:scanning} (a) and (b), for two tokens $\bm B$ and $\bm C$ which are close in an image, previous scanning strategies~\cite{Vmamba, PointMamba} may cause them to be significantly farther apart in the 1D scanning path.

	 \begin{wrapfigure}{r}{0.3\textwidth}
	 	\centering
	 	\vspace{-4mm}
	 	\includegraphics[width=\linewidth]{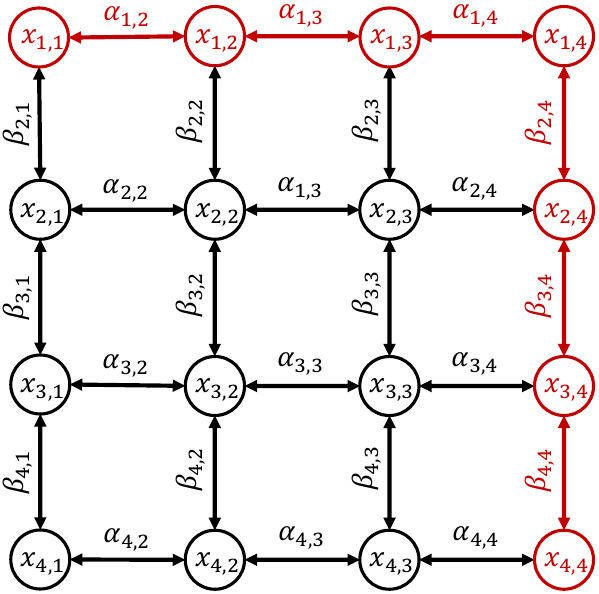}
	 	\vspace{-5mm}
	 	\caption{An intuitive example illustrating the polyline path mask on a $4 \! \times \! 4$ grid.}
	 	\label{fig:PolylinePathMask}
	 	\vspace{-6mm}
	 \end{wrapfigure}

	\textbf{Polyline Path Scanning.} 
	To address this limitation, we design a 2D polyline path scanning strategy.
	Specifically, for each token pair $(\bm{x}_{i,j}, \bm{x}_{k,l})$ in the 2D grid, we define their scanning path as the L-shaped polyline connecting them, as shown in Fig.~\ref{fig:scanning} (c).
	To ensure symmetry in mutual distances, we set two bidirectional polyline paths: vertical-then-horizontal path (V2H solid lines in Fig.~\ref{fig:scanning} (c)) and horizontal-then-vertical path (H2V dotted lines in Fig.~\ref{fig:scanning} (c)), and use their combination as the final scanning path. In this way, the adjacency relationship of 2D tokens can be strictly maintained under the Manhattan distance\footnote{The Manhattan distance between two points ${\bm x}_{i,j}$ and ${\bm x}_{k,l}$ in a 2D plane is $\left| {i - k} \right| + \left| {j - l} \right|$.}.
	Intuitively speaking, tokens close (or far) to each other will be in close (or far) distance on the scanning path, and vice versa.
	As the example shown in Fig.~\ref{fig:scanning}, polyline scanning strategy better preserves the distance between token $\bm B$ and 
	$\bm C$ compared to the other two strategies. 
	An more intuitive example is shown in Fig.~\ref{fig:appendix_scanning}.
	
	\textbf{Definition of Polyline Path Mask.} 
	Based on the proposed polyline path scanning strategy, we introduce the polyline path mask.
	As an example shown in Fig.~\ref{fig:PolylinePathMask},  
	we define the horizontal and vertical decay factors of each input token ${\bm{x}_{i,j}}$ as $\alpha_{i,j}$ and $\beta_{i,j}$, respectively.
	In this paper,  we employ two MLP layers to learn $\alpha_{i,j}$ and $\beta_{i,j}$, respectively.\footnote{We apply the ReLU and exponential operator after the MLP layers to ensure $\alpha_{i,j}, \beta_{i,j}\in [0,1]$. } 
	Then, the decay weight of V2H polyline path from ${\bm{x}_{k, l}}$ to ${\bm{x}_{i,j}}$ is defined as 
	${{\bm{\mathcal{L}}}_{i,j,k,l}}$ , which is the product of all decay factors along that path, i.e., 
	\begin{equation}\label{eq:PolylinePathMaskDefinition2}
		\small
		{\bm{\mathcal{L}}}_{i,j,k,l} = {\alpha _{i,j:l}}{\beta _{i:k,l}}, ~\mbox{where}~ {\alpha_{i,j:l}}\! =\! \begin{cases}
			{ \prod_{n=j+1}^l\alpha_{i, n}  } & {j < l}\\
			~~~~~1 & {j = l}\\
			{ \prod_{n=l+1}^j\alpha_{i, n} } & {j > l}
		\end{cases}, ~
		{\beta_{i:k,l}} \!=\! \begin{cases}
			{ \prod_{n=i+1}^k \beta_{n, l} } & {i < k}\\
			~~~~1 & {i = k}\\
			{ \prod_{n=k+1}^i\beta_{n, l} } & {i > k}
		\end{cases}.
	\end{equation}
	For example, as illustrated in Fig.~\ref{fig:PolylinePathMask}, the V2H polyline path's decay weight from token $\bm{x}_{4,4}$ to 
	$\bm{x}_{1,1}$ is ${{\bm{\mathcal{L}}}}_{1,1,4,4} \! = \! {\alpha _{1,2}}{\alpha _{1,3}}{\alpha _{1,4}}{\beta _{2,4}}{\beta 
	_{3,4}}{\beta _{4,4}}$.
	Similarly, the decay weight along the H2V polyline path is defined as $\tilde{\L}_{i,j,k,l} \! = \! \alpha_{k, j:l}\beta_{i:k, l}$. 
	Due to the spatial symmetry, it is evident that $\tilde{\L}_{i,j,k,l} \! = \! \L_{k,l, i,j}$. 
	By combining the V2H and H2V polyline paths, the final decay weight is 
	\begin{equation}
		{\bm{\mathcal{L}}}^{2D} \! =  \! {{{\bm{\mathcal{L}}}}  \! + \!  {\tilde{\L}}}.
	\end{equation}
	Note that $\L$, $\tilde{\L}$ and $\L^{2D}$ are all 4D tensors of size $\mathbb{R}^{H \! \times \! W \! \times  \! H \! \times \! W}$, where $H$ and $W$ are the height and width of the feature map, respectively.
	The polyline path mask, a 2D matrix $\bm{L}^{2D}  \! \in \!  \mathbb{R}^{HW \! \times \! HW}$, can be obtained by unfolding the decay weight tensor, i.e., for all $i$, $j$, $k$, and $l$,
	\begin{equation}\label{unfold}
		{\left[\bm{L}^{2D}\right]_{(i-1)\! \times \! W+j, (k-1)\! \times \! W+l}} = \L^{2D}_{i,j,k,l}. 
	\end{equation}
	For simplicity, we denote the above tensor-to-matrix unfolding operation as $ {\bm{L}^{2D}}\! = \!\mathrm{unfold}(\L^{2D})$, and its inverse operation as $\L^{2D}\! = \!\mathrm{fold}({\bm{L}^{2D}})$ in the following sections.
	More details can be found in Appendix~\ref{sec:appendix_Denifinition}.
	
	\subsection{Efficient Computation Theory of Polyline Path Mask}
	\label{EfficienctComputation}
	
	According to the definition~\eqref{eq:PolylinePathMaskDefinition2}, the direct approach to compute the polyline path mask ${\bm{{L}}}^{2D}$ is to calculate each element individually.
	However, the large size of the mask and numerous multiplications for each element lead to a high computational cost in both calculating and applying ${\bm{{L}}}^{2D}$.
	To address this issue, we present a decomposition theorem for matrices structured as ${\bm{{L}}^{2D}}$. 
	Based on this, we further design an efficient algorithm for performing multiplication on ${\bm{{L}}^{2D}}$.
	For simplicity, we focus our theoretical study on ${\bm L}$, which is similar to the case of ${\bm{{L}}}^{2D}$.
	Complete proofs of the theorems are provided in Appendix~\ref{sec:appendix_Theorem} and \ref{sec:appendix_Complexity}.  
	
	\begin{theorem}[Matrix Decomposition] \label{theorem:Decomposability}
		For any matrix ${\bm M} \in \! {\mathbb{R}^{HW \! \times \! HW}}$ and ${\bm{\mathcal{M}}} = {\rm{fold}}\left( {\bm{M}} \right) 
		$,  if  for $\forall i,j,k,l$, $\exists  {{\bm{A}^i}}  \! \in \! \mathbb{R}^{W \! \times \! W}~\mbox{and}~ {{\bm{B}^l}} \! \in 
		\! \mathbb{R}^{H \! \times \! H}$, s.t., 
		${{\bm{\mathcal{M}}}_{i,j,k,l}} = {\left[ {{{\bm{A}^i}}} \right]_{j,l}} \times {\left[ {{\bm{B}^l}} \right]_{i,k}}$, 
		then $\bm M$ can be decomposed as:
		\begin{equation}\label{eq:Decomposability}
			{{{ {\bm{M}}}}} = {\bm{M}}^{A} \times {\bm{M}}^{B} = {{\bm{\hat M}}^A} \odot {{\bm{\hat M}}^B},
		\end{equation}
		where ${\bm M}^A, {\bm M}^B, {\bm {\hat M}}^A, {\bm {\hat M}}^B \! \in \! {\mathbb{R}^{HW \! \times \! HW}}$, 
		which satisfy
		\begin{equation}\label{eq:Decomposability3}
			{{\bm M}^A} \! = \! {\rm{unfold}}( {{{\bm{\mathcal{M}}}^A}} ), 
			{{\bm M}^B} \!\! = \! {\rm{unfold}} ( {{{\bm{\mathcal{M}}}^B}} ), \! ~\mbox{s.t.},~\!
			{\bm{\mathcal{M}}}_{{i,:,k,:}}^A \!\! = \! \begin{cases}
				{{\bm{A}^i}}  \!\!&  \!{k \!= \! i}  \\
				{0}           \!\!&  \!{k \!\ne \! i}
			\end{cases}\!,  \,
			{\bm{\mathcal{M}}}_{{:,j,:,l}}^B \!\! = \! \begin{cases}
				{{\bm{B}^l}}  \!\!&  \!{j \!= \! l}\\
				{0}           \!\!&  \!{j \!\ne \! l}
			\end{cases}\!,
		\end{equation}
		\begin{equation}\label{eq:HadamardDecomposion2}
			{{\bm{\hat M}}^A} \! = \! {\rm{unfold}}( {{{\bm{\mathcal{\hat M}}}^A}} ), \;
			{{\bm{\hat M}}^B} \!\! = \! {\rm{unfold}} ( {{{\bm{\mathcal{\hat M}}}^B}} ),  ~~\mbox{s.t.},~~
			{\bm{\mathcal{\hat M}}}_{{i,:,k,:}}^A  \! = \! {{\bm{A}^i}}, \;
			{\bm{\mathcal{\hat M}}}_{{:,j,:,l}}^B \! = \! {{\bm{B}^l}}.
		\end{equation}
	\end{theorem}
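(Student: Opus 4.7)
The plan is to verify both equalities in Eq.~\eqref{eq:Decomposability} entrywise, by fixing an arbitrary index quadruple $(i,j,k,l)$ and comparing the $((i-1)W+j,\,(k-1)W+l)$-th entry of $\bm{M}$ against the corresponding entries of $\bm{M}^A \bm{M}^B$ and $\bm{\hat M}^A \odot \bm{\hat M}^B$. The hypothesis directly gives $\bm{\mathcal{M}}_{i,j,k,l} = [\bm{A}^i]_{j,l} \cdot [\bm{B}^l]_{i,k}$, so after this unification the task reduces to checking that both right-hand sides reproduce this same factored expression using only the definitions in Eqs.~\eqref{eq:Decomposability3} and~\eqref{eq:HadamardDecomposion2} and the unfolding rule~\eqref{unfold}.

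For the multiplicative identity $\bm{M} = \bm{M}^A \bm{M}^B$, I would expand the matrix product by parametrising the flattened inner index as $(m-1)W+n$ with $m \in \{1,\dots,H\}$, $n \in \{1,\dots,W\}$, which yields
\begin{equation*}
    [\bm{M}^A \bm{M}^B]_{(i-1)W+j,\,(k-1)W+l} = \sum_{m=1}^{H}\sum_{n=1}^{W} \bm{\mathcal{M}}^A_{i,j,m,n}\,\bm{\mathcal{M}}^B_{m,n,k,l}.
\end{equation*}
By Eq.~\eqref{eq:Decomposability3}, the factor $\bm{\mathcal{M}}^A_{i,j,m,n}$ is supported only on $m=i$, where it equals $[\bm{A}^i]_{j,n}$, while $\bm{\mathcal{M}}^B_{m,n,k,l}$ is supported only on $n=l$, where it equals $[\bm{B}^l]_{m,k}$. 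These two complementary sparsity patterns intersect at the unique pair $(m,n)=(i,l)$, collapsing the double sum to the single term $[\bm{A}^i]_{j,l}\,[\bm{B}^l]_{i,k}$, which matches $\bm{\mathcal{M}}_{i,j,k,l}$ by hypothesis.

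The Hadamard identity $\bm{M} = \bm{\hat M}^A \odot \bm{\hat M}^B$ is even more immediate: Eq.~\eqref{eq:HadamardDecomposion2} imposes no sparsity, so $[\bm{\hat M}^A]_{(i-1)W+j,\,(k-1)W+l} = [\bm{A}^i]_{j,l}$ and $[\bm{\hat M}^B]_{(i-1)W+j,\,(k-1)W+l} = [\bm{B}^l]_{i,k}$, whose pointwise product recovers the same factorisation without any summation. The entire argument is therefore pure index bookkeeping; the only delicate step is the collapse of the double sum in the middle paragraph, where I must argue cleanly that the block-diagonal sparsity of $\bm{\mathcal{M}}^A$ in the $(i,k)$-pair and of $\bm{\mathcal{M}}^B$ in the $(j,l)$-pair are precisely the two restrictions needed to single out the surviving term. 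I anticipate no conceptual obstacle, but I would verify the row/column convention of $\mathrm{unfold}$ in Eq.~\eqref{unfold} carefully to avoid an accidental transpose that would swap $\bm{A}^i$ and $\bm{B}^l$ in the final expression.
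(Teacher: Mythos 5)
Your proposal is correct and matches the paper's own proof essentially step for step: both arguments verify the two identities entrywise, collapse the double sum $\sum_{m}\sum_{n}\bm{\mathcal{M}}^A_{i,j,m,n}\bm{\mathcal{M}}^B_{m,n,k,l}$ to the single surviving term $(m,n)=(i,l)$ using the complementary sparsity patterns in Eq.~\eqref{eq:Decomposability3}, and handle the Hadamard identity by direct pointwise multiplication. No meaningful difference in approach.
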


	As defined in Eq.~\eqref{eq:PolylinePathMaskDefinition2}, the polyline path mask ${\bm L}$ satisfies the conditions in 
	Theorem~\ref{theorem:Decomposability} with ${[{{{\bm{A}^i}}} ]_{j,l}} \!\! = \!\! {\alpha_{i,j:l}}$ and ${[{{\bm{B}^l}}]_{i,k}} 
	\!\! = \!\! {\beta_{i:k,l}}$.  
	Thus, based on Theorem~\ref{theorem:Decomposability}, the polyline path mask ${\bm L}$ can be decomposed as ${\bm L} \! = \! {\bm 
	L}^H \! \times \! {\bm L}^V \! = \! {{\bm{\hat L}}^H} \odot {{\bm{\hat L}}^V}$.
	Moreover, for the complexity of computing $\bm L$, we have:
	
	\begin{corollary} [Mask Complexity] \label{corollary:MaskComplexity}
		The complexity of directly computing polyline path mask $\bm L$ with Eq.\eqref{eq:PolylinePathMaskDefinition2} and \eqref{unfold} is $\mathcal{O}(N^{\frac{5}{2}})$, which can be reduced to $\mathcal{O}(N^{2})$ by applying Theorem~\ref{theorem:Decomposability}, where $N \! = \! H \! \times \! W$.
	\end{corollary}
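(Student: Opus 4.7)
The plan is to split the corollary into two independent bounds: an $\mathcal{O}(N^{5/2})$ upper bound on the naive per-entry computation, and an $\mathcal{O}(N^2)$ algorithm that realizes the Hadamard decomposition of Theorem~\ref{theorem:Decomposability}. Throughout I would adopt the standard setting $H,W = \Theta(\sqrt{N})$ so that $H+W = \Theta(\sqrt{N})$.

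For the first bound, I would simply aggregate per-entry costs. By Eq.~\eqref{eq:PolylinePathMaskDefinition2}, the scalar $\L_{i,j,k,l} = \alpha_{i,j:l}\,\beta_{i:k,l}$ is a product of at most $|l-j| + |k-i| \le (W-1)+(H-1)$ decay factors, so computing it costs $\mathcal{O}(H+W) = \mathcal{O}(\sqrt{N})$ multiplications. Since the unfolding in Eq.~\eqref{unfold} is just a bijective reindexing of the $N^2$ entries of $\bm L$, summing over all entries gives a total cost of $\mathcal{O}(N^2 \cdot \sqrt{N}) = \mathcal{O}(N^{5/2})$, which matches the claim and is tight.

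For the second bound, I would apply Theorem~\ref{theorem:Decomposability} with $[\bm A^i]_{j,l} = \alpha_{i,j:l}$ and $[\bm B^l]_{i,k} = \beta_{i:k,l}$, obtaining $\bm L = \hat{\bm L}^H \odot \hat{\bm L}^V$ from Eq.~\eqref{eq:HadamardDecomposion2}. The acceleration rests on a single observation: for each row $i$, the entries of $\bm A^i$ share the sequence $(\alpha_{i,1},\dots,\alpha_{i,W})$, so an incremental update $[\bm A^i]_{j,l+1} = [\bm A^i]_{j,l}\cdot \alpha_{i,l+1}$ (and analogously for $l<j$) fills each $W\!\times\!W$ block in $\mathcal{O}(W^2)$ time, yielding $\mathcal{O}(HW^2) = \mathcal{O}(N^{3/2})$ across all $i$. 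Symmetrically, all $\bm B^l$ are obtained in $\mathcal{O}(WH^2)=\mathcal{O}(N^{3/2})$. A final pass over the $N^2$ entries of $\bm L$, each computed in $\mathcal{O}(1)$ as $[\bm A^i]_{j,l}\cdot [\bm B^l]_{i,k}$, costs $\mathcal{O}(N^2)$. Combining the preprocessing and assembly gives $\mathcal{O}(N^{3/2} + N^2) = \mathcal{O}(N^2)$, and the identical argument for $\tilde{\L}$ and hence for $\L^{2D} = \L + \tilde{\L}$ preserves the bound.

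The main obstacle is conceptual rather than technical: one must resist the temptation to instantiate the factors $\bm M^A$ and $\bm M^B$ of the \emph{multiplicative} form in Eq.~\eqref{eq:Decomposability3} and then invoke a matrix product, since even exploiting their block-diagonal sparsity a direct implementation of $\bm L = \bm L^A \times \bm L^B$ can easily re-introduce a $\sqrt{N}$ factor and land at $\mathcal{O}(N^{5/2})$ again. The Hadamard form in Eq.~\eqref{eq:HadamardDecomposion2} sidesteps this pitfall entirely, so the proof reduces to a careful application of Theorem~\ref{theorem:Decomposability} together with the prefix-style incremental computation of the decay products along rows and columns.
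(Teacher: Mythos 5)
Your proposal is correct and follows essentially the same route as the paper's own argument in Appendix~\ref{sec:appendix_Complexity}: the naive bound is the identical count of $N^2$ entries at $\mathcal{O}(\sqrt{N})$ multiplications each, and the efficient bound likewise rests on the decomposition of Theorem~\ref{theorem:Decomposability} so that every entry of $\bm L$ becomes a product of two precomputed scalars $[\bm{A}^i]_{j,l}$ and $[\bm{B}^l]_{i,k}$. Your only refinement is building the blocks $\bm{A}^i,\bm{B}^l$ by incremental prefix products in $\mathcal{O}(N^{3/2})$ rather than the paper's $\mathcal{O}(\sqrt{N})$-per-nonzero count (which already yields $\mathcal{O}(N^2)$), and your warning about the multiplicative form in Eq.~\eqref{eq:Decomposability3} is overly cautious, since the block-sparse product also costs only $\mathcal{O}(N^2)$; neither point changes the approach or the final bound.
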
	
	
	For matrices in the form of Eq.~\eqref{eq:Decomposability3}, when performing multiplication operations, we have:

	\begin{theorem}[Efficient Matrix Multiplication]\label{theorem:EquivalentMatrixMultiplication}
		For matrices ${\bm{M}}^{A}, {\bm{M}}^{B}$ defined in Eq.~\eqref{eq:Decomposability3}, 
		$\forall \bm{x} \! \in \! \mathbb{R}^{HW}$, the following equation holds:
		\begin{equation}\label{eq:y_Lx3}
			\bm{y} = {{\bm{M}}^{A}} \! \times \! {{\bm{M}}^{B}} \! \times \! \bm{x} 
			\quad  \Leftrightarrow \quad 
			{\bm{Z}_{:,l}} = {\bm{B}^l}\! \times \!{\bm{X}_{:,l}},\; {\bm{Y}_{i,:}} = {\bm{A}^i}\! \times \!{\bm{Z}_{i,:}},
		\end{equation}
		where $\bm{y} \! \in \! \mathbb{R}^{HW}$, $\bm{X} \! = \!{\mathrm{unvec}}(\bm{x}) \! \in \! \mathbb{R}^{H \! \times \! W}$, $\bm{Y} \!= \!{\mathrm{unvec}}(\bm{y}) \! \in \! \mathbb{R}^{H \! \times \! W}$, $\bm{Z} \! \in \! \mathbb{R}^{H \! \times \! W}$,
		and the operator ${\mathrm{vec}}({\cdot})$ vectorizes a matrix by stacking its columns and ${\mathrm{unvec}}({\cdot})$ is its inverse operator.
	\end{theorem}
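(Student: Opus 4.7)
The plan is to perform an index-chase that reads off the sparsity imposed by the Kronecker-delta structure in~\eqref{eq:Decomposability3} and verifies that $\bm{M}^A \bm{M}^B$ factors into two independent linear sweeps. First I would unpack~\eqref{eq:Decomposability3} entrywise: because $\bm{\mathcal{M}}^B_{:,j,:,l}$ vanishes unless $j = l$, unfolding gives $[\bm{M}^B]_{(i-1)W+j,(k-1)W+l} = [\bm{B}^l]_{i,k}\,\delta_{jl}$, and analogously $[\bm{M}^A]_{(i-1)W+j,(k-1)W+l} = [\bm{A}^i]_{j,l}\,\delta_{ik}$. In words, up to the row-major permutation, $\bm{M}^B$ is block-diagonal acting only within fixed columns of the reshape $\bm{X}$, and $\bm{M}^A$ is block-diagonal acting only within fixed rows.

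Next I would introduce the intermediate vector $\bm{z} = \bm{M}^B \bm{x}$ together with its reshape $\bm{Z} = \mathrm{unvec}(\bm{z})$. Summing against the sparsity pattern above collapses the $l$-sum via $\delta_{jl}$ and yields $Z_{i,j} = \sum_{k}[\bm{B}^j]_{i,k} X_{k,j}$, i.e., the column-wise identity $\bm{Z}_{:,l} = \bm{B}^l \bm{X}_{:,l}$. Applying $\bm{M}^A$ to $\bm{z}$ is symmetric: the $\delta_{ik}$ collapses the $k$-sum and gives the row-wise identity $\bm{Y}_{i,:} = \bm{A}^i \bm{Z}_{i,:}$. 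Associativity of matrix multiplication then yields $\bm{y} = \bm{M}^A(\bm{M}^B \bm{x}) = (\bm{M}^A\bm{M}^B)\bm{x}$, establishing the forward direction. The converse is immediate, since the two sweeps uniquely determine $\bm{Y}$ from $\bm{X}$ and hence $\bm{y}$ from $\bm{x}$.

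The main obstacle is not analytical but notational: I must fix a single linearization convention (the row-major one implicit in $\mathrm{unfold}$, namely $(i,j)\mapsto (i-1)W+j$, reused consistently for $\mathrm{unvec}$) and carefully track which indices are free versus summed out, so that the two Kronecker deltas are paired with the correct dummy variables. Once the bookkeeping is straight, both directions reduce to the same two delta-collapses, and no identity beyond the sparsity pattern granted by Theorem~\ref{theorem:Decomposability} is required. A small cosmetic step at the end is to note that this two-sweep form is exactly what gives the complexity reduction advertised in the surrounding text, since each sweep costs $\mathcal{O}(HW^2 + WH^2) = \mathcal{O}(N^{3/2})$ for $N = HW$ with $H \approx W$.
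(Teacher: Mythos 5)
Your proposal is correct and follows essentially the same route as the paper's proof: introduce the intermediate $\bm{z}=\bm{M}^B\bm{x}$ with reshape $\bm{Z}$, and collapse the double sums using the sparsity of $\bm{\mathcal{M}}^A$ and $\bm{\mathcal{M}}^B$ (the paper writes out the vanishing off-diagonal terms explicitly where you use Kronecker deltas, but the computation is identical). Your remark about fixing the row-major linearization consistently is a fair point of care, and matches the convention the paper's own index-chase implicitly adopts.
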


	\begin{figure}[!t]
		\centering
		\includegraphics[width=0.92\textwidth]{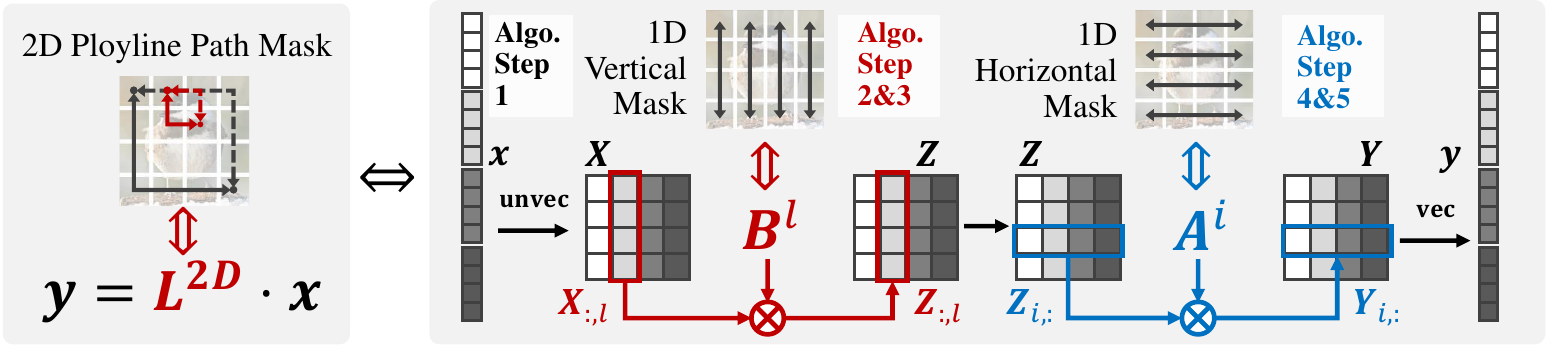}
		\vspace{-1.5mm}
		\caption{Illustration of the efficient algorithm for utilizing the proposed polyline path mask. Left: Naive computation of matrix multiplication. Right: An intuitive illustration of Algorithm~\ref{algorithm: EfficientComputation}.}
		\label{fig:EfficientMatrixMultiplication}
		\vspace{-5mm}
	\end{figure}

	\begin{wrapfigure}{r}{0.58\columnwidth}
		\centering
		\vspace{-4mm}
		\begin{varwidth}{0.58\columnwidth} 
			\begin{algorithm}[H]
				\renewcommand{\algorithmicrequire}{\textbf{Input:}}
				\renewcommand{\algorithmicensure}{\textbf{Output:}}
				\caption{Efficient Masked Attention Computation.}
				\label{algorithm: EfficientComputation}
				\begin{spacing}{1.1}
					\small
					\begin{algorithmic}[1]
						\Require decay factors ${\alpha}, {\beta}$ of ${\bm L}$,
						vector $\bm{x} \! \in \! \mathbb{R}^{HW}$;
						\State Compute $\bm{X} \! = \! {\mathrm{unvec}}(\bm{x}) \! \in \! \mathbb{R}^{H\! \times \! W}$;
						\State Compute ${\bm{B}^l} \! \in \! \mathbb{R}^{H \! \times \! H}$, where for $l \! = \! 1 \! : \! W, 
						{[{{\bm{B}^l}} ]_{i,k}} \!\! = \!\! {\beta_{i:k,l}}$;
						\State Compute ${\bm Z} \! \in \! \mathbb{R}^{H \! \times \! W}$, where $\bm{Z}_{:,l} = {\bm{B}^l}\! \times 
						\!\bm{X}_{:,l}$;
						\State Compute ${\bm{A}^i} \! \in \! \mathbb{R}^{W \! \times \! W}$, where for $i \! = \! 1 \! : \! H, 
						{[{{\bm{A}^i}}]_{j,l}} \!\! = \!\! {\alpha _{i,j:l}}$;
						\State Compute ${\bm Y} \! \in \! \mathbb{R}^{H\! \times \! W}$, where $\bm{Y}_{i,:} = {\bm{A}^i}\! \times 
						\!\bm{Z}_{i,:}$;
						\Ensure ${\bm y} \! = \! \mathrm{vec}(\bm Y)$; 
					\end{algorithmic}
				\end{spacing}
			\end{algorithm}
		\end{varwidth}
		\vspace{-4mm}
	\end{wrapfigure}

	Based on Theorem~\ref{theorem:EquivalentMatrixMultiplication}, we can design Algorithm~\ref{algorithm: EfficientComputation} for computing the matrix multiplication between polyline path mask $\bm L$ and the vector $\bm x$.
	Note that the involved matrices  ${\bm{A}^i}$ and ${\bm{B}^l}$ are symmetric matrices with lower triangular parts being 
	1-semiseparable, as defined in Mamba2~\cite{mamba2}.
	This will lead to a substantial reduction in complexity as stated in the following corollary.

\begin{corollary} [Masked Attention Complexity] \label{corollary:MaskedAttentionComplexity}
	The computational complexity  of the matrix multiplication between polyline path mask and vector $\bm x$, i.e., ${\bm y} \! = \! {\bm L} {\bm x}$, can be reduced from $\mathcal{O}(N^2)$ to $\mathcal{O}(N^{\frac{3}{2}})$ by Algorithm~\ref{algorithm: EfficientComputation}, and further reduced to $\mathcal{O}(N)$ by applying the chunkwise algorithm of Mamba2~\cite{mamba2} to steps 3 and 5 in Algorithm~\ref{algorithm: EfficientComputation}.
\end{corollary}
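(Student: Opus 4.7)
The plan is to trace through Algorithm~\ref{algorithm: EfficientComputation} line by line, account for each step's cost in terms of $H$ and $W$ with $N=HW$, and then identify where Mamba2's chunkwise algorithm can substitute the dense matrix--vector products. The baseline $\mathcal{O}(N^2)$ cost is immediate, since $\bm{L} \in \mathbb{R}^{HW \times HW}$ contains $N^2$ entries and a naive matrix--vector multiplication by $\bm{x}$ costs $\Theta(N^2)$.

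For the $\mathcal{O}(N^{\frac{3}{2}})$ bound, I would first observe that, for each fixed $l$, all entries of $\bm{B}^l \in \mathbb{R}^{H \times H}$ arise as cumulative products of $\{\beta_{n,l}\}_{n=1}^{H}$, so they can be precomputed in $\mathcal{O}(H^2)$ time per $l$, totaling $\mathcal{O}(WH^2)$ for step 2. Step 3 then performs $W$ dense matrix--vector products of size $H \times H$, contributing another $\mathcal{O}(WH^2)$. Steps 4 and 5 are perfectly symmetric and each cost $\mathcal{O}(HW^2)$. Summing yields $\mathcal{O}(HW(H + W)) = \mathcal{O}(N(H + W))$, which is $\mathcal{O}(N^{\frac{3}{2}})$ under the standard assumption $H, W = \Theta(\sqrt{N})$. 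Together with Theorem~\ref{theorem:EquivalentMatrixMultiplication}, this establishes the first half of the corollary.

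For the $\mathcal{O}(N)$ refinement, the key observation noted after the algorithm statement is that each $\bm{B}^l$ and each $\bm{A}^i$ is symmetric, with each triangular half being 1-semiseparable in the sense of Mamba2. Hence each product $\bm{B}^l \bm{X}_{:,l}$ in step 3 can be split into a lower- and an upper-triangular 1-semiseparable matrix--vector product; Mamba2's chunkwise algorithm handles each summand in $\mathcal{O}(H)$ time, so summing over $l = 1\!:\!W$ gives $\mathcal{O}(WH) = \mathcal{O}(N)$. The analogous argument for step 5 yields $\mathcal{O}(HW) = \mathcal{O}(N)$. Moreover, the chunkwise algorithm never materializes $\bm{B}^l$ or $\bm{A}^i$ explicitly, so steps 2 and 4 can be absorbed into steps 3 and 5, and the overall complexity stays at $\mathcal{O}(N)$.

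The main obstacle I expect is verifying that Mamba2's chunkwise routine, originally stated in Eq.~\eqref{eq:ssm_Attention} for a single lower-triangular 1-semiseparable mask acting on a sequence, genuinely applies here. Two points need care: (i) our $\bm{B}^l$ and $\bm{A}^i$ are symmetric rather than strictly lower-triangular, which is handled by splitting each into its lower- and upper-triangular 1-semiseparable summands and invoking the chunkwise routine once per summand; and (ii) the decay factors feeding each invocation must be read off in the correct order from the column $\{\beta_{n,l}\}_n$ or the row $\{\alpha_{i,n}\}_n$, which is bookkeeping rather than a new algorithmic ingredient. Once these points are confirmed, the stated complexity bounds follow by direct accounting.
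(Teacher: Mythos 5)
Your proposal is correct and follows essentially the same route as the paper's own argument: costing steps 3 and 5 of Algorithm~\ref{algorithm: EfficientComputation} at $\mathcal{O}(H^2W)$ and $\mathcal{O}(HW^2)$ respectively to get $\mathcal{O}(N^{\frac{3}{2}})$, then invoking the 1-semiseparable structure of the triangular halves of $\bm{B}^l$ and $\bm{A}^i$ together with Mamba2's chunkwise routine (the paper's Lemma on 1-semiseparable matrix--vector products) to reach $\mathcal{O}(N)$. Your explicit handling of the symmetric-versus-lower-triangular issue by splitting each matrix into two 1-semiseparable summands is a point the paper glosses over but is the intended reading.
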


	\textbf{Remarks.} Intuitively, as illustrated in Fig.~\ref{fig:EfficientMatrixMultiplication}, Algorithm~\ref{algorithm: 
	EfficientComputation} shows that the 2D polyline path scanning on 2D tokens (i.e., ${\bm L} {\bm x}$) can be decomposed as 
	the 1D vertical scanning along each column of $\bm X$ (i.e., $\bm{Z}_{:,l} = {\bm{B}^l}\! \times \!\bm{X}_{:,l}$) followed by the 
	1D horizontal scanning along each row of $\bm Z$ (i.e., $\bm{Y}_{i,:} = {\bm{A}^i}\! \times \!\bm{Z}_{i,:}$). 
	This equivalence offers an intuitive understanding of the physical meaning of the decomposed polyline path mask ${\bm L} = {\bm 
	L}^H {\bm L}^V$ and enables its natural extension to 3D or higher-dimensional tokens, as detailed in 
	Appendix~\ref{sec:appendix_Discussion3DExtension}.

	\subsection{Polyline Path Masked Attention}
	\label{PolylinePathMaskedAttention}
	
	The proposed polyline path mask can be seamlessly integrated into various attention variants in a plug-and-play manner.
	In this section, we integrate it into two softmax-based self-attention layers: vanilla attention~\cite{vit} and criss-cross attention~\cite{huang2020ccnet}.
	Notably, theorems and algorithm given in Sec.~\ref{EfficienctComputation} guarantee that integration of polyline path mask does not substantially increase the computational complexity of the original attention mechanism.
	More applications, such as the polyline path masked linear attention with a complexity of $\mathcal{O}(N)$, are provided in Appendix~\ref{sec:appendix_Applications}.

	\textbf{Polyline Path Masked Vanilla Attention.}
	The polyline path mask ${\bm L}^{2D}$ is integrated into vanilla attention via a Hadamard product with the attention map, i.e., for query ${\bm{Q}}$, key ${\bm{K}}$, and value ${\bm{V}}$:
	\begin{equation}\label{eq:PPMVA}
		\begin{aligned}
			{\rm{PPMVA}}\left( {\bm{x}} \right) 
			= \left( {{\rm{softmax}}({\bm{Q}}{{\bm{K}}^ \top }) \odot {{\bm{L}}^{2D}}} \right){\bm{V}},
		\end{aligned}
	\end{equation}
	where ${\bm{Q}}, {\bm{K}, {\bm{V}}} \! \in \! {\mathbb{R}^{HW \! \times \! C}}$.
	Based on Corollary~\ref{corollary:MaskComplexity}, Eq.~\eqref{eq:PPMVA} maintains the complexity of $\mathcal{O}(N^2)$.
	
	\textbf{Polyline Path Masked Criss-Cross Attention.}
	The original criss-cross attention~\cite{huang2020ccnet} employs the sparse attention over tokens located in the same row or column, achieving a complexity of $\mathcal{O}(N^\frac{3}{2})$.
	In this work, we follow RMT~\cite{RMT} to decompose criss-cross attention into the vertical attention over each column followed by the horizontal attention over each row.
	The polyline path mask ${\bm L}^{2D}$ is applied to the decomposed criss-cross attention through the Hadamard product, that is:
	\begin{equation}\label{eq:PPMCCA}
		\begin{array}{c}
				{\rm{PPMCCA}}\left( {\bm{x}} \right) 
				\! = \! \left( {\left( {{{\bm{S}}^{\bm{H}}} \! \times \!  {{\bm{S}}^{\bm{V}}}} \right)  \! \odot   \! {{\bm{L}}^{2D}}} \right) \!  {\bm{V}}
				 \! =  \!  \left( {\left( {{{\bm{S}}^{\bm{H}}} \!  \times \!  {{\bm{S}}^{\bm{V}}}} \right)  \! \odot \!  {\bm{L}}} \right) \!  {\bm{V}} \!  + \!  \left( {\left( {{{\bm{S}}^{\bm{H}}} \!  \times \!  {{\bm{S}}^{\bm{V}}}} \right)  \! \odot \!  {\bm{\tilde L}}} \right) \!  {\bm{V}},
		\end{array}
	\end{equation}
	where horizontal and vertical attention maps ${{\bm{S}}^{\bm{H}}}, {{\bm{S}}^{\bm{V}}}\! \in \! {\mathbb{R}^{HW\! \times \!HW}}$ 
	satisfy the form in Eq.~\eqref{eq:Decomposability3} with ${{{\bm{A}^i}}} \! = \! 
	{{{\rm{softmax}}({{{\bm{\mathcal{Q}}}}_{i,:,:}}{\bm{\mathcal{K}}}_{i,:,:}^ \top)}}$ and ${{\bm{B}^l}} \! = \! 
	{{{\rm{softmax}}({{\bm{\mathcal{Q}}}_{:,l,:}}{\bm{\mathcal{K}}}_{:,l,:}^ \top)}}$, 
	and ${{{\bm{\mathcal{Q}}}}}, {{{\bm{\mathcal{K}}}}} \! \in \! {\mathbb{R}^{H\! \times \!W \! \times \! C}}$ are tensor forms of ${\bm Q}, {\bm K}$, respectively~\cite{huang2020ccnet}.
	Based on Theorem~\ref{theorem:Decomposability}, we can reformulate the left part of Eq.~\eqref{eq:PPMCCA} as:
	%
	\begin{equation}\label{eq:PPMCCA2}
		\begin{array}{l}
		\begin{aligned}
			\left( \! {\left( {{{\bm{S}}^{\bm{H}}} \! \times \! {{\bm{S}}^{\bm{V}}}} \right) \! \odot \! {\bm{L}}} \right)  \! {\bm{V}} 
			\! &= \! \left( {\left( {{{\bm{S}}^{\bm{H}}} \! \times \! {{\bm{S}}^{\bm{V}}}} \right)  \!  \odot   \! \left( {{{\bm{L}}^{\bm{H}}} \! \times \! {{\bm{L}}^{\bm{V}}}} \right)} \right) \! {\bm{V}}
			\! = \! \left( \! {\left( {{{{\bm{\hat S}}}^{\bm{H}}} \! \odot \! {{{\bm{\hat S}}}^{\bm{V}}}} \right) \! \odot \! \left( \! {{{{\bm{\hat L}}}^{\bm{H}}} \! \odot \! {{{\bm{\hat L}}}^{\bm{V}}}} \right)} \! \right) \! {\bm{V}}\\
			\!&= \!  \left(  \! {\left( {{{{\bm{\hat S}}}^{\bm{H}}} \! \odot \! {{{\bm{\hat L}}}^{\bm{H}}}} \right) \! \odot \! \left( {{{{\bm{\hat S}}}^{\bm{V}}} \! \odot \! {{{\bm{\hat L}}}^{\bm{V}}}} \right)} \! \right) \! {\bm{V}}
			\! = \! {\left( {{{\bm{S}}^{\bm{H}}} \! \odot \! {{\bm{L}}^{\bm{H}}}} \right) \! \times \! \left( {{{\bm{S}}^{\bm{V}}} \! \odot \! {{\bm{L}}^{\bm{V}}}} \right)}  \!\! \times \!\! {\bm{V}}.
		\end{aligned}
		\end{array}
	\end{equation}
	Note that matrices ${{{{\bm{\hat S}}}^{\bm{H}}} \! \odot \! {{{\bm{\hat L}}}^{\bm{H}}}}$ and ${{{{\bm{\hat S}}}^{\bm{V}}} \! \odot \! {{{\bm{\hat L}}}^{\bm{V}}}}$ also satisfy the form in Eq.~\eqref{eq:Decomposability3}.
	Thus, the computational complexity  of Eq.~\eqref{eq:PPMCCA2} can be reduced to $\mathcal{O}({N^{\frac{3}{2}}})$ by Algorithm~\ref{algorithm: EfficientComputation}.
	Similar conclusions can also be derived for the right part of Eq.~\eqref{eq:PPMCCA}. Thus, the complexity of Eq.\eqref{eq:PPMCCA} 
	maintains $\mathcal{O}({N^{\frac{3}{2}}})$.

	\subsection{Overall Architecture}
	Based on the proposed Polyline Path Masked Attention, we construct a hybrid Mamba2-Transformer backbone for vision tasks.
	As illustrated in Fig.~\ref{fig:PPMAViT_architecture}, our backbone adopts the four-stage hierarchical architecture.
	Following RMT~\cite{RMT}, we employ Polyline Path Masked Criss-Cross Attention in the first three stages, and Polyline Path Masked Vanilla Attention in  the final stage.
	Moreover, we develop our model in three scales: tiny (PPMA-T), small (PPMA-S), and base (PPMA-B).

	\begin{figure}[!t]
		\centering
		\includegraphics[width=0.96\textwidth]{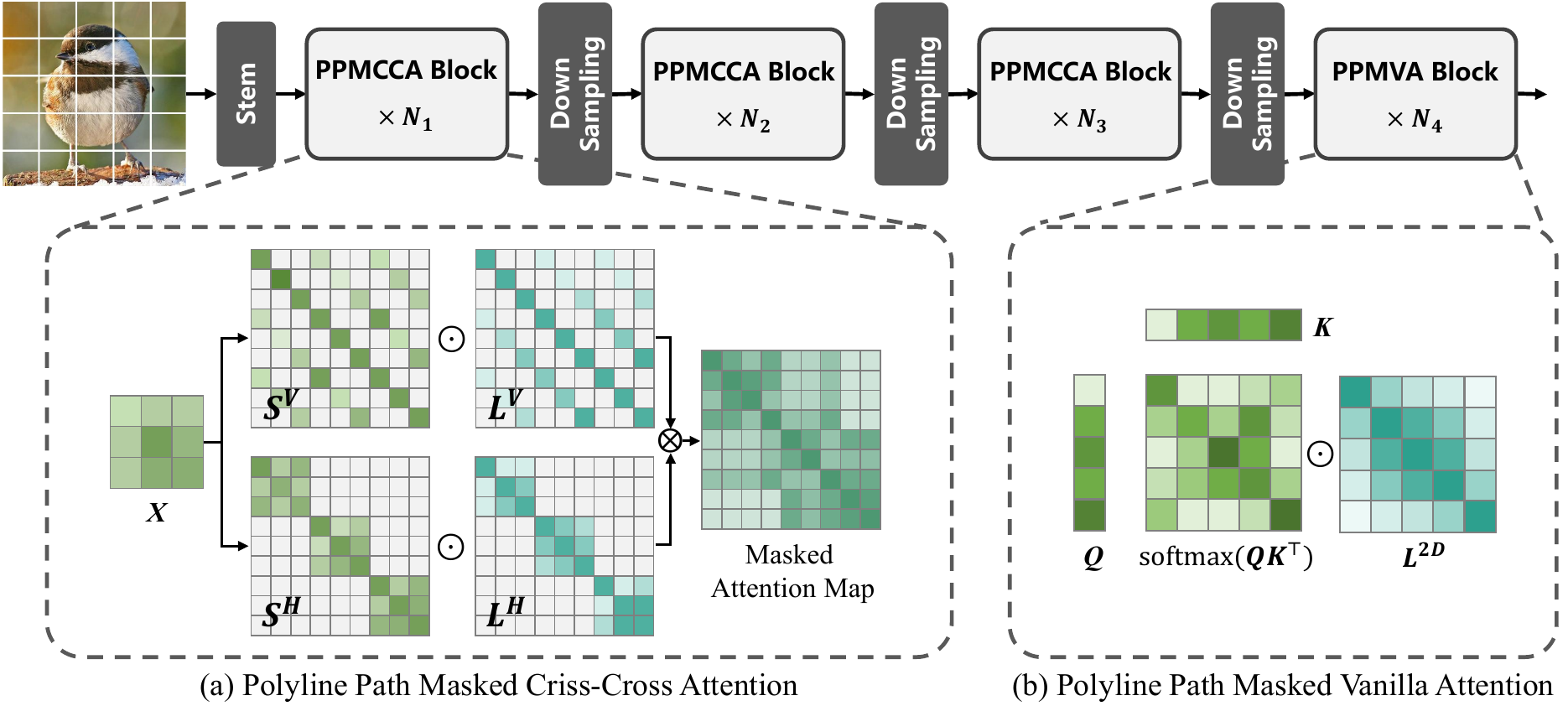}
		\vspace{-2mm}
		\caption{Overall architecture of the Polyline Path Masked Attention based Vision Transformer.}
		\label{fig:PPMAViT_architecture}
		\vspace{-5mm}
	\end{figure}

	\section{Experiments}
	\label{Experiments}
	
	To validate the effectiveness of our method, we conduct a series of experiments on mainstream benchmarks for image classification (Sec.~\ref{Classification}), object detection and instance segmentation (Sec.~\ref{Detection}), and semantic segmentation (Sec.~\ref{Segmentation}).
	Comparison methods include advanced CNN-based~\cite{RegNetY,convnext,efficientnet}, SSM-based~\cite{Vmamba,2DMamba,GrootV,SpatialMamba}, and Transformer-based backbones~\cite{SwinTransformer,Cswin,MambaVision,NAT,biformer,RMT}.
	For a fair comparison, we reproduce the experimental results of RMT~\cite{RMT} with the same experimental settings as ours.
	We also perform comprehensive ablation studies on the structured mask design in Sec.~\ref{Ablation}. 
	More detailed experimental settings and results can be found in Appendix~\ref{sec:appendix_ExperimentalDetails}.

	\subsection{Image Classification on ImageNet-1K}
	\label{Classification}
	
	\textbf{Settings.} 
	We evaluate the classification performance of our method on ImageNet-1K~\cite{imagenet}.
	Following the same training strategy as in~\cite{RMT,deit}, we train our models from scratch for 300 epochs with the input size of 224$\times$224.
	We use the adaptive AdamW optimizer with a cosine decay learning rate scheduler (batch size=1024, initial learning rate=0.001, weight decay=0.05).
	
	\textbf{Results.} 
	The comparison results presented in Table~\ref{tab:imagenet} show that our method achieves state-of-the-art (SOTA) performance compared to other advanced models based on various architectures across tiny, small, and base scales.
	Specifically, PPMA-S achieves \textbf{84.2\%} top-1 accuracy, surpassing 2DMamba-T~\cite{2DMamba} by \textbf{1.4\%}, MLLA-T~\cite{mlla} by \textbf{0.7\%}, MambaVision-T2~\cite{MambaVision} by \textbf{1.5\%}, and RMT-S~\cite{RMT} by \textbf{0.2\%} with similar FLOPs.
	PPMA-T achieves \textbf{82.6\%} top-1 accuracy, outperforming the most competitive RMT-T~\cite{RMT} by \textbf{0.2\%} without extra training tricks.
	Moreover, our PPMA-B also surpasses other SOTA CNN-based, SSM-based, and Transformer-based backbones.

	\begin{table}[!t]
	\small
	\centering
	\setlength{\tabcolsep}{1pt}
	\renewcommand\arraystretch{1.0}
	\caption{Image classification performance on the ImageNet-1K validation set.}
	\label{tab:imagenet}
	\resizebox{0.49\linewidth}{!}{
		\setlength{\tabcolsep}{0.1cm}
		\begin{tabular}{c|c|cc|c}
			\Xhline{1.0pt}
			\toprule
			Model &  Arch. &    \makecell{\#Param.\\(M)} & \makecell{FLOPs\\(G)} &  \makecell{Top-1\\(\%)} \\
			\midrule
			RegNetY-1.6G~\cite{RegNetY}              & \multirow{2}{*}{\rotatebox{90}{CNN}}   & 11 & 1.6 &  78.0 \\
			EffNet-B3~\cite{efficientnet}            & & 12 & 1.8 &  81.6 \\
			\cline{2-5}
			Vim-T~\cite{zhu2024ViM}                  & \multirow{2}{*}{\rotatebox{90}{SSM\,}} & 7 & 1.5 &  76.1\\
			MSVMamba-M~\cite{MSVMamba}               &  & 12 & 1.5 &  79.8 \\
			\cline{2-5}
			BiFormer-T~\cite{biformer}               & \multirow{5}{*}{\rotatebox{90}{Trans.}} & 13 & 2.2 &  81.4 \\
			NAT-M~\cite{NAT}                         & & 20 & 2.7 &  81.8 \\
			SMT-T~\cite{SMT}                         & & 12 & 2.4 &  82.2 \\		
			RMT-T~\cite{RMT}                        & & 14 & 2.5 &  82.4 \\
			\rowcolor{gray!20}PPMA-T                 & & 14 & 2.7 &  \textbf{82.6}  \\
			\midrule
			RegNetY-4G~\cite{RegNetY}                & \multirow{3}{*}{\rotatebox{90}{CNN}} & 21 & 4.0 &  80.0 \\
			ConvNeXt-T~\cite{convnext}	             & & 29 & 4.5 &  82.1 \\
			EffNet-B4~\cite{efficientnet}            & & 19 & 4.2 &  82.9\\
			\cline{2-5}
			VMamba-T~\cite{Vmamba}                   & \multirow{5}{*}{\rotatebox{90}{SSM}} & 30 & 4.9 &  82.6 \\
			2DMamba-T~\cite{2DMamba}               & & 31 & 4.9 &  82.8 \\
			GrootVL-T~\cite{GrootV}                  & & 30 & 4.8 &  83.4 \\
			Spatial-Mamba-T~\cite{SpatialMamba}      & & 27 & 4.5 &  83.5 \\
			MLLA-T~\cite{mlla}                       & & 25 & 4.2 &  83.5 \\
			\cline{2-5}
			Swin-T~\cite{SwinTransformer}            & \multirow{3}{*}{\rotatebox{90}{Trans.}} & 29 & 4.5 &  82.1 \\
			CSWin-T~\cite{Cswin}                     & & 23 & 4.3 &  82.7 \\
			MambaVision-T2~\cite{MambaVision}        &  & 35 & 5.1 &  82.7 \\
			\bottomrule
			\Xhline{1.0pt}
		\end{tabular}
	}
	\resizebox{0.49\linewidth}{!}{
		\setlength{\tabcolsep}{0.1cm}
		\begin{tabular}{c|c|cc|c}
			\Xhline{1.0pt}
			\toprule
			Model &  Arch. &    \makecell{\#Param.\\(M)} & \makecell{FLOPs\\(G)} &  \makecell{Top-1\\(\%)} \\
			\midrule
			NAT-T~\cite{NAT}                         & \multirow{4}{*}{\rotatebox{90}{Trans.}} & 28 & 4.3 &  83.2 \\
			BiFormer-S~\cite{biformer}               & & 26 & 4.5 &  83.8 \\
			RMT-S~\cite{RMT}                         & & 27 & 4.5 &  84.0 \\
			\rowcolor{gray!20} PPMA-S                & & 27 & 4.9 & \textbf{84.2} \\
			
			\midrule
			
			RegNetY-8G~\cite{RegNetY}                & \multirow{3}{*}{\rotatebox{90}{CNN}} & 39 & 8.0 &  81.7 \\
			ConvNeXt-S~\cite{convnext}               &  & 50 & 8.7 &  83.1 \\
			EffNet-B5~\cite{efficientnet}            &  & 30 & 9.9 &  83.6 \\
			\cline{2-5}
			VMamba-S~\cite{Vmamba}                   & \multirow{5}{*}{\rotatebox{90}{SSM}} & 50 & 8.7 &  83.6 \\
			2DMamba-S~\cite{2DMamba}                 & & 50 & 8.8 &  83.8 \\
			GrootVL-S~\cite{GrootV}                  & & 51 & 8.5 &  84.2 \\
			MLLA-S~\cite{mlla}                       & & 43 & 7.3 &  84.4 \\
			Spatial-Mamba-S~\cite{SpatialMamba}      & & 43 & 7.1 &  84.6 \\
			\cline{2-5}
			Swin-S~\cite{SwinTransformer}            & \multirow{7}{*}{\rotatebox{90}{Trans.}} & 50 & 8.7 &  83.0 \\
			NAT-S~\cite{NAT}                         &  & 51 & 7.8  &  83.7 \\
			CSWin-B~\cite{Cswin}                     &  & 78 & 15.0 &  84.2 \\
			MambaVision-B~\cite{MambaVision}         &  & 98 & 15.0 &  84.2 \\
			BiFormer-B~\cite{biformer}               &  & 57 & 9.8  &  84.3 \\
			iFormer-B~\cite{iformer}                 &  & 48 & 9.4  &  84.6 \\
			RMT-B~\cite{RMT}                         &  & 54 & 9.7  &  84.9 \\
			\rowcolor{gray!20}PPMA-B                 &  & 54 & 10.6 &  \textbf{85.0}   \\
			\bottomrule
			\Xhline{1.0pt}
		\end{tabular}
	}
	\vspace{-5mm}
\end{table}

\begin{table}[!b]
	\vspace{-8mm}
	\small
	\centering
	\caption{ Object detection and instance segmentation performance with Mask R-CNN \citep{maskrcnn} detector on COCO val2017. FLOPs are calculated with input resolution of $1280\times 800$.} 
	\label{tab:detection}
	\setlength{\tabcolsep}{2mm}
	\begin{tabular}{c|cc|ccc|ccc}
		\Xhline{1.0pt}
		\toprule
		\multicolumn{9}{c}{\textbf{Mask R-CNN 1$\times$ schedule}}\\
		\midrule
		Backbone & \makecell{\#Param.  (M)} & \makecell{FLOPs (G)}  & AP$^\text{b}$ & AP$^\text{b}_{50}$$$ & AP$^\text{b}_{75}$$$ & AP$^\text{m}$ & AP$^\text{m}_{50}$$$ & AP$^\text{m}_{75}$$$ \\
		
		\midrule
		Vim-T~\cite{zhu2024ViM}                  & -- & --  & 45.7 & 63.9 & 49.6 & 39.2 & 60.9 & 41.7 \\
		MSVMamba-M~\cite{MSVMamba}               & 32 & 201 & 43.8 & 65.8 & 47.7 & 39.9 & 62.9 & 42.9 \\    
		
		\cline{2-9}
		MPViT-XS~\cite{mpvit}                    & 30 & 231 & 44.2 & 66.7 & 48.4 & 40.4 & 63.4 & 43.4 \\
		RMT-T~\cite{RMT}                         & 33 & 218 & 46.7 & 68.6 & 51.6 & 42.1 & 65.3 & 45.2 \\
		\rowcolor{gray!20}PPMA-T                 & 33 & 218 & \textbf{47.1} & \textbf{68.7}&	\textbf{51.7} & \textbf{42.4} & \textbf{65.9} & \textbf{45.7} \\
		
		\midrule
		ResNet-50~\cite{resnet}                  & 44 & 260 & 38.2 & 58.8 & 41.4 & 34.7 & 55.7 & 37.2 \\
		ConvNeXt-T~\cite{convnext}               & 48 & 262 & 44.2 & 66.6 & 48.3 & 40.1 & 63.3 & 42.8 \\
		\cline{2-9}
		MLLA-T~\cite{mlla}                       & 44 & 255 & 46.8 & 69.5 & 51.5 & 42.1 & 66.4 & 45.0 \\
		GrootVL-T~\cite{GrootV}                  & 49 & 265 & 47.0 & 69.4 & 51.5 & 42.7 & 66.4 & 46.0 \\
		VMamba-T~\cite{Vmamba}                   & 50 & 271 & 47.3 & 69.3 & 52.0 & 42.7 & 66.4 & 45.9 \\
		Spatial-Mamba-T~\cite{SpatialMamba}      & 46 & 216 & 47.6 & 69.6 & 52.3 & 42.9 & 66.5 & 46.2 \\
		\cline{2-9}
		Swin-T~\cite{SwinTransformer}            & 48 & 267 & 43.7 & 66.6 & 47.7 & 39.8 & 63.3 & 42.7 \\
		CSWin-T~\cite{Cswin}                     & 42 & 279 & 46.7 & 68.6 & 51.3 & 42.2 & 65.6 & 45.4 \\
		BiFormer-S~\cite{biformer}               & -- & --  & 47.8 & 69.8 & 52.3 & 43.2 & 66.8 & 46.5 \\
		RMT-S~\cite{RMT}                         & 46 & 262 & 48.8 & \textbf{70.8} & 53.4 & 43.6 & \textbf{67.4} & \textbf{47.3} \\
		\rowcolor{gray!20}PPMA-S                 & 46 & 263 & \textbf{49.2} & 70.7 & \textbf{54.0} & \textbf{43.8} & \textbf{67.4} & 47.1 \\
		
		\midrule
		ResNet-101~\cite{resnet}				 & 63 & 336 & 40.4 & 61.1 & 44.2 & 36.4 & 57.7 & 38.8 \\
		ConvNeXt-S~\cite{convnext}               & 70 & 348 & 45.4 & 67.9 & 50.0 & 41.8 & 65.2 & 45.1 \\
		\cline{2-9}
		GrootVL-S~\cite{GrootV}                  & 70 & 341 & 48.6 & 70.3 & 53.5 & 43.6 & 67.5 & 47.1 \\
		VMamba-S~\cite{Vmamba}                   & 70 & 349 & 48.7 & 70.0 & 53.4 & 43.7 & 67.3 & 47.0 \\
		Spatial-Mamba-S~\cite{SpatialMamba}      & 63 & 315 & 49.2 & 70.8 & 54.2 & 44.0 & 67.9 & 47.5  \\
		MLLA-S~\cite{mlla}                       & 63 & 319 & 49.2 & 71.5 & 53.9 & 44.2 & 68.5 & 47.2  \\
		\cline{2-9}
		Swin-S~\cite{SwinTransformer}			 & 69 & 359 & 45.7 & 67.9 & 50.4 & 41.1 & 64.9 & 44.2 \\
		CSWin-S~\cite{Cswin}					 & 54 & 342 & 47.9 & 70.1 & 52.6 & 43.2 & 67.1 & 46.2 \\
		BiFormer-B~\cite{biformer} 				 & -- & --  & 48.6 & 70.5 & 53.8 & 43.7 & 67.6 & 47.1 \\
		RMT-B~\cite{RMT} 			         	 & 73 & 373 & 50.7 & 72.0 & 55.7 & 45.1 & 69.2 & 49.0 \\
		\rowcolor{gray!20}PPMA-B                 & 73 & 374 & \textbf{51.1} & \textbf{72.5} & \textbf{55.9} & \textbf{45.5} & \textbf{69.7} & \textbf{49.1} \\
		
		\midrule
		\multicolumn{9}{c}{\textbf{Mask R-CNN 3$\times$ schedule}}\\
		\midrule
		ConvNeXt-S~\cite{convnext}			     & 70 & 348 & 47.9 & 70.0 & 52.7 & 42.9 & 66.9 & 46.2 \\
		\cline{2-9}
		GrootVL-S~\cite{GrootV}                  & 70 & 341 & 50.1 & 71.2 & 54.9 & 44.6 & 68.7 & 47.8 \\
		VMamba-S~\cite{Vmamba}                   & 70 & 349 & 49.9 & 70.9 & 54.7 & 44.2 & 68.2 & 47.7 \\
		MLLA-S~\cite{mlla}                       & 63 & 319 & 50.5 & 71.8 & 55.2 & 44.9 & 69.1 & 48.2  \\
		Spatial-Mamba-S~\cite{SpatialMamba}      & 63 & 315 & 50.6 & 71.5 & 55.4 & 44.7 & 68.6 & 48.2  \\
		\cline{2-9}
		NAT-S~\cite{NAT}			             & 70 & 330 & 48.4 & 69.8 & 53.2 & 43.2 & 66.9 & 46.4 \\
		Swin-S~\cite{SwinTransformer}			 & 69 & 359 & 48.5 & 70.2 & 53.5 & 43.3 & 67.3 & 46.6 \\
		CSWin-S~\cite{Cswin}					 & 54 & 342 & 50.0 & 71.3 & 54.7 & 44.5 & 68.4 & 47.7 \\
		RMT-B~\cite{RMT} 			         	 & 73 & 373 & 52.2 & 72.9 & 57.0 & 46.1 & \textbf{70.4} & 49.9 \\
		\rowcolor{gray!20}PPMA-B                 & 73 & 374 & \textbf{52.6} & \textbf{73.3} & \textbf{57.5} & \textbf{46.3} & 70.3 & \textbf{50.2} \\
		
		\bottomrule 
		\Xhline{1.0pt}
	\end{tabular}
\end{table}

\subsection{Object Detection and Instance Segmentation on COCO}
	\label{Detection}
	
	\textbf{Settings.} 
	We evaluate our method for object detection and instance segmentation tasks on MSCOCO2017~\cite{coco} using the MMDetection library~\cite{MMDetection}.
	Following previous work~\cite{TransNeXt}, we initialize the backbone with ImageNet-1K pretrained weights and adopt Mask R-CNN~\cite{maskrcnn} as the basic framework.
	The models are trained for 12 epochs (1$\times$ schedule) and 36 epochs with multi-scale inputs (3$\times$ schedule) using AdamW optimizer (batch size=16, learning rate=0.0001, weight decay=0.05).
	
	\textbf{Results.} 
	The results presented in Table~\ref{tab:detection} show that our model outperforms existing methods on most evaluation metrics.
	Under the same experimental settings, PPMA-T achieves a box mAP of \textbf{47.1\%} and a mask mAP of \textbf{42.4\%}, surpassing the SOTA Transformer-based backbone RMT-T~\cite{RMT} by \textbf{0.4\%} and \textbf{0.3\%} in the 1$\times$ schedule, respectively.
	Moreover, PPMA-B achieves a box mAP of \textbf{51.1\%} and a mask mAP of \textbf{45.5\%}, surpassing the SOTA SSM-based backbone MLLA-S~\cite{mlla} by \textbf{1.9\%} and \textbf{1.3\%} in the 1$\times$ schedule, respectively.
	Furthermore, PPMA-B maintains its superior performance under the 3× multi-scale training schedule.

	\begin{table}[!t]
		\centering
		\setlength{\tabcolsep}{1.2pt}
		\renewcommand\arraystretch{1.0}
		\caption{Semantic segmentation performance with UPerNet~\citep{upernet} segmentor on ADE20K val set. `SS' and `MS' represent 
		single-scale and multi-scale testing, respectively.}
		\label{tab:sem_seg}
		\resizebox{0.49\linewidth}{!}{
			\setlength{\tabcolsep}{0.1cm}
			\begin{tabular}{c|cc|cc}
				\Xhline{1.0pt}
				\toprule
				\multirow{2}{*}{Backbone} &    \makecell{\#Param.} & \makecell{FLOPs} &  \multicolumn{2}{c}{mIoU(\%)} \\ 
				&    \makecell{(M)}      & \makecell{(G)}   &  \makecell{SS}   &   \makecell{MS}  \\ 
				\midrule
				LocalVim-T~\cite{LocalMamba}             & 36 & 181 & 43.4 & 44.4 \\ 
				MSVMamba-M~\cite{MSVMamba}               & 42 & 875 & 45.1 & 45.4 \\

				\cline{2-5}
				NAT-M~\cite{NAT}                         & 50 & 900 & 45.1 & 46.4 \\
				RMT-T~\cite{RMT}                         & 43 & 977 & 48.0 & 48.8 \\
				\rowcolor{gray!20}PPMA-T                 & 43 & 983 & \textbf{48.7} & \textbf{49.1} \\
				
				\midrule
				ResNet-50~\cite{resnet}                  & 67 & 953 & 42.1 & 42.8 \\ 
				ConvNeXt-T~\cite{convnext}               & 60 & 939 & 46.0 & 46.7 \\ 
				\cline{2-5}
				VMamba-T~\cite{Vmamba}                   & 62 & 949 & 48.0 & 48.8 \\ 
				2DMamba-T~\cite{2DMamba}                & 62 & 950 & 48.6 & 49.3 \\ 
				GrootVL-T~\cite{GrootV}                  & 60 & 941 & 48.5 & 49.4 \\
				Spatial-Mamba-S~\cite{SpatialMamba}      & 57 & 936 & 48.6 & 49.4 \\	
				\cline{2-5}	
				Swin-T~\cite{SwinTransformer}            & 60 & 945 & 44.4 & 45.8 \\ 
				MambaVision-T~\cite{MambaVision}         & 55 & 945 & 46.6 & -- \\ 
				NAT-T~\cite{NAT}                         & 58 & 934 & 47.1 & 48.4 \\
				CSWin-S~\cite{Cswin}                     & 60 & 959 & 49.3 & 50.7 \\ 
				\bottomrule
				\Xhline{1.0pt}
			\end{tabular}
		}
		\resizebox{0.49\linewidth}{!}{
			\setlength{\tabcolsep}{0.1cm}
			\begin{tabular}{c|cc|cc}
				\Xhline{1.0pt}
				\toprule
				\multirow{2}{*}{Backbone}  &    \makecell{\#Param.} & \makecell{FLOPs} &  \multicolumn{2}{c}{mIoU(\%)} \\ 
				&    \makecell{(M)}      & \makecell{(G)}   &  \makecell{SS} & \makecell{MS}  \\ 
				\midrule
				BiFormer-S~\cite{biformer}               & -- & --  & 49.8 & 50.8 \\ 
				RMT-S~\cite{RMT} 			         	 & 56 & 937 & 49.8 & 49.7   \\
				\rowcolor{gray!20}PPMA-S                 & 56 & 984 & \textbf{51.1} & \textbf{52.0} \\
				
				\midrule
				ResNet-101~\cite{resnet}                 & 85 & 1030 & 42.9 & 44.0 \\ 
				ConvNeXt-S~\cite{convnext}               & 82 & 1027 & 48.7 & 49.6 \\ 
				\cline{2-5}
				VMamba-S~\cite{Vmamba}                   & 82 & 1028 & 50.6 & 51.2 \\ 
				Spatial-Mamba-S~\cite{SpatialMamba}      & 73 & 992  & 50.6 & 51.4 \\
				GrootVL-S~\cite{GrootV}                  & 82 & 1019 & 50.7 & 51.7 \\
				\cline{2-5}
				Swin-S~\cite{SwinTransformer}            & 81 & 1039 & 47.6 & 49.5 \\ 
				NAT-S~\cite{NAT}                         & 82 & 1010 & 48.0 & 49.5 \\
				MambaVision-S~\cite{MambaVision}         & 84 & 1135 & 48.2 & --   \\ 
				CSWin-S~\cite{Cswin}                     & 65 & 1027 & 50.4 & 51.5 \\ 
				BiFormer-B~\cite{biformer}               & -- & --   & 51.0 & 51.7 \\ 
				RMT-B~\cite{RMT} 			         	 & 83 & 1051 & 52.0 & 52.1   \\
				\rowcolor{gray!20}PPMA-B                 & 83 & 1137 & \textbf{52.3} & \textbf{53.0} \\
				
				\bottomrule
				\Xhline{1.0pt}
			\end{tabular}
		}
	\vspace{-4mm}
	\end{table}

	\subsection{Semantic Segmentation on ADE20K}
	\label{Segmentation}
	\vspace{-1mm}
	\textbf{Settings.} 
	We evaluate the semantic segmentation performance of our method on ADE20K~\cite{ade20k} using the MMSegmentation library~\cite{mmsegmentation}.
	Following the settings in previous works~\cite{TransNeXt}, we initialize the backbone with ImageNet-1K pretrained weights and adopt UPerNet~\cite{upernet} as the basic framework.
	The input size of images is set to 512 $\times$ 512 and all models are trained for 160K iterations with AdamW optimizer (batch 
	size=16, learning rate=$6 \! \times \! 10^{-5}$, weight decay=0.05).
	
	\vspace{-1mm}
	\textbf{Results.} 
	The semantic segmentation results are summarized in Table~\ref{tab:sem_seg}. 
	Our method consistently outperforms previous methods under all settings. 
	Compared to SOTA Transformer-based counterparts, PPMA-T/S/B surpass RMT-T/S/B by \textbf{0.7\%}/\textbf{1.3\%}/\textbf{0.3\%} mIoU 
	in the Single-Scale (SS) setting and \textbf{0.3\%}/\textbf{2.3\%}/\textbf{0.9\%} mIoU in the Multi-Scale (MS) setting.
	Compared to SOTA SSM-based methods, PPMA-T/S/B surpass them by at least \textbf{3.6\%}/\textbf{2.5\%}/\textbf{1.6\%} in SS mIoU, respectively.

	\subsection{Ablation Study}
	\label{Ablation}
	\textbf{Polyline Path Mask Design.}
	To verify the effectiveness of the proposed polyline path mask, we conduct an ablation study on ImageNet-1K and ADE20K using PPMA-T as the backbone.
	Under the same experimental settings, we compare various structured masks embedded into the softmax-based self-attention layers by Hadamard product, including: no mask (baseline), RMT decay mask~\cite{RMT}, cross scan mask~\cite{Vmamba}, Hilbert scan mask~\cite{PointMamba}, V2H polyline path mask, and our final 2D polyline path mask.
	As shown in Fig.~\ref{fig:Mask_visualization}, our polyline path mask $\bm{L}^{2D}$, compared to the RMT decay mask, can 
	selectively capture the semantic continuity in the image.
	Compared to the cross scan mask and Hilbert scan mask, the polyline path mask better preserves the spatial relationships between 2D tokens, alleviating the 
	long-range forgetting issue.
	Experimental results in Table~\ref{table:ablationstudy} show that the 2D polyline path mask $\bm{L}^{2D}$ boosts the baseline by 
	\textbf{0.32\%} top-1 accuracy on ImageNet-1K and \textbf{0.95\%} SS mIoU on ADE20K, respectively.
	Visualization results in Fig.~\ref{fig:MaskAttention_visualization} further demonstrate that our 2D polyline path mask $\bm{L}^{2D}$ effectively suppresses the falsely highlighted areas in the original attention maps. 
	More visualizations and detailed discussions are provided in Fig.~\ref{fig:Appendix_MaskedAttentionMap} and Sec.~\ref{sec:appendix_DiscussionSelectivity}.

	\textbf{Horizontal and Vertical Decay Factors.}
	In our model, we employ different decay factors ($\alpha_{i,j} \! \ne \! \beta_{i,j}$) to capture semantic similarity between adjacent tokens along horizontal and vertical directions, respectively.
	As illustrated in Fig.~\ref{fig:MaskAttention_visualization} (b) and (c), the learned decay factors $\alpha$ and $\beta$ 
	effectively capture semantic continuity in horizontal and vertical directions, respectively.
	Table~\ref{table:ablationstudy} shows that replacing different decay factors with a shared decay factor ($\alpha_{i,j} \! = \! 
	\beta_{i,j}$) results in a significant performance drop, highlighting the importance of modeling horizontal and vertical decay factors separately.

	\begin{table}[!t]
		\vspace{-5mm}
		\small
		\centering
		\caption{Ablation study of structured mask designs in PPMA-T on ImageNet-1K and ADE20K.} 
		\setlength{\tabcolsep}{0.5mm}
		\begin{tabular}{l|ccc|cc}
			\Xhline{1.0pt}
			\toprule
			\ Structured Mask  &\makecell{\#Param. (M)} & \makecell{FLOPs (G)} & \makecell{Throughput (imgs/s)} & \makecell{Top-1 (\%)} & \makecell{mIoU SS (\%)}  \\
			\midrule
			Baseline (w/o mask)                                  & 14.33 & 2.65 & 1779 & 82.28 &  47.78 \\
			+ RMT Decay Mask                                     & 14.33 & 2.65 & 1650 & 82.35 & 48.01 \\
			+ Cross Scan Mask                                    & 14.34 & 2.71 & 1100 & 82.44 & 48.14 \\
			+ Hilbert Scan Mask                                  & 14.34 & 2.71 & 1091 & 82.44 & 48.14 \\
			+ V2H Polyline Path Mask                          & 14.34 & 2.71 & 1203 & 82.44 & 48.57 \\ 
			\rowcolor{gray!20} + 2D Polyline Path Mask       & 14.34 & 2.71 & 1034 & \textbf{82.60} & \textbf{48.73} \\ 
			\midrule 
			Shared Decay factors ($\alpha_{i,j} \! = \! \beta_{i,j}$)                                 & 14.33 & 2.71 & 1124 & 82.37 & 48.27 \\ 
			\rowcolor{gray!20} Different Decay factors ($\alpha_{i,j} \! \ne \! \beta_{i,j}$)         & 14.34 & 2.71 & 1034 & \textbf{82.60} & \textbf{48.73} \\ 
			\bottomrule	
			\Xhline{1.0pt}
		\end{tabular}
		\label{table:ablationstudy}
	\end{table}

	\begin{figure}[!t]
		\vspace{-4mm}
		\centering
		\includegraphics[width=1.0\textwidth]{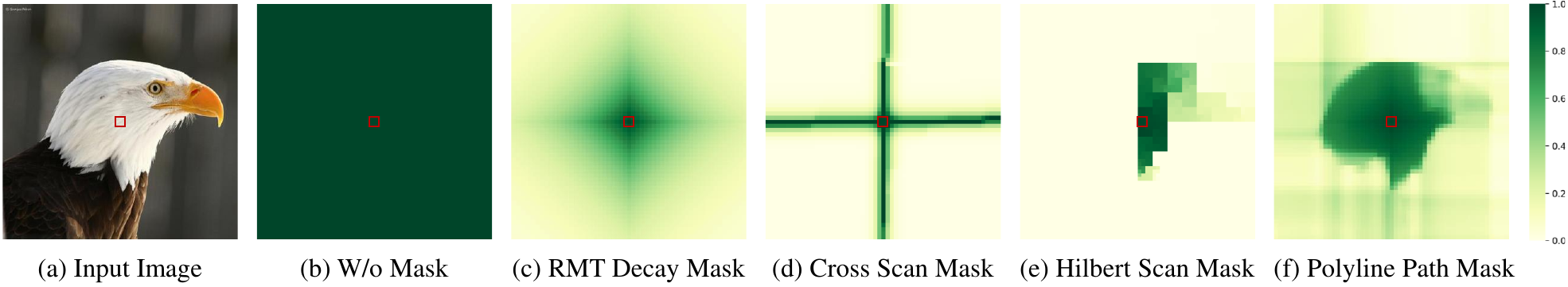}
		\vspace{-5.5mm}
		\caption{
			Illustration of various structured masks. 
		}
		\label{fig:Mask_visualization}
	\end{figure}
	
		\begin{figure}[!t]
		\vspace{-3mm}
		\centering
		\includegraphics[width=1.0\textwidth]{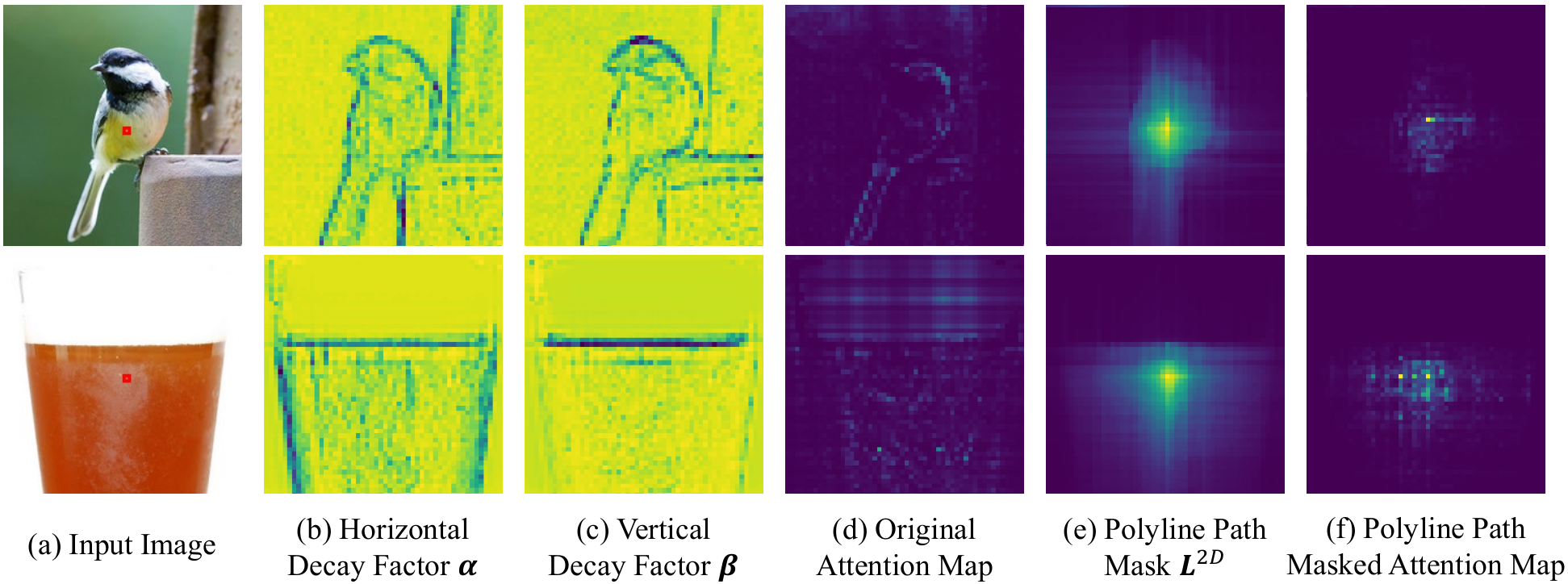}
		\vspace{-5.5mm}
		\caption{
			Visualizations of the decay factors and the polyline path masked attention maps of the well-trained PPMA model. In each input 
			image, the query token is marked by a red box.
		}
		\label{fig:MaskAttention_visualization}
		\vspace{-5mm}
	\end{figure}

	\section{Conclusion}
	In this paper, we argue that the key component of Mamba2 model is its structured mask, 
	which explicitly encodes the spatial distance information through the recursive propagation mechanism and captures the semantic continuity in sequences through the selective mechanism.
	Building on this insight, we propose to extend the structured mask from 1D text sequences to 2D images.
	To this end, we propose a novel 2D polyline path scanning strategy with its corresponding structured mask tailed for images.
	To achieve SOTA performance on high-level vision tasks, we integrate the polyline path mask into the powerful self‑attention mechanism of ViTs.
	
	\textbf{Limitations.}
	Although the proposed efficient algorithm optimizes the integration complexity, it inevitably incurs additional GPU memory occupation and lower throughput, as shown in Table~\ref{table:ablationstudy}.
	We plan to alleviate this limitation through further engineering optimizations, such as CUDA-based or Triton-based implementations, in the future work.

	
	\newpage
	\section*{Acknowledgments}
	We would like to thank all anonymous reviewers for their constructive suggestions for improving this paper. 
	This work was supported 
	in part by the National Key R\&D Program of China under Grant 2024YFA1012000;
	in part by the Major Key Project of PCL under Grant PCL2024A06; 
	in part by Tianyuan Fund for Mathematics of the National Natural Science Foundation of China (Grant No. 12426105); 
	in part by the China NSFC projects under contract 62476214.
	

	{
		\small
		\bibliographystyle{plain}
		\bibliography{neurips_2025}
	}

\clearpage

\begin{center}
	{\Large \textbf{Supplementary Material}}
\end{center}

	\appendix

	\begin{itemize} [leftmargin=5.5em]
		\item[Appendix~\ref{sec:appendix_EfficientComputationTheory}] 
		Efficient Computation Theory for Polyline Path Mask 
		Applications;
			\begin{itemize}
			\item[Appendix~\ref{sec:appendix_Notations}] Notations;
			
			\item[Appendix~\ref{sec:appendix_Denifinition}] Definition of Polyline Path Mask;
			
			\item[Appendix~\ref{sec:appendix_Theorem}] Theorems and Proofs;
			
			\item[Appendix~\ref{sec:appendix_Preliminaries}] Complexity Analysis of Mamba2 Attention Form;
			
			\item[Appendix~\ref{sec:appendix_Complexity}] Complexity Analysis of Polyline Path Mask;
			
			\item[Appendix~\ref{sec:appendix_ComplexityMultiplication}] Complexity Analysis of Polyline Path Mask Multiplication;
			
			\item[Appendix~\ref{sec:appendix_Applications}] Applications of Polyline Path Masked Attention;
			
		\end{itemize}

		\item[Appendix~\ref{sec:appendix_ExperimentalDetails}] Experimental Details;
			\begin{itemize}
				\item[Appendix~\ref{sec:appendix_ImplementationDetails}] Implementation Details;
				
				\item[Appendix~\ref{sec:appendix_Classification}] Training Settings for ImageNet-1K;
				
				\item[Appendix~\ref{sec:appendix_Downstream}] Training Settings for Downstream Tasks;
				
				\item[Appendix~\ref{sec:appendix_Throughput}] Throughput Comparison;
				
				\item[Appendix~\ref{sec:appendix_Visualization}] Visualization;
				
			\end{itemize}

		\item[Appendix~\ref{sec:appendix_Discussion}] Discussion;
			\begin{itemize}
				\item[Appendix~\ref{sec:appendix_DiscussionSelectivity}] Selectivity of Polyline Path Mask;
				
				\item[Appendix~\ref{sec:appendix_Discussion3DExtension}] 3D Extension of Polyline Path Mask;
				
				\item[Appendix~\ref{sec:appendix_DiscussionLimitations}] Limitations;
				
			\end{itemize}

	\end{itemize}

	\section{Efficient Computation Theory for Polyline Path Mask Applications}
	\label{sec:appendix_EfficientComputationTheory}

	\subsection{Notations}
	\label{sec:appendix_Notations}
	
	Following Mamba2~\cite{mamba2}, we employ a large number of notations both for clarity and as a central tool in stating and proving our 
	theorems, including:
	\begin{itemize} [leftmargin=2em]
		\item \textbf{Dimensions.} 
		We generally use $N$, $H$, $W$, $C$, $D$ as the superscript letters of $\mathbb{R}$ to denote the sequence length, the height of the feature map, the width of the feature map, channel number, and hidden state dimension, respectively.
		The sequence length of the 2D feature map (i.e., the number of tokens) is $N = H \times W$. 
		
		\item \textbf{Matrices and Tensors.} 
		Following convention, we use non-bolded lowercase letters, bolded lowercase letters, bolded uppercase letters, and bolded calligraphy letters to denote scalars, vectors, matrices, and 3D or higher dimensional tensors, respectively.
		
		\item \textbf{Indexing.} 
		We use indexing $i:j$ to refer to the range $i+1, i+2, \dots, j$ when $i<j$ and $i, i-1, \dots, j+1$ when $i>j$.
		For example, for any scalar $a$, we let $a_{i:j}$ for $i<j$ denote the sequence $(a_{i+1}, a_{i+2}, \dots, a_{j})$.
		For shorthand, we let $a^{\times}_{i:j}$ for $i<j$ denote the product $a_{i+1} \times a_{i+2} \times \dots \times a_{j}$\footnote{In some contexts, it is always clear that the notation $a_{i:j}$ means $a^{\times}_{i:j}$ , and the superscript is omitted.}.
		We let $a^{\times}_{j:i}=a^{\times}_{i:j}$ for $j>i$.

		\item \textbf{Tensor Unfolding.} 
		We use the operator ${\mathtt{vec}}({\cdot})$ to vectorize a matrix by stacking its columns and the operator ${\mathtt{unvec}}({\cdot})$ 
		as its inverse operation.
		We use the operator ${\mathtt{unfold}}({\cdot})$ to unfolds a 4D tensor ${\bm {\mathcal L}} \! \in \! \mathbb{R}^{H \! \times \! W \! \times \! H \! \times \! W}$ to a 2D matrix ${\bm L} \! \in \! \mathbb{R}^{HW \! \times \! H W}$, where ${[\bm{L}]_{(i-1)\! \times \! W+j, (k-1)\! \times \! W+l}} = {\bm {\mathcal L}}_{i,j,k,l}$, and the operator ${\mathtt{fold}}({\cdot})$ as its inverse operation.
		
		\item \textbf{Tensor Multiplication.} For 2D matrices, we use the symbol $\times$ to denote the matrix multiplication and the symbol 
		$\odot$ to denote the Hadamard (element-wise) multiplication.
		For multiplication operations involving 3D or higher dimensional tensors, we use the Einstein summation notation 
		${\mathtt{einsum}}({\cdot})$ to denote the tensor multiplication on the given dimension, which is commonly used in modern tensor 
		libraries such as PyTorch.
		For example, ${\mathtt{einsum}}({\mathtt{'nc,mc \to nm'}}, {{\bm Q}, {\bm K}})$ denotes the matrix multiplication ${{\bm Q} \times {\bm 
		K}^{\top}}$.
		
	\end{itemize}

	\subsection{Definition of Polyline Path Mask}
	\label{sec:appendix_Denifinition}

	\begin{figure}[!t]
		\centering
		\includegraphics[width=0.95\textwidth]{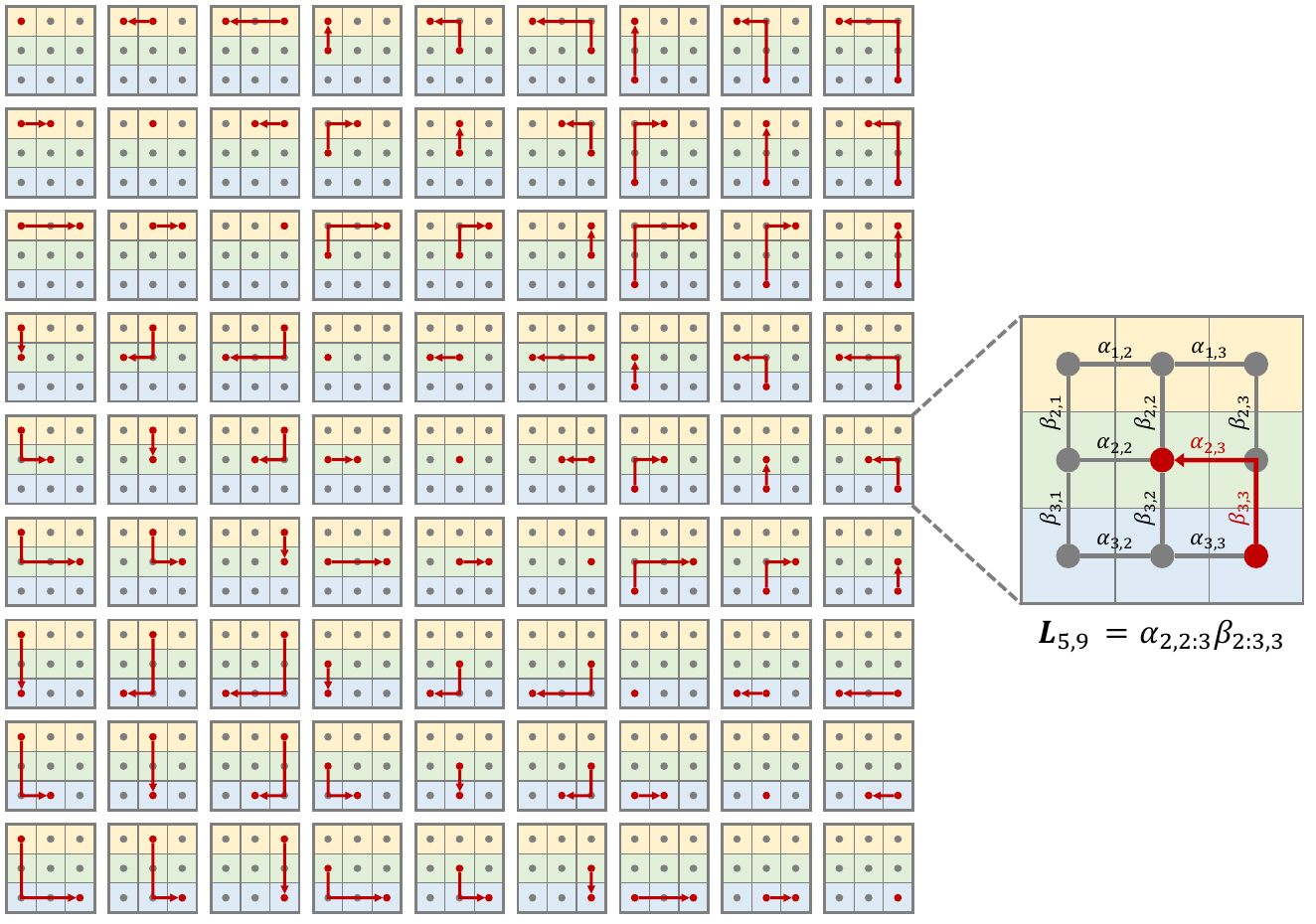}
		\caption{
			An illustration of the V2H polyline path scanning on a $3 \! \times \! 3$ grid (with a total of 9 tokens). 
			There are 81 scanning paths. Each scanning path (red polyline) corresponds to a decay weight in the polyline path mask $\bm L$.
		}
		\label{fig:appendix_scanning}
		\vspace{-5mm}
	\end{figure}

	For each token pair $(\bm{x}_{i,j}, \bm{x}_{k,l})$ in the 2D grid, the decay weight of the vertical-then-horizontal (V2H) polyline path from 
	${\bm{x}_{i,j}}$ to ${\bm{x}_{k, l}}$ is defined as ${{\bm{\mathcal{L}}}_{i,j,k,l}}$, which is the product of all decay factors along that 
	path, i.e., 
	\begin{equation}\label{eq:appendix_PolylinePathMaskDefinition2}
		\small
		{\bm{\mathcal{L}}}_{i,j,k,l} = {\alpha _{i,j:l}}{\beta _{i:k,l}}, ~\mbox{where}~ {\alpha_{i,j:l}}\! =\! \begin{cases}
			{ \prod_{n=j+1}^l\alpha_{i, n}  } & {j < l}\\
			~~~~~1 & {j = l}\\
			{ \prod_{n=l+1}^j\alpha_{i, n} } & {j > l}
		\end{cases}, ~
		{\beta_{i:k,l}} \!=\! \begin{cases}
			{ \prod_{n=i+1}^k \beta_{n, l} } & {i < k}\\
			~~~~1 & {i = k}\\
			{ \prod_{n=k+1}^i\beta_{n, l} } & {i > k}
		\end{cases},
 	\end{equation}
	where $\alpha_{i,j:l}$ and $\beta_{i:k,l}$ are horizontal and vertical decay factors bounded in the range $[0, 1]$.
	For convenience, we unfold the 4D tensor ${\bm{\mathcal{L}}} \! \in \!  \mathbb{R}^{H \! \times \! W \! \times \!H \! \times \! W}$ into a 2D matrix as the polyline path mask ${\bm{L}} \! \in \!  \mathbb{R}^{HW \! \times \! HW}$, i.e.,
	\begin{equation}\label{eq:appendix_unfold}
		{\bm{L}}\! = \!\mathrm{unfold}({\bm{\mathcal{L}}}). 
	\end{equation}
	\begin{equation}\label{eq:appendix_PolylinePathMaskDefinition3}
		\small
		\setlength{\arraycolsep}{1.2pt}
		\resizebox{\textwidth}{!}{$
			{\bm{L}} \!\! = \!\! {\begin{bmatrix}
					{\begin{matrix}
							{{\alpha _{1,1:1}}{\beta _{1:1,1}}}&{{\alpha _{1,1:2}}{\beta _{1:1,2}}}& \cdots &{{\alpha _{1,1:W}}{\beta 
							_{1:1,W}}}\\
							{{\alpha _{1,2:1}}{\beta _{1:1,1}}}&{{\alpha _{1,2:2}}{\beta _{1:1,2}}}& \cdots &{{\alpha _{1,2:W}}{\beta _{1:1,W}}}\\
							\vdots & \vdots & \ddots & \vdots \\
							{{\alpha _{1,W:1}}{\beta _{1:1,1}}}&{{\alpha _{1,W:2}}{\beta _{1:1,2}}}& \cdots &{{\alpha _{1,W:W}}{\beta _{1:1,W}}}
					\end{matrix}}& \cdots 
				&{\begin{matrix}
							{{\alpha _{1,1:1}}{\beta _{1:H,1}}}&{{\alpha _{1,1:2}}{\beta _{1:H,2}}}& \cdots &{{\alpha _{1,1:W}}{\beta _{1:H,W}}}\\
							{{\alpha _{1,2:1}}{\beta _{1:H,1}}}&{{\alpha _{1,2:2}}{\beta _{1:H,2}}}& \cdots &{{\alpha _{1,2:W}}{\beta _{1:H,W}}}\\
							\vdots & \vdots & \ddots & \vdots \\
							{{\alpha _{1,W:1}}{\beta _{1:H,1}}}&{{\alpha _{1,W:2}}{\beta _{1:H,2}}}& \cdots &{{\alpha _{1,W:W}}{\beta _{1:H,W}}}
					\end{matrix}}\\
					\vdots & \ddots & \vdots \\
					{\begin{matrix}
							{{\alpha _{H,1:1}}{\beta _{H:1,1}}}&{{\alpha _{H,2:1}}{\beta _{H:1,1}}}& \cdots &{{\alpha _{H,W:1}}{\beta _{H:1,1}}}\\
							{{\alpha _{H,1:2}}{\beta _{H:1,2}}}&{{\alpha _{H,2:2}}{\beta _{H:1,2}}}& \cdots &{{\alpha _{H,W:2}}{\beta _{H:1,2}}}\\
							\vdots & \vdots & \ddots & \vdots \\
							{{\alpha _{H,1:W}}{\beta _{H:1,W}}}&{{\alpha _{H,2:W}}{\beta _{H:1,W}}}& \cdots &{{\alpha _{H,W:W}}{\beta _{H:1,W}}}
					\end{matrix}}& \cdots &
				{\begin{matrix}
							{{\alpha _{H,1:1}}{\beta _{H:H,1}}}&{{\alpha _{H,1:2}}{\beta _{H:H,2}}}& \cdots &{{\alpha _{H,1:W}}{\beta _{H:H,W}}}\\
							{{\alpha _{H,2:1}}{\beta _{H:H,1}}}&{{\alpha _{H,2:2}}{\beta _{H:H,2}}}& \cdots &{{\alpha _{H,2:W}}{\beta _{H:H,W}}}\\
							\vdots & \vdots & \ddots & \vdots \\
							{{\alpha _{H,W:1}}{\beta _{H:H,1}}}&{{\alpha _{H,W:2}}{\beta _{H:H,2}}}& \cdots &{{\alpha _{H,W:W}}{\beta _{H:H,W}}}
					\end{matrix}}
			\end{bmatrix}}
			$}
	\end{equation}

	An  intuitive example illustrating the polyline path scanning on a $3 \! \times \! 3$ grid is presented in Fig.~\ref{fig:appendix_scanning}. 
	For the 9 tokens in the 2D grid, there are 81 V2H scanning paths connecting them.
	The V2H scanning path between each token pair is marked by the red polyline, which corresponds to a decay weight in the polyline path mask 
	$\bm L$.
	The V2H polyline path mask ${\bm L} \! \in \!  \mathbb{R}^{9 \! \times \! 9}$, constructed based on the scanning paths in 	
	Fig.~\ref{fig:appendix_scanning}, is defined as:
	\begin{equation}\label{eq:appendix_PolylinePathMaskDefinition2}
		\small
		\setlength{\arraycolsep}{3pt}
		\resizebox{\textwidth}{!}{$
			{\bm L} \!\! = \!\! {\begin{bmatrix}
					{\begin{matrix}
							{{\alpha _{1,1:1}}{\beta _{1:1,1}}}&{{\alpha _{1,1:2}}{\beta _{1:1,2}}}&{{\alpha _{1,1:3}}{\beta _{1:1,3}}}\\
							{{\alpha _{1,2:1}}{\beta _{1:1,1}}}&{{\alpha _{1,2:2}}{\beta _{1:1,2}}}&{{\alpha _{1,2:3}}{\beta _{1:1,3}}}\\
							{{\alpha _{1,3:1}}{\beta _{1:1,1}}}&{{\alpha _{1,3:2}}{\beta _{1:1,2}}}&{{\alpha _{1,3:3}}{\beta _{1:1,3}}}
					\end{matrix}}&
				{\begin{matrix}
							{{\alpha _{1,1:1}}{\beta _{1:2,1}}}&{{\alpha _{1,1:2}}{\beta _{1:2,2}}}&{{\alpha _{1,1:3}}{\beta _{1:2,3}}}\\
							{{\alpha _{1,2:1}}{\beta _{1:2,1}}}&{{\alpha _{1,2:2}}{\beta _{1:2,2}}}&{{\alpha _{1,2:3}}{\beta _{1:2,3}}}\\
							{{\alpha _{1,3:1}}{\beta _{1:2,1}}}&{{\alpha _{1,3:2}}{\beta _{1:2,2}}}&{{\alpha _{1,3:3}}{\beta _{1:2,3}}}
					\end{matrix}}&{\begin{matrix}
							{{\alpha _{1,1:1}}{\beta _{1:3,1}}}&{{\alpha _{1,1:2}}{\beta _{1:3,2}}}&{{\alpha _{1,1:3}}{\beta _{1:3,3}}}\\
							{{\alpha _{1,2:1}}{\beta _{1:3,1}}}&{{\alpha _{1,2:2}}{\beta _{1:3,2}}}&{{\alpha _{1,2:3}}{\beta _{1:3,3}}}\\
							{{\alpha _{1,3:1}}{\beta _{1:3,1}}}&{{\alpha _{1,3:2}}{\beta _{1:3,2}}}&{{\alpha _{1,3:3}}{\beta _{1:3,3}}}
					\end{matrix}}\\
					{\begin{matrix}
							{{\alpha _{2,1:1}}{\beta _{2:1,1}}}&{{\alpha _{2,1:2}}{\beta _{2:1,2}}}&{{\alpha _{2,1:3}}{\beta _{2:1,3}}}\\
							{{\alpha _{2,2:1}}{\beta _{2:1,1}}}&{{\alpha _{2,2:2}}{\beta _{2:1,2}}}&{{\alpha _{2,2:3}}{\beta _{2:1,3}}}\\
							{{\alpha _{2,3:1}}{\beta _{2:1,1}}}&{{\alpha _{2,3:2}}{\beta _{2:1,2}}}&{{\alpha _{2,3:3}}{\beta _{2:1,3}}}
					\end{matrix}}&{\begin{matrix}
							{{\alpha _{2,1:1}}{\beta _{2:2,1}}}&{{\alpha _{2,1:2}}{\beta _{2:2,2}}}&{{\alpha _{2,1:3}}{\beta _{2:2,3}}}\\
							{{\alpha _{2,2:1}}{\beta _{2:2,1}}}&{{\alpha _{2,2:2}}{\beta _{2:2,2}}}&{{\alpha _{2,2:3}}{\beta _{2:2,3}}}\\
							{{\alpha _{2,3:1}}{\beta _{2:2,1}}}&{{\alpha _{2,3:2}}{\beta _{2:2,2}}}&{{\alpha _{2,3:3}}{\beta _{2:2,3}}}
					\end{matrix}}&{\begin{matrix}
							{{\alpha _{2,1:1}}{\beta _{2:3,1}}}&{{\alpha _{2,1:2}}{\beta _{2:3,2}}}&{{\alpha _{2,1:3}}{\beta _{2:3,3}}}\\
							{{\alpha _{2,2:1}}{\beta _{2:3,1}}}&{{\alpha _{2,2:2}}{\beta _{2:3,2}}}&{{\alpha _{2,2:3}}{\beta _{2:3,3}}}\\
							{{\alpha _{2,3:1}}{\beta _{2:3,1}}}&{{\alpha _{2,3:2}}{\beta _{2:3,2}}}&{{\alpha _{2,3:3}}{\beta _{2:3,3}}}
					\end{matrix}}\\
					{\begin{matrix}
							{{\alpha _{3,1:1}}{\beta _{3:1,1}}}&{{\alpha _{3,1:2}}{\beta _{3:1,2}}}&{{\alpha _{3,1:3}}{\beta _{3:1,3}}}\\
							{{\alpha _{3,2:1}}{\beta _{3:1,1}}}&{{\alpha _{3,2:2}}{\beta _{3:1,2}}}&{{\alpha _{3,2:3}}{\beta _{3:1,3}}}\\
							{{\alpha _{3,3:1}}{\beta _{3:1,1}}}&{{\alpha _{3,3:2}}{\beta _{3:1,2}}}&{{\alpha _{3,3:3}}{\beta _{3:1,3}}}
					\end{matrix}}&{\begin{matrix}
							{{\alpha _{3,1:1}}{\beta _{3:2,1}}}&{{\alpha _{3,1:2}}{\beta _{3:2,2}}}&{{\alpha _{3,1:3}}{\beta _{3:2,3}}}\\
							{{\alpha _{3,2:1}}{\beta _{3:2,1}}}&{{\alpha _{3,2:2}}{\beta _{3:2,2}}}&{{\alpha _{3,2:3}}{\beta _{3:2,3}}}\\
							{{\alpha _{3,3:1}}{\beta _{3:2,1}}}&{{\alpha _{3,3:2}}{\beta _{3:2,2}}}&{{\alpha _{3,3:3}}{\beta _{3:2,3}}}
					\end{matrix}}&{\begin{matrix}
							{{\alpha _{3,1:1}}{\beta _{3:3,1}}}&{{\alpha _{3,1:2}}{\beta _{3:3,2}}}&{{\alpha _{3,1:3}}{\beta _{3:3,3}}}\\
							{{\alpha _{3,2:1}}{\beta _{3:3,1}}}&{{\alpha _{3,2:2}}{\beta _{3:3,2}}}&{{\alpha _{3,2:3}}{\beta _{3:3,3}}}\\
							{{\alpha _{3,3:1}}{\beta _{3:3,1}}}&{{\alpha _{3,3:2}}{\beta _{3:3,2}}}&{{\alpha _{3,3:3}}{\beta _{3:3,3}}}
					\end{matrix}}
			\end{bmatrix}}
	$}
	\end{equation}

	\subsection{Theorems and Proofs}
	\label{sec:appendix_Theorem}

	\begin{figure}[!t]
		\centering
		\includegraphics[width=\linewidth]{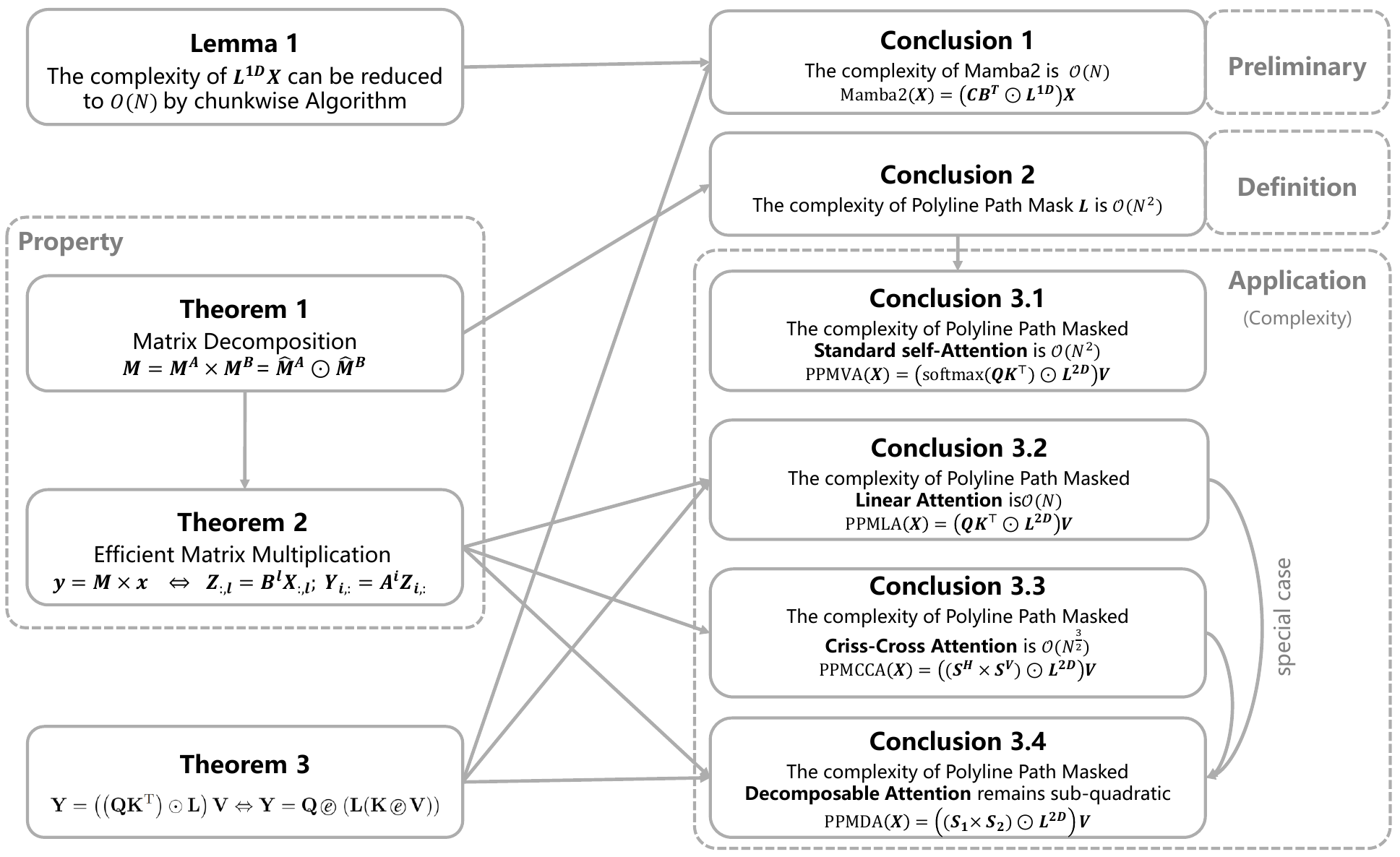}
		\caption{
			An overall illustration of the efficient computation theory and corresponding applications.
		}
		\label{fig:Appendix_Theory}
		\vspace{-5mm}
	\end{figure}
	
	In this section, we present three theorems and their proofs, which will be used to optimize the computational complexity in the following 
	applications~\ref{sec:appendix_Applications}.
	Specifically, we present a decomposition theorem~\ref{theorem:appendix_Decomposability} for matrices structured as the polyline path mask 
	$\bm L$.
	Based on Theorem~\ref{theorem:appendix_Decomposability}, we present an efficient matrix multiplication 
	theorem~\ref{theorem:appendix_EquivalentMatrixMultiplication} for performing multiplication on the polyline path mask $\bm L$.
	Then, we present an equivalent computation theorem~\ref{theorem:appendix_theorem1} for the masked linear attention.
	An overview illustration is provided in Fig.~\ref{fig:Appendix_Theory}, which summarizes the theorems and their corresponding applications 
	in polyline path masked attention.

	\vspace{4mm}

	Note that the polyline path mask $\bm L$ defined in Eq.~\eqref{eq:appendix_PolylinePathMaskDefinition3} is a matrix with special structures.
	Here, we present a decomposition theorem for matrices structured as $\bm L$.
	
	\setcounter{theorem}{0}
	\begin{theorem}[Matrix Decomposition] \label{theorem:appendix_Decomposability}
		For any matrix ${\bm M} \in \! {\mathbb{R}^{HW \! \times \! HW}}$ and ${\bm{\mathcal{M}}} = {\rm{fold}}\left( {\bm{M}} \right) $,  if  for $\forall i,j,k,l$, $\exists  {{\bm{A}^i}}  \! \in \! \mathbb{R}^{W \! \times \! W}~\mbox{and}~ {{\bm{B}^l}} \! \in \! \mathbb{R}^{H \! \times \! H}$, s.t., 
		${{\bm{\mathcal{M}}}_{i,j,k,l}} = {\left[ {{{\bm A}^i}} \right]_{j,l}} \times {\left[ {{{\bm B}^l}} \right]_{i,k}}$, 
		then $\bm M$ can be decomposed as:
		\begin{equation}\label{eq:appendix_Decomposability}
			{{{ {\bm{M}}}}} = {\bm{M}}^{A} \times {\bm{M}}^{B} = {{\bm{\hat M}}^A} \odot {{\bm{\hat M}}^B},
		\end{equation}
		where ${\bm M}^A, {\bm M}^B, {\bm {\hat M}}^A, {\bm {\hat M}}^B \! \in \! {\mathbb{R}^{HW \! \times \! HW}}$, 
		which satisfy
		\begin{equation}\label{eq:appendix_Decomposability3}
			{{\bm M}^A} \! = \! {\rm{unfold}}( {{{\bm{\mathcal{M}}}^A}} ), 
			{{\bm M}^B} \!\! = \! {\rm{unfold}} ( {{{\bm{\mathcal{M}}}^B}} ), \! ~\mbox{s.t.},~\!
			{\bm{\mathcal{M}}}_{{i,:,k,:}}^A \!\! = \! \begin{cases}
				{{\bm{A}^i}}  \!\!&  \!{k \!= \! i}\\
				{0}      \!\!&  \!{k \!\ne \! i}
			\end{cases}\!,  \,
			{\bm{\mathcal{M}}}_{{:,j,:,l}}^B \!\! = \! \begin{cases}
				{{\bm{B}^l}}  \!\!&  \!{j \!= \! l}\\
				{0}           \!\!&  \!{j \!\ne \! l}
			\end{cases}\!,
		\end{equation}
		\begin{equation}\label{eq:appendix_HadamardDecomposion2}
			{{\bm{\hat M}}^A} \! = \! {\rm{unfold}}( {{{\bm{\mathcal{\hat M}}}^A}} ), \;
			{{\bm{\hat M}}^B} \!\! = \! {\rm{unfold}} ( {{{\bm{\mathcal{\hat M}}}^B}} ),  ~~\mbox{s.t.},~~
			{\bm{\mathcal{\hat M}}}_{{i,:,k,:}}^A  \! = \! {{\bm{A}^i}}, \;
			{\bm{\mathcal{\hat M}}}_{{:,j,:,l}}^B \! = \! {{\bm{B}^l}}.
		\end{equation}
	\end{theorem}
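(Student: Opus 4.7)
The plan is to reduce both identities to an entry-wise verification once the tensor-to-matrix unfolding map is expanded. The hypothesis $\bm{\mathcal{M}}_{i,j,k,l}=[\bm{A}^i]_{j,l}\cdot[\bm{B}^l]_{i,k}$ separates the 4D tensor into a factor that depends on $(i,j,l)$ but not $k$, and a factor that depends on $(i,k,l)$ but not $j$. The auxiliary tensors $\hat{\bm{\mathcal{M}}}^A,\hat{\bm{\mathcal{M}}}^B$ and $\bm{\mathcal{M}}^A,\bm{\mathcal{M}}^B$ are precisely designed to isolate these two factors in two different ways, so both decompositions should drop out of direct substitution.

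For the Hadamard form, I would first observe that by the definition of $\hat{\bm{\mathcal{M}}}^A$ every slice $\hat{\bm{\mathcal{M}}}^A_{i,:,k,:}$ equals $\bm{A}^i$, so $\hat{\bm{\mathcal{M}}}^A_{i,j,k,l}=[\bm{A}^i]_{j,l}$ is independent of $k$. Symmetrically, $\hat{\bm{\mathcal{M}}}^B_{i,j,k,l}=[\bm{B}^l]_{i,k}$ is independent of $j$. Their entrywise product equals $\bm{\mathcal{M}}_{i,j,k,l}$ by hypothesis, and since $\mathrm{unfold}$ only relabels positions it commutes with $\odot$, yielding $\bm{M}=\hat{\bm{M}}^A\odot\hat{\bm{M}}^B$ immediately.

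For the matrix-product form, I would unpack the sparsity patterns implied by the construction: $[\bm{M}^A]_{(i-1)W+j,\,(k-1)W+l}=[\bm{A}^i]_{j,l}\,\delta_{ik}$ shows that $\bm{M}^A$ is block-diagonal with diagonal blocks $\bm{A}^i$, while $[\bm{M}^B]_{(i-1)W+j,\,(k-1)W+l}=[\bm{B}^l]_{i,k}\,\delta_{jl}$ is nonzero only when the within-block columns of the two sides agree. Computing the product with an intermediate flat index $(m-1)W+n$ produces the double sum $\sum_{m,n}[\bm{A}^i]_{j,n}\delta_{im}\,[\bm{B}^l]_{m,k}\delta_{nl}$, and the two Kronecker deltas collapse it to the single term $m=i,\,n=l$. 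This yields $[\bm{A}^i]_{j,l}\,[\bm{B}^l]_{i,k}=\bm{\mathcal{M}}_{i,j,k,l}$, which matches the corresponding entry of $\bm{M}$.

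The main obstacle is purely notational rather than conceptual: four indices must be tracked through two distinct 4D-to-2D unfoldings and through two very different sparsity patterns, since $\bm{M}^A$ pins the block-row equal to the block-column while $\bm{M}^B$ pins the intra-block column across both sides, and it is easy to confuse which Kronecker delta is introduced at which step. Once this bookkeeping is disciplined, no analytic machinery is needed and the theorem reduces to elementary summation; the same argument will then apply verbatim to the polyline path mask $\bm{L}$ (and to $\tilde{\bm{L}}$) with $[\bm{A}^i]_{j,l}=\alpha_{i,j:l}$ and $[\bm{B}^l]_{i,k}=\beta_{i:k,l}$.
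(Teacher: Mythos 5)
Your proposal is correct and follows essentially the same route as the paper: an entry-wise verification of the Hadamard identity using the fact that unfolding merely relabels indices, and an entry-wise expansion of the matrix product in which the sparsity of $\bm{M}^A$ (nonzero only when the block-row equals the block-column, $k=i$) and of $\bm{M}^B$ (nonzero only when the intra-block indices agree, $j=l$) collapses the double sum to the single term $m=i$, $n=l$, giving $[\bm{A}^i]_{j,l}[\bm{B}^l]_{i,k}=\bm{\mathcal{M}}_{i,j,k,l}$. The paper writes the collapse by explicitly splitting off the vanishing summands rather than via Kronecker deltas, but the argument is identical.
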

	
	\begin{proof} Let us first prove ${\bm{M}}^{A} \times {\bm{M}}^{B} = {\bm{M}}$ in Eq.~\eqref{eq:appendix_Decomposability}. 
		For clarity, let
		$ i = \left\lfloor {{u \mathord{\left/ {\vphantom {u W}} \right. \kern-\nulldelimiterspace} W}} \right\rfloor + 1$, $ j = u \bmod W$, 
		$ k = \left\lfloor {{v \mathord{\left/ {\vphantom {v W}} \right. \kern-\nulldelimiterspace} W}} \right\rfloor + 1$, $ l = v \bmod W$, 
		$ m = \left\lfloor {{w \mathord{\left/ {\vphantom {w W}} \right. \kern-\nulldelimiterspace} W}} \right\rfloor + 1$, $ n = w \bmod W$. 
		For $u = 1:HW$ and $v = 1:HW$, we have
	\begin{equation}
		\begin{array}{l}
			\begin{aligned}
			\sum\limits_{w = 1}^{HW} {{\bm M}_{u,w}^A {\bm M}_{w,v}^B} &= \sum\limits_{m = 1}^H {\sum\limits_{n = 1}^W {{\bm{\mathcal M}}_{i,j,m,n}^A {\bm{\mathcal M}}_{m,n,k,l}^B} } \\
			&= \sum\limits_{m = 1}^H {\left( {{\bm{\mathcal{M}}}_{i,j,m,l}^A{\bm{\mathcal{M}}}_{m,l,k,l}^B + \sum\limits_{n = 1,n \ne l}^W {{\bm{\mathcal{M}}}_{i,j,m,n}^A{\bm{\mathcal{M}}}_{m,n,k,l}^B} } \right)} \\
			&= \sum\limits_{m = 1}^H {{\bm{\mathcal{M}}}_{i,j,m,l}^A{\bm{\mathcal{M}}}_{m,l,k,l}^B} \\
			&= {\bm{\mathcal{M}}}_{i,j,i,l}^A{\bm{\mathcal{M}}}_{i,l,k,l}^B + \sum\limits_{m = 1,m \ne i}^H {{\bm{\mathcal{M}}}_{i,j,m,l}^A{\bm{\mathcal{M}}}_{m,l,k,l}^B} \\
			&= {\bm{\mathcal{M}}}_{i,j,i,l}^A{\bm{\mathcal{M}}}_{i,l,k,l}^B = {\bm A}_{j,l}^i {\bm B}_{i,k}^l = {{\bm{\mathcal{M}}}_{i,j,k,l}}\\
			&= {{\bm M}_{u,v}}.
		\end{aligned}
		\end{array}
	\end{equation}
	According to Eq.~\eqref{eq:appendix_HadamardDecomposion2}, we have
	\begin{equation}
		\begin{array}{l}
			\begin{aligned}
				{\hat {\bm M}}_{u,v}^A {\hat {\bm M}}_{u,v}^B 
				= {\bm{\mathcal{\hat M}}}_{i,j,k,l}^A {\bm{\mathcal{\hat M}}}_{i,j,k,l}^B 
				= {\left[ {{{\bm A}^i}} \right]_{j,l}}{\left[ {{{\bm B}^l}} \right]_{i,k}} 
				= {{\cal {\bm {\mathcal M}}}_{i,j,k,l}} 
				= {{\bm M}_{u,v}}.
		\end{aligned}
		\end{array}
	\end{equation}
	Thus, Theorem~\ref{theorem:appendix_Decomposability} is proven.
	\end{proof}

	For matrices of the form given in Eq.~\eqref{eq:appendix_Decomposability3}, when performing multiplication operations, we have:

	\begin{theorem}[Efficient Matrix Multiplication]\label{theorem:appendix_EquivalentMatrixMultiplication}
		For matrices ${\bm{M}}^{A}, {\bm{M}}^{B}$ defined in Eq.~\eqref{eq:appendix_Decomposability3}, 
		$\forall \bm{x} \! \in \! \mathbb{R}^{HW}$, the following equation holds:
		\begin{equation}\label{eq:appendix_y_Lx3}
			\bm{y} = {{\bm{M}}^{A}} \! \times \! {{\bm{M}}^{B}} \! \times \! \bm{x} 
			\quad  \Leftrightarrow \quad 
			{\bm{Z}_{:,l}} = {\bm{B}^l}\! \times \!{\bm{X}_{:,l}},\; {\bm{Y}_{i,:}} = {\bm{A}^i}\! \times \!{\bm{Z}_{i,:}},
		\end{equation}
		where $\bm{y} \! \in \! \mathbb{R}^{HW}$, $\bm{X} \! = \!{\mathrm{unvec}}(\bm{x}) \! \in \! \mathbb{R}^{H \! \times \! W}$, $\bm{Y} \!= 
		\!{\mathrm{unvec}}(\bm{y}) \! \in \! \mathbb{R}^{H \! \times \! W}$, $\bm{Z} \! \in \! \mathbb{R}^{H \! \times \! W}$.
	\end{theorem}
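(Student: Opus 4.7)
The plan is to unfold both matrix--vector products in Eq.~\eqref{eq:appendix_y_Lx3} directly by expanding each entry of the product via the 4D-tensor forms $\bm{\mathcal{M}}^A$ and $\bm{\mathcal{M}}^B$. First I would introduce the intermediate vector $\bm{z} = \bm{M}^B \bm{x}$ and write it in pair-indexed form $\bm{z}_{(i,j)} = \sum_{k,l} \bm{\mathcal{M}}^B_{i,j,k,l}\, \bm{X}_{k,l}$. The sparsity condition in Eq.~\eqref{eq:appendix_Decomposability3}, namely $\bm{\mathcal{M}}^B_{:,j,:,l} = 0$ whenever $j \neq l$, collapses the inner double sum to a single sum over $k$, leaving $\bm{Z}_{i,j} = \sum_k [\bm{B}^j]_{i,k}\, \bm{X}_{k,j}$. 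This is exactly the column-wise identity $\bm{Z}_{:,l} = \bm{B}^l\, \bm{X}_{:,l}$.

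The same maneuver applied to $\bm{y} = \bm{M}^A \bm{z}$ handles the second step: the sparsity $\bm{\mathcal{M}}^A_{i,:,k,:} = 0$ for $k \neq i$ collapses the four-fold sum to $\bm{Y}_{i,j} = \sum_l [\bm{A}^i]_{j,l}\, \bm{Z}_{i,l}$, which is the row-wise identity $\bm{Y}_{i,:} = \bm{A}^i\, \bm{Z}_{i,:}$ (viewing $\bm{Z}_{i,:}$ as a column vector). Composing these two identities gives $\bm{y} = \bm{M}^A \bm{M}^B \bm{x}$ and establishes the forward direction of the equivalence; the reverse direction then follows because the componentwise formulas on the two sides coincide.

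The main obstacle is purely notational bookkeeping: the paper mixes the $\mathrm{unfold}/\mathrm{fold}$ convention, which pairs $(i,j)$ as the row index $(i-1)W + j$, with $\mathrm{vec}/\mathrm{unvec}$, defined via column stacking. To make the column-then-row decomposition read out correctly, I would fix a single consistent indexing convention for $\bm{x}$, $\bm{z}$, $\bm{X}$, $\bm{Z}$ that matches the unfolding of $\bm{\mathcal{M}}^A$ and $\bm{\mathcal{M}}^B$, and then verify that ``the $l$-th column of $\bm{X}$'' and ``the $i$-th row of $\bm{Z}$'' really correspond to the index ranges that survive after the two sparsity collapses above. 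Once this convention is pinned down, both directions of the equivalence reduce to the two displayed single-sum identities, and no further calculation is needed.
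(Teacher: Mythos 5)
Your proposal is correct and follows essentially the same route as the paper's own proof: introduce $\bm{z}=\bm{M}^B\bm{x}$, expand each entry through the folded tensors $\bm{\mathcal{M}}^A,\bm{\mathcal{M}}^B$, and let the sparsity conditions of Eq.~\eqref{eq:appendix_Decomposability3} collapse the double sums to the column-wise and row-wise identities. Your remark about reconciling the row-major $\mathrm{unfold}$ pairing with the column-stacking $\mathrm{vec}$ convention is a fair bookkeeping point that the paper glosses over, but it does not change the argument.
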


	\begin{proof}
		The left part of Eq.~\eqref{eq:appendix_y_Lx3} can be calculated by ${\bm z} = {\bm M}^B \times {\bm x}$ and ${\bm y} = {\bm M}^A \times {\bm z}$.
		For clarity, let
		$ i = \left\lfloor {{u \mathord{\left/ {\vphantom {u W}} \right. \kern-\nulldelimiterspace} W}} \right\rfloor + 1$, $ j = u \bmod W$, 
		$ k = \left\lfloor {{v \mathord{\left/ {\vphantom {v W}} \right. \kern-\nulldelimiterspace} W}} \right\rfloor + 1$, $ l = v \bmod W$. 
		For $u=1:HW$, we have
		\begin{equation} \label{eq:appendix_y_Lx4}
			\begin{array}{l}
				\begin{aligned}
					{{\bm z}_u} &= \sum\limits_{v = 1}^{HW} {{\bm M}_{u,v}^B \times {{\bm x}_v}}  
						= \sum\limits_{k = 1}^H {\sum\limits_{l = 1}^W {{\bm{\mathcal{M}}}_{i,j,k,l}^B{{\bm X}_{k,l}}} } \\
					&= \sum\limits_{k = 1}^H {{\bm{\mathcal{M}}}_{i,j,k,j}^B{{\bm X}_{k,j}}}  
						+ \sum\limits_{k = 1}^H {\sum\limits_{l = 1,l \ne j}^W {{\bm{\mathcal{M}}}_{i,j,k,l}^B{{\bm X}_{k,l}}} } \\
					&= \sum\limits_{k = 1}^H {{\bm{\mathcal{M}}}_{i,j,k,j}^B{{\bm X}_{k,j}}}
						= \sum\limits_{k = 1}^H {{\bm B}_{i,k}^j {{\bm X}_{k,j}}}
						\\
					&= {{\bm Z}_{i,j}}.
				\end{aligned}
			\end{array}
		\end{equation}
	The final results in Eq.~\eqref{eq:appendix_y_Lx4} is equivalent to ${\bm{Z}_{:,j}} = {\bm{B}^j}\! \times \!{\bm{X}_{:,j}}$.
	Then, for $u=1:HW$, we have
	\begin{equation} \label{eq:appendix_y_Lx5}
		\begin{array}{l}
			\begin{aligned}
				{{\bm y}_u} &= \sum\limits_{v = 1}^{HW} {{\bm M}_{u,v}^A \times {{\bm z}_v}}  
						= \sum\limits_{k = 1}^H {\sum\limits_{l = 1}^W {{\bm{\mathcal{M}}}_{i,j,k,l}^A{{\bm Z}_{k,l}}} } \\
				&= \sum\limits_{l = 1}^W {{\bm{\mathcal{M}}}_{i,j,i,l}^A{{\bm Z}_{i,l}}}  + \sum\limits_{k = 1,k \ne i}^H {\sum\limits_{l = 1}^W {{\bm{\mathcal{M}}}_{i,j,k,l}^A{{\bm Z}_{k,l}}} } \\
				&= \sum\limits_{l = 1}^W {{\bm{\mathcal{M}}}_{i,j,i,l}^A{{\bm Z}_{i,l}}}  = \sum\limits_{l = 1}^W {{\bm A}_{j,l}^i{{\bm Z}_{i,l}}} \\
				&= {{\bm Y}_{i,j}}.
			\end{aligned}
		\end{array}
	\end{equation}
	The final results in Eq.~\eqref{eq:appendix_y_Lx5} is equivalent to ${\bm{Y}_{i,:}} = {\bm{A}^i}\! \times \!{\bm{Z}_{i,:}}$.
	Thus, Theorem~\ref{theorem:appendix_EquivalentMatrixMultiplication} is proven.
	\end{proof}

	\begin{theorem}\label{theorem:appendix_theorem1}
		For any matrices $\bm{Q},\bm{K} \! \in \! {\mathbb{R}}^{^{N \! \times \! D}}, \bm{L} \! \in \! {\mathbb{R}}^{^{N \! \times \! N}}, 
		\bm{V} \! \in \! {\mathbb{R}}^{^{N \! \times \! C}}$,
		the following equation holds:
		\begin{equation}\label{eq:appendix_theorem1}
			\begin{aligned}
				\bm{Y} = \left( {\left( {\bm{Q}{\bm{K}^{\top}}} \right) \odot \bm{L}} \right)\bm{V}
				\quad \Leftrightarrow \quad
				{\bm{\mathcal{K}}}^V &= {\mathtt{einsum}} \left({\mathtt{'nd,nc \to ndc'}}, {{\bm K}, {\bm V}}\right)\\
				{\bm{\mathcal{L}}}^{KV} &= {\mathtt{einsum}} \left({\mathtt{'mn,ndc \to mdc'}}, {{\bm L}, {\bm {\mathcal{K}}}^V}\right)\\
				{\bm{Y}} &= {\mathtt{einsum}} \left({\mathtt{'md,mdc \to mc'}}, {{\bm Q}, {\bm {\mathcal{L}}}^{KV}}\right),
			\end{aligned}
		\end{equation}
	\end{theorem}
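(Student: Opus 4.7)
The plan is to prove the equivalence by expanding both sides entry-wise and matching them through simple rearrangement of summations. The left-hand side is a standard masked-attention computation, and the right-hand side decomposes it into three successive tensor contractions by introducing one extra dimension (the $D$-dimension of $\bm{K}$) that is broadcast against $\bm{V}$ before the mask is applied.

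First, I would write out the $(m,c)$-entry of the left-hand side in scalar form. By the definition of matrix multiplication and the Hadamard product,
\begin{equation}
    \bm{Y}_{m,c} \;=\; \sum_{n=1}^{N} \bm{L}_{m,n}\,(\bm{Q}\bm{K}^\top)_{m,n}\,\bm{V}_{n,c}
    \;=\; \sum_{n=1}^{N}\sum_{d=1}^{D} \bm{L}_{m,n}\,\bm{Q}_{m,d}\,\bm{K}_{n,d}\,\bm{V}_{n,c}.
\end{equation}
Since all factors are scalars, I may freely reorder sums and products. Grouping the factors that depend on $n$ but not on $m$ gives the natural three-stage contraction. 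Define
\begin{equation}
    \bm{\mathcal{K}}^V_{n,d,c} \;=\; \bm{K}_{n,d}\,\bm{V}_{n,c},
\end{equation}
which is exactly the first \texttt{einsum} in~\eqref{eq:appendix_theorem1}. Next, contracting over $n$ against $\bm{L}$ yields
\begin{equation}
    \bm{\mathcal{L}}^{KV}_{m,d,c} \;=\; \sum_{n=1}^{N} \bm{L}_{m,n}\,\bm{\mathcal{K}}^V_{n,d,c},
\end{equation}
which matches the second \texttt{einsum}. Finally, contracting over $d$ against $\bm{Q}$ gives
\begin{equation}
    \sum_{d=1}^{D} \bm{Q}_{m,d}\,\bm{\mathcal{L}}^{KV}_{m,d,c} \;=\; \sum_{d=1}^{D}\sum_{n=1}^{N} \bm{Q}_{m,d}\,\bm{L}_{m,n}\,\bm{K}_{n,d}\,\bm{V}_{n,c} \;=\; \bm{Y}_{m,c},
\end{equation}
which is the third \texttt{einsum}. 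The forward direction is established by this chain of equalities, and the reverse direction is immediate since each step is an identity.

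There is no real mathematical obstacle here; the content of the theorem is purely the associativity and commutativity of the scalar sums and products inside the quadruple sum $\sum_{n,d}$, together with the decision to materialize the intermediate rank-3 tensors $\bm{\mathcal{K}}^V$ and $\bm{\mathcal{L}}^{KV}$. The only bookkeeping care needed is ensuring the index labels in each \texttt{einsum} string correspond to the summed axis; I would simply verify that \texttt{'nd,nc$\to$ndc'}, \texttt{'mn,ndc$\to$mdc'}, and \texttt{'md,mdc$\to$mc'} sum exactly over $\varnothing$, $n$, and $d$ respectively, matching the expansion above.
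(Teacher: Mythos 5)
Your proof is correct and follows essentially the same route as the paper's: both expand $\bm{Y}_{m,c}$ into the double sum $\sum_{n}\sum_{d}\bm{L}_{m,n}\bm{Q}_{m,d}\bm{K}_{n,d}\bm{V}_{n,c}$ and show the three \texttt{einsum} stages reproduce it by reordering the scalar sums. No gaps.
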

	\begin{proof}
		1) the left part of the Eq.~\eqref{eq:appendix_theorem1} can be rewritten as:
		\begin{equation}\label{eq:appendix_theorem1left1}
			\begin{array}{c}
				{\bm S} = {\bm Q}{{\bm K}^\top} \mbox{, ~~~~ where ~ }  {{\bm S}_{m,n}} = \sum\limits_{d = 1}^D {{{\bm Q}_{m,d}}{{\bm K}_{n,d}}}, \\
				{{\bm Y}_{m,c}} = \sum\limits_{n = 1}^N {{{\bm L}_{m,n}}{{\bm S}_{m,n}}{{\bm V}_{n,c}}} = \sum\limits_{n = 1}^N {\left( {{{\bm L}_{m,n}}\sum\limits_{d = 1}^D {{{\bm Q}_{m,d}}{{\bm K}_{n,d}}} } \right){{\bm V}_{n,c}}}.
			\end{array}
		\end{equation}

		2) the right part of the Eq.~\eqref{eq:appendix_theorem1} can be rewritten as:
		\begin{equation}\label{eq:appendix_theorem1right1}
			\begin{array}{l}
				\begin{aligned}
					{\bm{\mathcal{K}}}_{n,d,c}^V &= {{\bm K}_{n,d}}{{\bm V}_{n,c}},\\
					{\bm{\mathcal{L}}}_{m,d,c}^{KV} &= \sum\limits_{n = 1}^N {{{\bm L}_{m,n}}{\bm{\mathcal{K}}}_{n,d,c}^V}  = \sum\limits_{n = 1}^N {{{\bm L}_{m,n}}{{\bm K}_{n,d}}{{\bm V}_{n,c}}},\\
					{{\bm Y}_{m,c}} &= \sum\limits_{d = 1}^D {{{\bm Q}_{m,d}}{\bm{\mathcal{L}}}_{m,d,c}^{KV}}  
					= \sum\limits_{d = 1}^D {\left( {{{\bm Q}_{m,d}}\sum\limits_{n = 1}^N {{{\bm L}_{m,n}}{{\bm K}_{n,d}}{{\bm V}_{n,c}}} } \right)} \\
					& = \sum\limits_{d = 1}^D \sum\limits_{n = 1}^N {{{\bm Q}_{m,d}}{{\bm L}_{m,n}}{{\bm K}_{n,d}}{{\bm V}_{n,c}}}  \\
					& = \sum\limits_{n = 1}^N {\left( {{{\bm L}_{m,n}}\sum\limits_{d = 1}^D {{{\bm Q}_{m,d}}{{\bm K}_{n,d}}} } \right){{\bm V}_{n,c}}} .
				\end{aligned}
			\end{array}
		\end{equation}
		Thus, Theorem~\ref{theorem:appendix_theorem1} is proven.
		Here, the computational complexity of the left part of Eq.~\eqref{eq:appendix_theorem1} is $\mathcal{O}(N^2)$.
		The computational complexity of the first and third lines in right part of Eq.~\eqref{eq:appendix_theorem1} is $\mathcal{O}(N)$.
		And the computational complexity of the second line in right part of Eq.~\eqref{eq:appendix_theorem1} is $\mathcal{O}(N^2)$.
		Thus, if we can reduce the complexity of computing ${\bm{\mathcal{L}}}^{KV}$ from $\mathcal{O}(N^2)$ to $\mathcal{O}(N)$, then the 
		complexity of computing $\bm{Y} = \left( {\left( {\bm{Q}{\bm{K}^{\top}}} \right) \odot \bm{L}} \right)\bm{V}$ can be reduced to 
		$\mathcal{O}(N)$.
	\end{proof}	
	
	\subsection{Preliminaries: Complexity Analysis of Mamba2 Attention Form}
	\label{sec:appendix_Preliminaries}
	In this section, we present the efficient algorithm proposed in Mamba2~\cite{mamba2} for its attention form, achieving a computational  
	complexity of $\mathcal{O}(N)$.
	
	\textbf{Mamba2's Attention Form.} 
	Mamba2's attention form (i.e., structured masked attention) given by the SSD framework~\cite{mamba2} is formulated as:
	\begin{equation}\label{eq:appendix_ssm_Attention}
		\bm{Y} = \left( \bm{C} \bm{B^{\top}} \odot \bm{L}^{1D} \right) \bm{X}, \qquad
		\bm{L}^{1D}_{ij} = a_{i:j}=  \begin{cases}
			a_{i} \times \dots \times a_{j+1} & i > j \\
			1 & i = j \\
			0 & i < j
		\end{cases},
	\end{equation}
	where ${\bm{X}}, {{\bm Y}} \!\! \in \!\! \mathbb{R}^{N \! \times \! C}$ are the input and output sequences, respectively,
	${\bm B}, {\bm C} \! \in \! \mathbb{R}^{N \times D}$ are input-dependent parameters learned by multilayer perceptron (MLP) layers. 
	The 1D structured mask ${\bm L}^{1D} \! \in \! \mathbb{R}^{N \! \times \! N}$ is a 1-semiseparable matrix~\cite{mamba2},
	and the scalar $a_i$ serves as a decay factor bounded in the range $[0, 1]$.
	In Mamba2, parameters $\bm{C} $ and $\bm{B}$ in Eq.~\eqref{eq:appendix_ssm_Attention} correspond to the query $\bm{Q}$ and key $\bm{K}$ in 
	ViTs, respectively.
	Therefore, Eq.~\eqref{eq:appendix_ssm_Attention} reveals that the selective state transition function in Mamba2 is equivalent to the 
	Hadamard product of a linear attention map $\bm{C}\bm{B^{\top}}$ and a 1D structured mask $\bm{L}^{1D}$.

	\textbf{Naive Computation.} 
	As defined in Eq.~\eqref{eq:appendix_ssm_Attention}, the straightforward computation of structured masked attention has a complexity 
	of $\mathcal{O}(N^2)$.
	In contrast, the complexity of linear attention $\bm{Y} = ( \bm{C} \bm{B^{\top}} ) \bm{X}  = \bm{C} ( \bm{B^{\top}}  \bm{X})$ can be 
	reduced from $\mathcal{O}(N^2)$ to $\mathcal{O}(N)$ by the associative property of matrix multiplication.
	However, this approach is not directly applicable to Eq.~\eqref{eq:appendix_ssm_Attention} because of the introduction of the Hadamard 
	product.
	
	\begin{lemma}\label{appendix_Lemma1}
		Let $\bm{L} \! \in \! \mathbb{R}^{N \! \times \! N}$ be a 1-semiseparable matrix and $\bm{X} \! \in \! \mathbb{R}^{N \! \times \! C}$ be a matrix,
		the complexity of computing $\bm{Y} \! = \! \bm{L} \bm{X}$ can be reduced from $\mathcal{O}(N^{2})$ to $\mathcal{O}(N)$ by using the 
		chunkwise 
		algorithm in Mamba2~\cite{mamba2}.
	\end{lemma}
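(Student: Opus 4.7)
The plan is to exploit the telescoping product structure of the 1-semiseparable matrix $\bm L$. The key observation is that $\bm L_{ij}=a_{i:j}$ satisfies $a_{i:j}=a_i\cdot a_{(i-1):j}$ for $i>j$, so the $i$-th row of $\bm Y=\bm L\bm X$ obeys the first-order linear recurrence $\bm Y_i=a_i\bm Y_{i-1}+\bm X_i$ with $\bm Y_0=\bm 0$. This is precisely Mamba2's recurrent SSM form (Eq.~\eqref{eq:ssm_Recurrent}) specialized to the scalar case $\bm B_i^\top=\bm C_i=1$. Evaluating this recurrence row by row already requires only $O(C)$ work per index and $O(NC)$ in total, which matches the claimed $O(N)$ bound when the channel dimension $C$ is treated as a fixed constant (as is standard in SSD complexity statements).

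To match the literal statement of the lemma, which asks for the reduction via the chunkwise algorithm, I would then partition the $N$-long sequence into $N/L$ contiguous blocks of length $L$ and split the block-lower-triangular structure of $\bm L$ into a diagonal-block part and a strictly-block-lower part. Within each block, the diagonal $L\times L$ sub-block of $\bm L$ is itself 1-semiseparable, so the intra-chunk contribution to $\bm Y$ is computed by the dual attention (SMA) form in $O(L^2 C)$ operations that fully parallelize across the $N/L$ chunks. Across chunks, only a single scalar hidden state at each chunk boundary has to be propagated via the scalar recurrence, contributing $O((N/L)C)$ additional work. Summing gives $O(NLC + (N/L)C)$; choosing $L$ to be a constant (or more generally $L=\Theta(1)$ in the big-$O$ sense) yields the desired $O(NC)=O(N)$ cost while preserving GPU-friendly parallelism.

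The main obstacle I expect is purely conceptual rather than computational: one has to recognize that the cumulative-product structure $a_{i:j}=a_i\cdot a_{(i-1):j}$ is exactly the condition the chunkwise schedule needs in order to aggregate per-chunk contributions through a single scalar boundary state. Once the recurrence $\bm Y_i=a_i\bm Y_{i-1}+\bm X_i$ is identified, this verification is immediate, and the remaining routine algebra—checking that the intra-chunk dual-form attention together with the inter-chunk scalar state propagation recovers $\bm L\bm X$ exactly—follows the standard chunkwise analysis of Mamba2~\cite{mamba2}, which I would invoke rather than re-derive.
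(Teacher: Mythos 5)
Your proposal is correct and follows essentially the same route as the paper, which states this lemma without proof and defers entirely to the chunkwise algorithm of Mamba2~\cite{mamba2}; your reconstruction --- identifying $\bm{Y}_i = a_i \bm{Y}_{i-1} + \bm{X}_i$ as the scalar-SSM recurrence of Eq.~\eqref{eq:ssm_Recurrent} and then block-partitioning it into intra-chunk dual-form products plus inter-chunk state propagation --- is exactly the cited argument, made explicit. The only cosmetic slip is that the propagated boundary state is a $C$-dimensional vector (one running value per channel) rather than a single scalar, though the decay applied to it is scalar, so the $\mathcal{O}(NC)=\mathcal{O}(N)$ count is unaffected.
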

	
	\textbf{Efficient Computation.} 
	Based on Theorem~\ref{theorem:appendix_theorem1}, Eq.~\eqref{eq:appendix_ssm_Attention} can be computed as follows:
	\begin{equation}\label{eq:appendix_ssm_Attention2}
		\begin{aligned}
			{\bm{\mathcal{B}}}^X &= {\mathtt{einsum}} \left({\mathtt{'nd,nc \to ndc'}}, {{\bm B}, {\bm X}}\right)\\
			{\bm{\mathcal{L}}}^{BX} &= {\mathtt{einsum}} \left({\mathtt{'mn,ndc \to mdc'}}, {{\bm L}^{1D}, {\bm {\mathcal{B}}}^X}\right)\\
			{\bm{Y}} &= {\mathtt{einsum}} \left({\mathtt{'md,mdc \to mc'}}, {{\bm C}, {\bm {\mathcal{L}}}^{BX}}\right).
		\end{aligned}
	\end{equation}
	Note that ${\bm L}^{1D}$ is a 1-semiseparable matrix, and the second line of Eq.~\eqref{eq:appendix_ssm_Attention2} can be reformulated as a 
	matrix multiplication. 
	Therefore, by applying Lemma~\ref{appendix_Lemma1}, the complexity of computing ${\bm{\mathcal{L}}}^{BX}$ can be reduced from 
	$\mathcal{O}(N^2)$ to $\mathcal{O}(N)$.
	Moreover, the complexity of computing ${\bm{L}}^{1D}$ is $\mathcal{O}(N)$.
	Consequently, the overall computational complexity of Eq.~\eqref{eq:appendix_ssm_Attention} is $\mathcal{O}(N)$.

	\subsection{Complexity Analysis of Polyline Path Mask}
	\label{sec:appendix_Complexity}
	In this section, we analyze the complexity of computing the polyline path mask $\bm L$, as stated in Corollary~\ref{corollary:MaskComplexity}, with detailed explanation.

	\setcounter{corollary}{0}
	\begin{corollary} [Mask Complexity] \label{corollary:appendix_MaskComplexity}
		The complexity of directly computing polyline path mask $\bm L$ via Eq.\eqref{eq:appendix_PolylinePathMaskDefinition2} and 
		\eqref{eq:appendix_unfold} is $\mathcal{O}(N^{\frac{5}{2}})$, which can be reduced to $\mathcal{O}(N^{2})$ by applying 
		Theorem~\ref{theorem:appendix_Decomposability}, where $N \! = \! H \! \times \! W$.
	\end{corollary}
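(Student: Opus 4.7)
The plan is to separately bound the two costs: (i) the cost of building $\bm L$ entry-by-entry from Eq.~\eqref{eq:appendix_PolylinePathMaskDefinition2}, and (ii) the cost of building $\bm L$ via the Hadamard decomposition $\bm L = \hat{\bm L}^H \odot \hat{\bm L}^V$ guaranteed by Theorem~\ref{theorem:appendix_Decomposability}. Throughout, we use $H = W = \sqrt{N}$ as the canonical case; the analysis extends verbatim with $H+W = \Theta(\sqrt{N})$.

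For the naive bound, I will count multiplications as a function of the index quadruple $(i,j,k,l)$. By definition, the entry $\bm{\mathcal L}_{i,j,k,l} = \alpha_{i,j:l}\,\beta_{i:k,l}$ is a product of at most $|l-j| + |k-i| \le (W-1)+(H-1) = \mathcal O(\sqrt{N})$ decay factors. There are $(HW)^2 = N^2$ entries in $\bm L$, and unfolding is $\mathcal O(1)$ per entry. Hence the total cost of the straightforward formula is $\mathcal O(N^2 \cdot \sqrt{N}) = \mathcal O(N^{5/2})$.

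For the decomposition-based bound, I would proceed in three steps. First, for each fixed row $i$, form the $W \times W$ matrix $\bm A^i$ with entries $[\bm A^i]_{j,l} = \alpha_{i,j:l}$; by precomputing the prefix-product vector $P_i(\cdot) = \prod_{n \le \cdot}\alpha_{i,n}$ in $\mathcal O(W)$ time, every entry of $\bm A^i$ is obtainable in $\mathcal O(1)$, so each $\bm A^i$ costs $\mathcal O(W^2)$ and the full collection costs $H \cdot \mathcal O(W^2) = \mathcal O(HW^2) = \mathcal O(N^{3/2})$. The same argument gives all $\bm B^l$ in $\mathcal O(WH^2) = \mathcal O(N^{3/2})$. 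Second, Theorem~\ref{theorem:appendix_Decomposability} identifies $\hat{\bm{\mathcal L}}^H_{i,:,k,:} = \bm A^i$ and $\hat{\bm{\mathcal L}}^V_{:,j,:,l} = \bm B^l$, so populating the two $HW \times HW$ matrices $\hat{\bm L}^H$ and $\hat{\bm L}^V$ by broadcasting these small blocks is $\mathcal O(N^2)$ look-ups. Third, the Hadamard product $\hat{\bm L}^H \odot \hat{\bm L}^V$ is $\mathcal O(N^2)$. Summing gives $\mathcal O(N^{3/2}) + \mathcal O(N^2) + \mathcal O(N^2) = \mathcal O(N^2)$, as claimed.

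The only subtle point I expect is verifying that each $\bm A^i$ and each $\bm B^l$ can indeed be assembled in quadratic (not cubic) time in its own side length, i.e.\ $\mathcal O(W^2)$ rather than the $\mathcal O(W^3)$ one would get from recomputing each product from scratch. Once the prefix-product observation is in place (or, equivalently, the recursion $\alpha_{i,j:l+1} = \alpha_{i,j:l}\cdot \alpha_{i,l+1}$), everything else is a direct count. This is also where the main asymmetry with Corollary~\ref{corollary:MaskedAttentionComplexity} shows up: here the $\mathcal O(N^2)$ bound is already information-theoretically tight because $\bm L$ itself has $N^2$ entries, whereas in the multiplication setting one can bypass materializing $\bm L$ and reach $\mathcal O(N^{3/2})$ or $\mathcal O(N)$.
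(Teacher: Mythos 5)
Your proof is correct and follows essentially the same route as the paper's: the $\mathcal{O}(N^{5/2})$ bound comes from counting $\mathcal{O}(\sqrt{N})$ multiplications for each of the $N^2$ entries, and the $\mathcal{O}(N^2)$ bound comes from building the factors of the decomposition in Theorem~\ref{theorem:appendix_Decomposability} and combining them entrywise. Your prefix-product observation tightens the cost of assembling the $\bm{A}^i$ and $\bm{B}^l$ to $\mathcal{O}(N^{3/2})$ (the paper settles for the cruder $\mathcal{O}(N^2)$ count there), but since materializing $\bm{L}$ itself already costs $N^2$, this refinement does not change the final bound.
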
	
	
	\paragraph{Naive Computation.}
	According to the definition in Eq.~\eqref{eq:appendix_PolylinePathMaskDefinition3}, the polyline path mask ${{\bm L}} \! \in \! 
	{\mathbb{R}^{HW \! \times HW}}$ is large in size, and each element requires numerous multiplications, resulting in high computational cost.
	The most straightforward way to compute ${\bm{{L}}}$ is to calculate each element ${{\bm{{L}}}_{u,v}}$ individually.
	Hence, the total complexity of computing matrix ${\bm{{L}}}$ is $N^2$ times the complexity of computing each element ${\bm{{L}}}_{u,v}$.
	As defined in Eq.~\eqref{eq:appendix_PolylinePathMaskDefinition2} and Eq.~\eqref{eq:appendix_unfold}, 
	${\bm{{L}}}_{u,v}={{\bm{\mathcal{L}}}_{i,j,k,l}}={\alpha _{i,j:l}}{\beta _{i:k,l}}$, where 		
	$ i = \left\lfloor {{u \mathord{\left/ {\vphantom {u W}} \right. \kern-\nulldelimiterspace} W}} \right\rfloor + 1$, $ j = u \bmod W$ and 
	$ k = \left\lfloor {{v \mathord{\left/ {\vphantom {v W}} \right. \kern-\nulldelimiterspace} W}} \right\rfloor + 1$, $ l = v \bmod W$.
	There are $| l-j |$ and $| k-i |$ multiplication operations in ${\alpha _{i,j:l}}$ and ${\beta _{i:k,l}}$, respectively.
	Here, $i,k$ range from $1$ to $H$, and $j,l$ range from $1$ to $W$.
	Therefore, the complexity of ${\bm{{L}}}_{u,v}$ is $\mathcal{O}(N^{\frac{1}{2}})$. 
	Consequently, the overall  complex of directly computing ${\bm{{L}}}$ is $\mathcal{O}(N^{\frac{5}{2}})$.

	\paragraph{Efficient Computation.}
	The polyline path mask ${\bm L}$ satisfies the conditions in Theorem~\ref{theorem:appendix_Decomposability} with ${[{{{\bm A}^i}} ]_{j,l}} \!\! = \!\! {\alpha_{k,j:l}}$ and ${[{{{\bm B}^l}}]_{i,k}} \!\! = \!\! {\beta_{i:k,j}}$.  
	Thus, based on Theorem~\ref{theorem:appendix_Decomposability}, the polyline path mask ${\bm L}$ can be decomposed as:
	\begin{equation}\label{eq:appendix_decomposedexample}
		\begin{aligned}
			{\bm{L}} = {{\bm{L}}^H} \times {{\bm{L}}^V} = {{{\bm{\hat L}}}^H} \odot {{{\bm{\hat L}}}^V}.
		\end{aligned}
	\end{equation}
	For example, as illustrated in Fig.~\ref{fig:Appendix_decomposition} (a), the polyline path mask $\bm L$ in 
	Eq.~\eqref{eq:appendix_PolylinePathMaskDefinition2} can be decomposed as:
	\begin{equation}\label{eq:appendix_decomposedexample2}
	\small
	\setlength{\arraycolsep}{3pt}
	\resizebox{\textwidth}{!}{$
		\begin{array}{l}
		\begin{aligned}
				{\bm {L}}\!\! &= \!\! {\begin{bmatrix}
						{\begin{matrix}
								{{\alpha _{1,1:1}}{\beta _{1:1,1}}}&{{\alpha _{1,1:2}}{\beta _{1:1,2}}}&{{\alpha _{1,1:3}}{\beta _{1:1,3}}}\\
								{{\alpha _{1,2:1}}{\beta _{1:1,1}}}&{{\alpha _{1,2:2}}{\beta _{1:1,2}}}&{{\alpha _{1,2:3}}{\beta _{1:1,3}}}\\
								{{\alpha _{1,3:1}}{\beta _{1:1,1}}}&{{\alpha _{1,3:2}}{\beta _{1:1,2}}}&{{\alpha _{1,3:3}}{\beta _{1:1,3}}}
						\end{matrix}}&{\begin{matrix}
								{{\alpha _{1,1:1}}{\beta _{1:2,1}}}&{{\alpha _{1,1:2}}{\beta _{1:2,2}}}&{{\alpha _{1,1:3}}{\beta _{1:2,3}}}\\
								{{\alpha _{1,2:1}}{\beta _{1:2,1}}}&{{\alpha _{1,2:2}}{\beta _{1:2,2}}}&{{\alpha _{1,2:3}}{\beta _{1:2,3}}}\\
								{{\alpha _{1,3:1}}{\beta _{1:2,1}}}&{{\alpha _{1,3:2}}{\beta _{1:2,2}}}&{{\alpha _{1,3:3}}{\beta _{1:2,3}}}
						\end{matrix}}&{\begin{matrix}
								{{\alpha _{1,1:1}}{\beta _{1:3,1}}}&{{\alpha _{1,1:2}}{\beta _{1:3,2}}}&{{\alpha _{1,1:3}}{\beta _{1:3,3}}}\\
								{{\alpha _{1,2:1}}{\beta _{1:3,1}}}&{{\alpha _{1,2:2}}{\beta _{1:3,2}}}&{{\alpha _{1,2:3}}{\beta _{1:3,3}}}\\
								{{\alpha _{1,3:1}}{\beta _{1:3,1}}}&{{\alpha _{1,3:2}}{\beta _{1:3,2}}}&{{\alpha _{1,3:3}}{\beta _{1:3,3}}}
						\end{matrix}}\\
						{\begin{matrix}
								{{\alpha _{2,1:1}}{\beta _{2:1,1}}}&{{\alpha _{2,1:2}}{\beta _{2:1,2}}}&{{\alpha _{2,1:3}}{\beta _{2:1,3}}}\\
								{{\alpha _{2,2:1}}{\beta _{2:1,1}}}&{{\alpha _{2,2:2}}{\beta _{2:1,2}}}&{{\alpha _{2,2:3}}{\beta _{2:1,3}}}\\
								{{\alpha _{2,3:1}}{\beta _{2:1,1}}}&{{\alpha _{2,3:2}}{\beta _{2:1,2}}}&{{\alpha _{2,3:3}}{\beta _{2:1,3}}}
						\end{matrix}}&{\begin{matrix}
								{{\alpha _{2,1:1}}{\beta _{2:2,1}}}&{{\alpha _{2,1:2}}{\beta _{2:2,2}}}&{{\alpha _{2,1:3}}{\beta _{2:2,3}}}\\
								{{\alpha _{2,2:1}}{\beta _{2:2,1}}}&{{\alpha _{2,2:2}}{\beta _{2:2,2}}}&{{\alpha _{2,2:3}}{\beta _{2:2,3}}}\\
								{{\alpha _{2,3:1}}{\beta _{2:2,1}}}&{{\alpha _{2,3:2}}{\beta _{2:2,2}}}&{{\alpha _{2,3:3}}{\beta _{2:2,3}}}
						\end{matrix}}&{\begin{matrix}
								{{\alpha _{2,1:1}}{\beta _{2:3,1}}}&{{\alpha _{2,1:2}}{\beta _{2:3,2}}}&{{\alpha _{2,1:3}}{\beta _{2:3,3}}}\\
								{{\alpha _{2,2:1}}{\beta _{2:3,1}}}&{{\alpha _{2,2:2}}{\beta _{2:3,2}}}&{{\alpha _{2,2:3}}{\beta _{2:3,3}}}\\
								{{\alpha _{2,3:1}}{\beta _{2:3,1}}}&{{\alpha _{2,3:2}}{\beta _{2:3,2}}}&{{\alpha _{2,3:3}}{\beta _{2:3,3}}}
						\end{matrix}}\\
						{\begin{matrix}
								{{\alpha _{3,1:1}}{\beta _{3:1,1}}}&{{\alpha _{3,1:2}}{\beta _{3:1,2}}}&{{\alpha _{3,1:3}}{\beta _{3:1,3}}}\\
								{{\alpha _{3,2:1}}{\beta _{3:1,1}}}&{{\alpha _{3,2:2}}{\beta _{3:1,2}}}&{{\alpha _{3,2:3}}{\beta _{3:1,3}}}\\
								{{\alpha _{3,3:1}}{\beta _{3:1,1}}}&{{\alpha _{3,3:2}}{\beta _{3:1,2}}}&{{\alpha _{3,3:3}}{\beta _{3:1,3}}}
						\end{matrix}}&{\begin{matrix}
								{{\alpha _{3,1:1}}{\beta _{3:2,1}}}&{{\alpha _{3,1:2}}{\beta _{3:2,2}}}&{{\alpha _{3,1:3}}{\beta _{3:2,3}}}\\
								{{\alpha _{3,2:1}}{\beta _{3:2,1}}}&{{\alpha _{3,2:2}}{\beta _{3:2,2}}}&{{\alpha _{3,2:3}}{\beta _{3:2,3}}}\\
								{{\alpha _{3,3:1}}{\beta _{3:2,1}}}&{{\alpha _{3,3:2}}{\beta _{3:2,2}}}&{{\alpha _{3,3:3}}{\beta _{3:2,3}}}
						\end{matrix}}&{\begin{matrix}
								{{\alpha _{3,1:1}}{\beta _{3:3,1}}}&{{\alpha _{3,1:2}}{\beta _{3:3,2}}}&{{\alpha _{3,1:3}}{\beta _{3:3,3}}}\\
								{{\alpha _{3,2:1}}{\beta _{3:3,1}}}&{{\alpha _{3,2:2}}{\beta _{3:3,2}}}&{{\alpha _{3,2:3}}{\beta _{3:3,3}}}\\
								{{\alpha _{3,3:1}}{\beta _{3:3,1}}}&{{\alpha _{3,3:2}}{\beta _{3:3,2}}}&{{\alpha _{3,3:3}}{\beta _{3:3,3}}}
						\end{matrix}}
				\end{bmatrix}} \\
				\!\!& = \!\! {\begin{bmatrix}
						{\begin{matrix}
								{{\alpha _{1,1:1}}}&{{\alpha _{1,1:2}}}&{{\alpha _{1,1:3}}}\\
								{{\alpha _{1,2:1}}}&{{\alpha _{1,2:2}}}&{{\alpha _{1,2:3}}}\\
								{{\alpha _{1,3:1}}}&{{\alpha _{1,3:2}}}&{{\alpha _{1,3:3}}}
						\end{matrix}}&{\begin{matrix}
								{\;\;0\;\;}&{\;\;0\;\;}&{\;\;0\;\;}\\
								0&0&0\\
								0&0&0
						\end{matrix}}&{\begin{matrix}
								{\;\;0\;\;}&{\;\;0\;\;}&{\;\;0\;\;}\\
								0&0&0\\
								0&0&0
						\end{matrix}}\\
						{\begin{matrix}
								{\;\;0\;\;}&{\;\;0\;\;}&{\;\;0\;\;}\\
								0&0&0\\
								0&0&0
						\end{matrix}}&{\begin{matrix}
								{{\alpha _{2,1:1}}}&{{\alpha _{2,1:2}}}&{{\alpha _{2,1:3}}}\\
								{{\alpha _{2,2:1}}}&{{\alpha _{2,2:2}}}&{{\alpha _{2,2:3}}}\\
								{{\alpha _{2,3:1}}}&{{\alpha _{2,3:2}}}&{{\alpha _{2,3:3}}}
						\end{matrix}}&{\begin{matrix}
								{\;\;0\;\;}&{\;\;0\;\;}&{\;\;0\;\;}\\
								0&0&0\\
								0&0&0
						\end{matrix}}\\
						{\begin{matrix}
								{\;\;0\;\;}&{\;\;0\;\;}&{\;\;0\;\;}\\
								0&0&0\\
								0&0&0
						\end{matrix}}&{\begin{matrix}
								{\;\;0\;\;}&{\;\;0\;\;}&{\;\;0\;\;}\\
								0&0&0\\
								0&0&0
						\end{matrix}}&{\begin{matrix}
								{{\alpha _{3,1:1}}}&{{\alpha _{3,1:2}}}&{{\alpha _{3,1:3}}}\\
								{{\alpha _{3,2:1}}}&{{\alpha _{3,2:2}}}&{{\alpha _{3,2:3}}}\\
								{{\alpha _{3,3:1}}}&{{\alpha _{3,3:2}}}&{{\alpha _{3,3:3}}}
						\end{matrix}}
				\end{bmatrix}}  \!\!\! \times \!\!\!  {\begin{bmatrix}
						{\begin{matrix}
								{{\beta _{1:1,1}}}&0&0\\
								0&{{\beta _{1:1,2}}}&0\\
								0&0&{{\beta _{1:1,3}}}
						\end{matrix}}&{\begin{matrix}
								{{\beta _{1:2,1}}}&0&0\\
								0&{{\beta _{1:2,2}}}&0\\
								0&0&{{\beta _{1:2,3}}}
						\end{matrix}}&{\begin{matrix}
								{{\beta _{1:3,1}}}&0&0\\
								0&{{\beta _{1:3,2}}}&0\\
								0&0&{{\beta _{1:3,3}}}
						\end{matrix}}\\
						{\begin{matrix}
								{{\beta _{2:1,1}}}&0&0\\
								0&{{\beta _{2:1,2}}}&0\\
								0&0&{{\beta _{2:1,3}}}
						\end{matrix}}&{\begin{matrix}
								{{\beta _{2:2,1}}}&0&0\\
								0&{{\beta _{2:2,2}}}&0\\
								0&0&{{\beta _{2:2,3}}}
						\end{matrix}}&{\begin{matrix}
								{{\beta _{2:3,1}}}&0&0\\
								0&{{\beta _{2:3,2}}}&0\\
								0&0&{{\beta _{2:3,3}}}
						\end{matrix}}\\
						{\begin{matrix}
								{{\beta _{3:1,1}}}&0&0\\
								0&{{\beta _{3:1,2}}}&0\\
								0&0&{{\beta _{3:1,3}}}
						\end{matrix}}&{\begin{matrix}
								{{\beta _{3:2,1}}}&0&0\\
								0&{{\beta _{3:2,2}}}&0\\
								0&0&{{\beta _{3:2,3}}}
						\end{matrix}}&{\begin{matrix}
								{{\beta _{3:3,1}}}&0&0\\
								0&{{\beta _{3:3,2}}}&0\\
								0&0&{{\beta _{3:3,3}}}
						\end{matrix}}
				\end{bmatrix}}\\
				\!\!&= \!\!  {\begin{bmatrix}
						{\begin{matrix}
								{{\alpha _{1,1:1}}}&{{\alpha _{1,1:2}}}&{{\alpha _{1,1:3}}}\\
								{{\alpha _{1,2:1}}}&{{\alpha _{1,2:2}}}&{{\alpha _{1,2:3}}}\\
								{{\alpha _{1,3:1}}}&{{\alpha _{1,3:2}}}&{{\alpha _{1,3:3}}}
						\end{matrix}}&{\begin{matrix}
								{{\alpha _{1,1:1}}}&{{\alpha _{1,1:2}}}&{{\alpha _{1,1:3}}}\\
								{{\alpha _{1,2:1}}}&{{\alpha _{1,2:2}}}&{{\alpha _{1,2:3}}}\\
								{{\alpha _{1,3:1}}}&{{\alpha _{1,3:2}}}&{{\alpha _{1,3:3}}}
						\end{matrix}}&{\begin{matrix}
								{{\alpha _{1,1:1}}}&{{\alpha _{1,1:2}}}&{{\alpha _{1,1:3}}}\\
								{{\alpha _{1,2:1}}}&{{\alpha _{1,2:2}}}&{{\alpha _{1,2:3}}}\\
								{{\alpha _{1,3:1}}}&{{\alpha _{1,3:2}}}&{{\alpha _{1,3:3}}}
						\end{matrix}}\\
						{\begin{matrix}
								{{\alpha _{2,1:1}}}&{{\alpha _{2,1:2}}}&{{\alpha _{2,1:3}}}\\
								{{\alpha _{2,2:1}}}&{{\alpha _{2,2:2}}}&{{\alpha _{2,2:3}}}\\
								{{\alpha _{2,3:1}}}&{{\alpha _{2,3:2}}}&{{\alpha _{2,3:3}}}
						\end{matrix}}&{\begin{matrix}
								{{\alpha _{2,1:1}}}&{{\alpha _{2,1:2}}}&{{\alpha _{2,1:3}}}\\
								{{\alpha _{2,2:1}}}&{{\alpha _{2,2:2}}}&{{\alpha _{2,2:3}}}\\
								{{\alpha _{2,3:1}}}&{{\alpha _{2,3:2}}}&{{\alpha _{2,3:3}}}
						\end{matrix}}&{\begin{matrix}
								{{\alpha _{2,1:1}}}&{{\alpha _{2,1:2}}}&{{\alpha _{2,1:3}}}\\
								{{\alpha _{2,2:1}}}&{{\alpha _{2,2:2}}}&{{\alpha _{2,2:3}}}\\
								{{\alpha _{2,3:1}}}&{{\alpha _{2,3:2}}}&{{\alpha _{2,3:3}}}
						\end{matrix}}\\
						{\begin{matrix}
								{{\alpha _{3,1:1}}}&{{\alpha _{3,1:2}}}&{{\alpha _{3,1:3}}}\\
								{{\alpha _{3,2:1}}}&{{\alpha _{3,2:2}}}&{{\alpha _{3,2:3}}}\\
								{{\alpha _{3,3:1}}}&{{\alpha _{3,3:2}}}&{{\alpha _{3,3:3}}}
						\end{matrix}}&{\begin{matrix}
								{{\alpha _{3,1:1}}}&{{\alpha _{3,1:2}}}&{{\alpha _{3,1:3}}}\\
								{{\alpha _{3,2:1}}}&{{\alpha _{3,2:2}}}&{{\alpha _{3,2:3}}}\\
								{{\alpha _{3,3:1}}}&{{\alpha _{3,3:2}}}&{{\alpha _{3,3:3}}}
						\end{matrix}}&{\begin{matrix}
								{{\alpha _{3,1:1}}}&{{\alpha _{3,1:2}}}&{{\alpha _{3,1:3}}}\\
								{{\alpha _{3,2:1}}}&{{\alpha _{3,2:2}}}&{{\alpha _{3,2:3}}}\\
								{{\alpha _{3,3:1}}}&{{\alpha _{3,3:2}}}&{{\alpha _{3,3:3}}}
						\end{matrix}}
				\end{bmatrix}}  \!\!\! \odot \!\!\! 
			 {\begin{bmatrix}
						{\begin{matrix}
								{{\beta _{1:1,1}}}&{{\beta _{1:1,2}}}&{{\beta _{1:1,3}}}\\
								{{\beta _{1:1,1}}}&{{\beta _{1:1,2}}}&{{\beta _{1:1,3}}}\\
								{{\beta _{1:1,1}}}&{{\beta _{1:1,2}}}&{{\beta _{1:1,3}}}
						\end{matrix}}&{\begin{matrix}
								{{\beta _{1:2,1}}}&{{\beta _{1:2,2}}}&{{\beta _{1:2,3}}}\\
								{{\beta _{1:2,1}}}&{{\beta _{1:2,2}}}&{{\beta _{1:2,3}}}\\
								{{\beta _{1:2,1}}}&{{\beta _{1:2,2}}}&{{\beta _{1:2,3}}}
						\end{matrix}}&{\begin{matrix}
								{{\beta _{1:3,1}}}&{{\beta _{1:3,2}}}&{{\beta _{1:3,3}}}\\
								{{\beta _{1:3,1}}}&{{\beta _{1:3,2}}}&{{\beta _{1:3,3}}}\\
								{{\beta _{1:3,1}}}&{{\beta _{1:3,2}}}&{{\beta _{1:3,3}}}
						\end{matrix}}\\
						{\begin{matrix}
								{{\beta _{2:1,1}}}&{{\beta _{2:1,2}}}&{{\beta _{2:1,3}}}\\
								{{\beta _{2:1,1}}}&{{\beta _{2:1,2}}}&{{\beta _{2:1,3}}}\\
								{{\beta _{2:1,1}}}&{{\beta _{2:1,2}}}&{{\beta _{2:1,3}}}
						\end{matrix}}&{\begin{matrix}
								{{\beta _{2:2,1}}}&{{\beta _{2:2,2}}}&{{\beta _{2:2,3}}}\\
								{{\beta _{2:2,1}}}&{{\beta _{2:2,2}}}&{{\beta _{2:2,3}}}\\
								{{\beta _{2:2,1}}}&{{\beta _{2:2,2}}}&{{\beta _{2:2,3}}}
						\end{matrix}}&{\begin{matrix}
								{{\beta _{2:3,1}}}&{{\beta _{2:3,2}}}&{{\beta _{2:3,3}}}\\
								{{\beta _{2:3,1}}}&{{\beta _{2:3,2}}}&{{\beta _{2:3,3}}}\\
								{{\beta _{2:3,1}}}&{{\beta _{2:3,2}}}&{{\beta _{2:3,3}}}
						\end{matrix}}\\
						{\begin{matrix}
								{{\beta _{3:1,1}}}&{{\beta _{3:1,2}}}&{{\beta _{3:1,3}}}\\
								{{\beta _{3:1,1}}}&{{\beta _{3:1,2}}}&{{\beta _{3:1,3}}}\\
								{{\beta _{3:1,1}}}&{{\beta _{3:1,2}}}&{{\beta _{3:1,3}}}
						\end{matrix}}&{\begin{matrix}
								{{\beta _{3:2,1}}}&{{\beta _{3:2,2}}}&{{\beta _{3:2,3}}}\\
								{{\beta _{3:2,1}}}&{{\beta _{3:2,2}}}&{{\beta _{3:2,3}}}\\
								{{\beta _{3:2,1}}}&{{\beta _{3:2,2}}}&{{\beta _{3:2,3}}}
						\end{matrix}}&{\begin{matrix}
								{{\beta _{3:3,1}}}&{{\beta _{3:3,2}}}&{{\beta _{3:3,3}}}\\
								{{\beta _{3:3,1}}}&{{\beta _{3:3,2}}}&{{\beta _{3:3,3}}}\\
								{{\beta _{3:3,1}}}&{{\beta _{3:3,2}}}&{{\beta _{3:3,3}}}
						\end{matrix}}
				\end{bmatrix}} 
		\end{aligned}
		\end{array}
		$}
		\end{equation}
	Note that there are $H \! \times \! W^2$ non-zero elements in $ {\bm L}^H$ and each non-zero element ${\alpha _{i,j:l}}$ requires  
	$\mathcal{O}(N^{\frac{1}{2}})$ multiplication operations. 
	Thus, the complexity of computing ${{\bm{L}}^H}$ and $ {{\bm{L}}^V}$ is $\mathcal{O}(N^{{2}})$.
	Similarly, the complexity of computing ${{{\bm{\hat L}}}^H} $ and $ {{{\bm{\hat L}}}^V}$ is also $\mathcal{O}(N^{{2}})$.
	Thus, the complexity of computing ${\bm{L}}$ can be reduced to $\mathcal{O}(N^{{2}})$ by Eq.~\eqref{eq:appendix_decomposedexample}.

	\begin{figure}[!t]
		\centering
		\includegraphics[width=\linewidth]{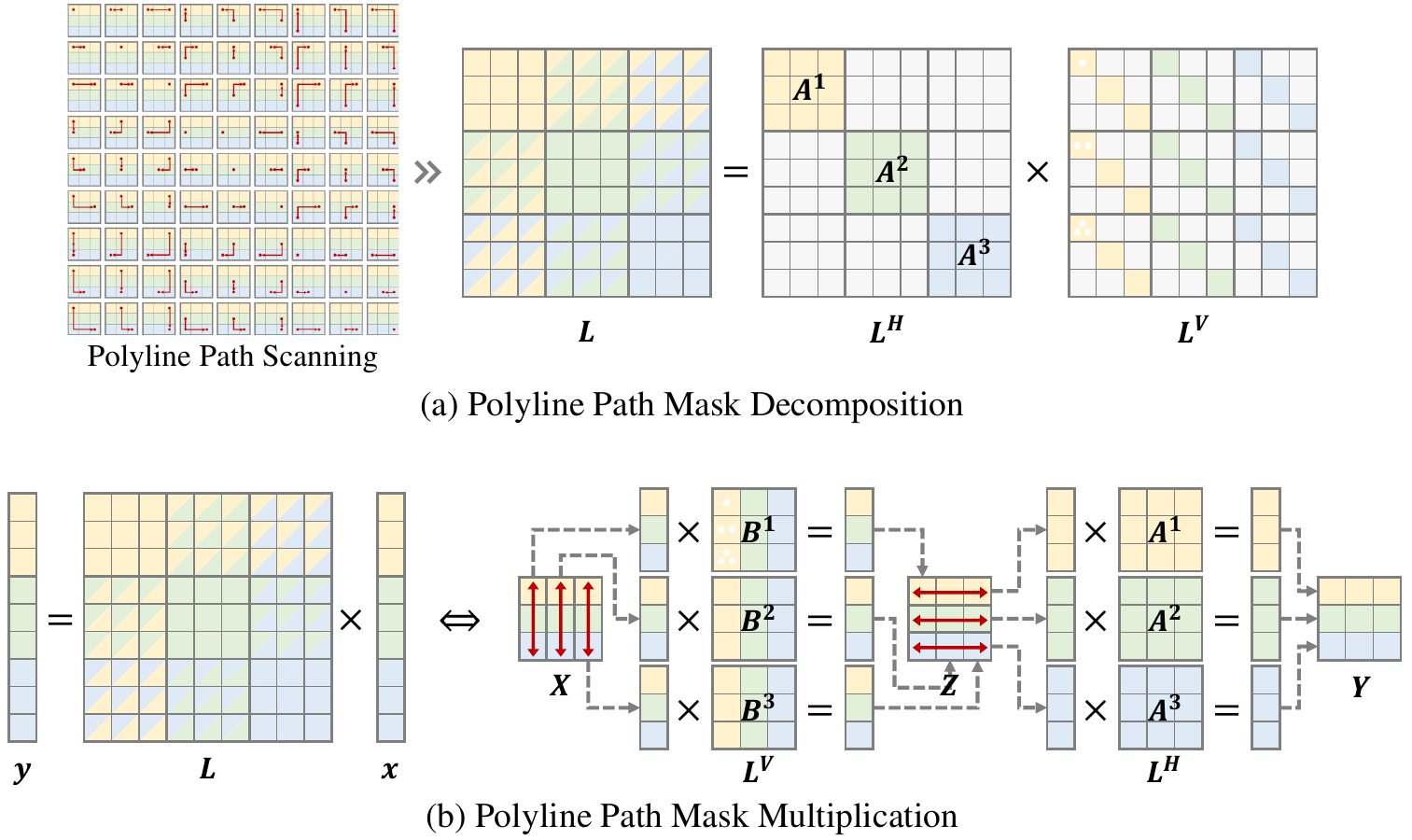}
		\caption{
			(a) Illustration of the decomposition of the polyline path mask $\bm L$.
			(b) Illustration of the multiplication between the polyline path mask $\bm L$ and vector $\bm x$. (Algorithm~\ref{algorithm: 
			appendix_EfficientComputation}) .
		}
		\label{fig:Appendix_decomposition}
		\vspace{-5mm}
	\end{figure}
	
	\begin{figure}[!t]
		\centering
		\includegraphics[width=\linewidth]{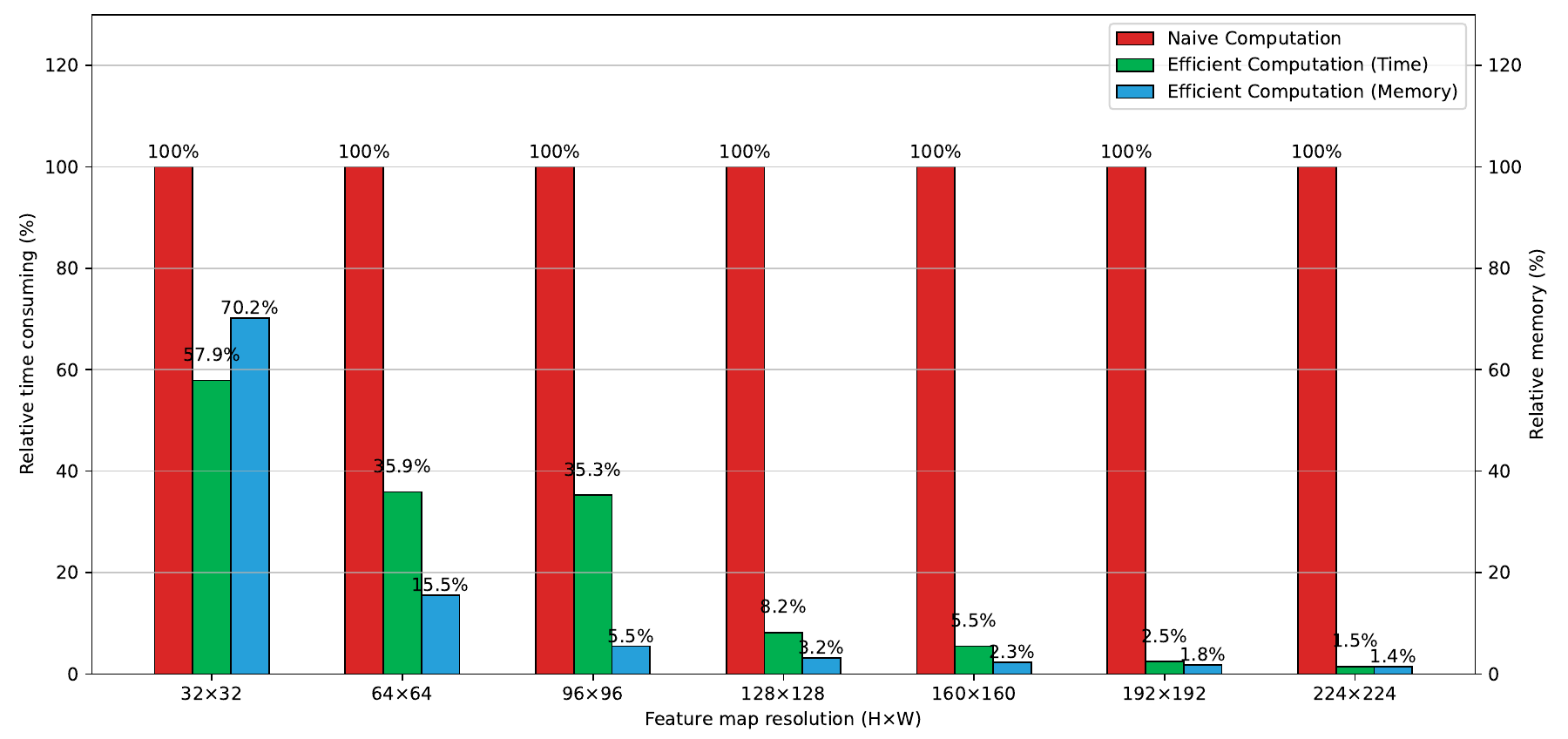}
		\vspace{-4mm}
		\caption{
			The comparison of the relative time consuming and memory usage between the  naive computation and efficient computation 
			(Algorithm~\ref{algorithm: appendix_EfficientComputation}) of ${\bm L} {\bm x}$.
		}
		\label{fig:Appendix_ComputationComplexity}
		\vspace{-4mm}
	\end{figure}

	\subsection{Complexity Analysis of Polyline Path Mask Multiplication}
	\label{sec:appendix_ComplexityMultiplication}
	
	In this section, we analyze the complexity of computing the matrix multiplication between the polyline path mask $\bm L$ and the vector $\bm x$,
	as stated in Corollary~\ref{corollary:appendix_MaskedAttentionComplexity} and Algorithm~\ref{algorithm: appendix_EfficientComputation}, with detailed explanation.
	Fig.~\ref{fig:Appendix_ComputationComplexity} presents the comparison of speed and memory usage between the naive computation and efficient 
	computation of ${\bm L} {\bm x}$.
	Compared to the naive computation approach, Algorithm~\ref{algorithm: appendix_EfficientComputation} achieves substantial speed‐up and 
	significantly reduced GPU memory consumption, especially when the shape of ${\bm L}$ and $ {\bm x}$ is large.

	\begin{corollary} [Masked Attention Complexity] \label{corollary:appendix_MaskedAttentionComplexity}
		The computational complexity of the matrix multiplication between polyline path mask and vector $\bm x$, i.e., ${\bm y} \! = \! {\bm L} {\bm x}$, can be reduced from $\mathcal{O}(N^2)$ to $\mathcal{O}(N^{\frac{3}{2}})$ by Algorithm~\ref{algorithm: appendix_EfficientComputation}, and further reduced to $\mathcal{O}(N)$ by applying the chunkwise algorithm of Mamba2~\cite{mamba2} to steps 3 and 5 in Algorithm~\ref{algorithm: appendix_EfficientComputation}.
	\end{corollary}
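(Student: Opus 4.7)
The plan is to establish both complexity bounds by counting operations in Algorithm~\ref{algorithm: appendix_EfficientComputation}. Its correctness already follows from Theorem~\ref{theorem:appendix_EquivalentMatrixMultiplication} applied to the decomposition ${\bm L} = {\bm L}^H \times {\bm L}^V$ guaranteed by Theorem~\ref{theorem:appendix_Decomposability}, so only the complexity claim needs attention. Throughout I assume $H = W = \sqrt{N}$ for cleanness; the general case is handled by replacing $\sqrt{N}$ with $\max(H, W)$.

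For the $\mathcal{O}(N^{3/2})$ bound, I would first argue that constructing the $H \times H$ matrices $\{{\bm B}^l\}_{l=1}^{W}$ in Step 2 and the $W \times W$ matrices $\{{\bm A}^i\}_{i=1}^{H}$ in Step 4 can be performed via prefix-product scans along each row and column of the decay factors, giving a combined cost of $\mathcal{O}(W H^2 + H W^2) = \mathcal{O}(N^{3/2})$. Step 3 then performs $W$ dense $H$-dimensional matrix-vector products at cost $W \cdot H^2 = \mathcal{O}(N^{3/2})$, and Step 5 likewise performs $H$ such products at cost $H \cdot W^2 = \mathcal{O}(N^{3/2})$. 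Steps 1 and 6 are reshape operations with $\mathcal{O}(N)$ cost. Summing yields the first bound.

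For the improvement to $\mathcal{O}(N)$, the key observation (already highlighted in the remark preceding the corollary) is that each ${\bm B}^l$ and ${\bm A}^i$ is symmetric with its lower triangular part being $1$-semiseparable. This is exactly the setting in which Mamba2's chunkwise algorithm (Lemma~\ref{appendix_Lemma1}) computes a matrix-vector product in linear time. Applying it to Step 3 gives $\mathcal{O}(W H) = \mathcal{O}(N)$ across all $W$ columns, and similarly for Step 5. Furthermore, the chunkwise algorithm does not require explicitly materializing ${\bm B}^l$ or ${\bm A}^i$; it operates directly on the underlying decay factors, so Steps 2 and 4 are absorbed into the cost of Steps 3 and 5, preserving the $\mathcal{O}(N)$ bound overall.

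The main subtlety I expect is that Mamba2's chunkwise algorithm is stated for a causal (lower-triangular) $1$-semiseparable matrix, whereas our ${\bm B}^l$ and ${\bm A}^i$ are symmetric. I would handle this by splitting each matrix-vector product into contributions from its lower-triangular part, its strictly upper-triangular part, and its diagonal, then running the chunkwise algorithm once in the forward direction and once on the reversed input to cover both triangular parts (and computing the diagonal directly in $\mathcal{O}(1)$ per entry). The two linear-time sweeps together keep the total cost linear and complete the proof.
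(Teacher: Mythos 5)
Your proof takes essentially the same route as the paper: count the $W$ matrix--vector products of size $H\times H$ in step 3 and the $H$ products of size $W\times W$ in step 5 to get $\mathcal{O}(H^2W + HW^2)=\mathcal{O}(N^{3/2})$, then invoke the $1$-semiseparable structure of ${\bm A}^i$ and ${\bm B}^l$ together with Mamba2's chunkwise algorithm (Lemma~\ref{appendix_Lemma1}) to bring each product down to linear time. Your two additions --- explicitly charging for the construction of the $\{{\bm A}^i\}$ and $\{{\bm B}^l\}$ in steps 2 and 4, and resolving the mismatch between the symmetric masks and the causal statement of the chunkwise algorithm via forward and reversed sweeps plus the diagonal --- are details the paper leaves implicit rather than a different argument, and they are correct.
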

	
	\textbf{Naive Computation.} Typically, the polyline path mask $\bm L \! \in \! {\mathbb{R}^{N \! \times \! N}}$ is a rank-N matrix.
	Thus, the most direct approach to compute ${\bm{L} \bm{x}}$ requires a computational complexity of $\mathcal{O}(N^{2})$.
	
	\textbf{Efficient Computation.} As mentioned above, the polyline path mask ${\bm L}$ can be decomposed as ${{\bm{L}}^H} \! \times \! 
	{{\bm{L}}^V}$, where ${{\bm{L}}^H}$ and ${{\bm{L}}^V}$ satisfy the definition in Eq.~\eqref{eq:appendix_Decomposability3} with ${[{{{\bm 
	A}^i}} ]_{j,l}} \!\! = \!\! {\alpha_{i,j:l}}$ and ${[{{{\bm B}^l}}]_{i,k}} \!\! = \!\! {\beta_{i:k,l}}$.
	Thus, based on Theorem~\ref{theorem:appendix_EquivalentMatrixMultiplication}, we can design Algorithm~\ref{algorithm: appendix_EfficientComputation} for computing the matrix multiplication between polyline path mask $\bm L$ and the vector $\bm x$.
	As shown in Algorithm~\ref{algorithm: appendix_EfficientComputation}, computing ${\bm{B}^l}\! \times \!\bm{X}_{:,l}$ has a complexity of  $\mathcal{O}(H^2)$.
	Thus, the complexity of computing $\bm Z$ (i.e., step 3 in Algorithm~\ref{algorithm: appendix_EfficientComputation}) is $\mathcal{O}(H^2W)$.
	Similarly, the complexity of computing $\bm Y$ (i.e., step 5 in Algorithm~\ref{algorithm: appendix_EfficientComputation}) is $\mathcal{O}(HW^2)$.
	Thus, the computational complexity of Algorithm~\ref{algorithm: appendix_EfficientComputation} is $\mathcal{O}(N^{\frac{3}{2}})$, where $N = 
	H \times W$.

	\begin{wrapfigure}{r}{0.65\columnwidth}
		\centering
		\vspace{-8mm}
		\begin{varwidth}{0.65\columnwidth} 
			\begin{algorithm}[H]
				\renewcommand{\algorithmicrequire}{\textbf{Input:}}
				\renewcommand{\algorithmicensure}{\textbf{Output:}}
				\caption{Efficient Masked Attention Computation.}
				\label{algorithm: appendix_EfficientComputation}
				\begin{spacing}{1.1}
					\small
					\begin{algorithmic}[1]
						\Require decay factors ${\alpha}, {\beta}$ of the polyline path mask ${\bm L}$,
						vector $\bm{x} \! \in \! \mathbb{R}^{HW}$;
						\State Compute $\bm{X} \! = \! {\mathrm{unvec}}(\bm{x}) \! \in \! \mathbb{R}^{H\! \times \! W}$;
						\State Compute ${\bm{B}^l} \! \in \! \mathbb{R}^{H \! \times \! H}$, where for $l \! = \! 1 \! : \! W, {[{{{\bm B}^l}} ]_{i,k}} \!\! = \!\! {\beta_{i:k,l}}$;
						\State Compute ${\bm Z} \! \in \! \mathbb{R}^{H \! \times \! W}$, where $\bm{Z}_{:,l} = {\bm{B}^l}\! \times \!\bm{X}_{:,l}$;
						\State Compute ${\bm{A}^i} \! \in \! \mathbb{R}^{W \! \times \! W}$, where for $i \! = \! 1 \! : \! H, {[{{{\bm A}^i}} ]_{j,l}} \!\! = \!\! {\alpha _{i,j:l}}$;
						\State Compute ${\bm Y} \! \in \! \mathbb{R}^{H\! \times \! W}$, where $\bm{Y}_{i,:} = {\bm{A}^i}\! \times \!\bm{Z}_{i,:}$;
						\Ensure ${\bm y} \! = \! \mathrm{vec}(\bm Y)$; 
					\end{algorithmic}
				\end{spacing}
			\end{algorithm}
		\end{varwidth}
		\vspace{-4mm}
	\end{wrapfigure}
	
	As illustrated in Fig.~\ref{fig:Appendix_decomposition} (b), the matrices ${\bm A}^i$ and ${\bm B}^l$ are symmetric matrices, and their 
	lower triangular parts are both 1-semiseparable matrices as defined in Mamba2~\cite{mamba2}.
	Therefore, by applying Lemma~\ref{appendix_Lemma1} the complexity of computing ${\bm{B}^l}\! \times \!\bm{X}_{:,l}$ and can be reduced from 
	$\mathcal{O}(H^2)$ to $\mathcal{O}(H)$, and the complexity of computing ${\bm{A}^i}\! \times \!\bm{Z}_{i,:}$ can be reduced from 
	$\mathcal{O}(W^2)$ to $\mathcal{O}(W)$.
	Consequently, the overall  complexity of computing ${\bm L} {\bm x}$ is $\mathcal{O}(N)$.

	\subsection{Applications of Polyline Path Masked Attention}
	\label{sec:appendix_Applications}
	The proposed polyline path mask can be seamlessly integrated into various attention variants in a plug-and-play manner.
	As illustrated in Fig.~\ref{fig:Appendix_Theory}, theorems and algorithm given in Sec.~\ref{sec:appendix_Complexity} and 
	Sec.~\ref{sec:appendix_ComplexityMultiplication} ensure that integrating the polyline path mask does not substantially increase the 
	computational complexity of the original attention mechanism.
	In this section, we introduce several Polyline Path Masked Attention (PPMA), including Polyline Path Masked Vanilla Attention (PPMVA), 
	Polyline Path Masked Linear Attention (PPMLA), Polyline Path Masked Criss-Cross Attention (PPMCCA), and Polyline Path Masked Decomposed 
	Attention (PPMDA).
	
	\textbf{Basic Paradigm.} The basic Polyline Path Masked Attention (PPMA) is implemented by performing a Hadamard multiplication with the 
	attention map.
	Specifically, given query ${\bm{Q}}$, key ${\bm{K}}$, and value ${\bm{V}} \! \in \! {\mathbb{R}^{HW \! \times \! C}}$, PPMA is formulated as:
	\begin{equation}\label{eq:appendix_PPMA}
		\begin{aligned}
			{\rm{PPMA}}\left( {\bm{X}} \right) &= \left( {{\rm{Attn}}({\bm{Q}},{\bm{K}}) \odot {{\bm{L}}^{2D}}} \right){\bm{V}}  \\
			&= \left( {{\rm{Attn}}({\bm{Q}},{\bm{K}}) \odot {\bm{L}}} \right){\bm{V}} + \left( {{\rm{Attn}}({\bm{Q}},{\bm{K}}) \odot {\bm{\tilde L}}} \right){\bm{V}}.
		\end{aligned}
	\end{equation}

	\textbf{1) Polyline Path Masked Vanilla Attention.} 
	According to Eq.~\eqref{eq:appendix_PPMA}, the polyline path masked vanilla attention is formulated as:
	\begin{equation}\label{eq:appendix_PPMVA}
		\begin{aligned}
			{\rm{PPMVA}}\left( {\bm{X}} \right) 
			= \left( {{\rm{softmax}}({\bm{Q}}{{\bm{K}}^ \top }) \odot {{\bm{L}}^{2D}}} \right){\bm{V}},
		\end{aligned}
	\end{equation}
	Based on Corollary~\ref{corollary:appendix_MaskComplexity}, the computation of ${{\bm{L}}^{2D}}$ has a complexity of $\mathcal{O}(N^2)$.
	Thus, Eq.~\eqref{eq:appendix_PPMVA} maintains the complexity of $\mathcal{O}(N^2)$.

	\textbf{2) Polyline Path Masked Linear Attention.}
	Similar to Mamba2's attention form (Eq.~\eqref{eq:appendix_ssm_Attention}), the polyline path masked linear attention is formulated as:
	\begin{equation}\label{eq:appendix_PPMLA}
		\begin{aligned}
			{\rm{PPMLA}}\left( {\bm{X}} \right) &= \left( {({\bm{Q}}{{\bm{K}}^ \top }) \odot {{\bm{L}}^{2D}}} \right){\bm{V}} = \left( {({\bm{Q}}{{\bm{K}}^ \top }) \odot {\bm{L}}} \right){\bm{V}} + \left( {({\bm{Q}}{{\bm{K}}^ \top }) \odot {\bm{\tilde L}}} \right){\bm{V}}.
		\end{aligned}
	\end{equation}
	
	Based on Theorem~\ref{theorem:appendix_theorem1}, we can compute $\left({({\bm{Q}}{{\bm{K}}^ \top }) \odot {\bm{L}}} \right){\bm{V}}$ as 
	follows:
	\begin{equation}\label{eq:appendix_theorem1right2}
		\begin{aligned}
			{\bm{\mathcal{K}}}^V &= {\mathtt{einsum}} \left({\mathtt{'md,mc \to mdc'}}, {{\bm K}, {\bm V}}\right)\\
			{\bm{\mathcal{L}}}^{KV} &= {\mathtt{einsum}} \left({\mathtt{'nm,mdc \to mdc'}}, {{\bm L}, {\bm {\mathcal{K}}}^V}\right)\\
			{\bm{Y}} &= {\mathtt{einsum}} \left({\mathtt{'md,mdc \to mc'}}, {{\bm Q}, {\bm {\mathcal{L}}}^{KV}}\right).
		\end{aligned}
	\end{equation}
	Eq.~\eqref{eq:appendix_theorem1right2} shows that the computational complexity of Eq.~\eqref{eq:appendix_PPMLA} depends on computing ${\bm{\mathcal{L}}}^{KV}$.
	Based on Corollary~\ref{corollary:appendix_MaskedAttentionComplexity} and Algorithm~\ref{algorithm: appendix_EfficientComputation}, 
	the computational complexity of the second line in Eq.~\eqref{eq:appendix_theorem1right2} can be reduced from $\mathcal{O}(N^2)$ to $\mathcal{O}(N)$.
	Thus, the computational complexity of Eq.~\eqref{eq:appendix_PPMLA} maintains $\mathcal{O}(N)$.

	\textbf{3) Polyline Path Masked Criss-Cross Attention.}
	The original criss-cross attention~\cite{huang2020ccnet} employs sparse attention over tokens located in the same row or column, achieving 
	a computational complexity of $\mathcal{O}(N^\frac{3}{2})$.
	In this work, following RMT~\cite{RMT}, we decompose criss-cross attention into vertical attention over each column followed by horizontal 
	attention over each row.
	The polyline path masked criss-cross attention is formulated as:
	\begin{equation}\label{eq:appendix_PPMCCA}
		\begin{array}{c}
				{\rm{PPMCCA}}\left( {\bm{X}} \right) 
				\! = \! \left( {\left( {{{\bm{S}}^{\bm{H}}} \! \times \!  {{\bm{S}}^{\bm{V}}}} \right)  \! \odot   \! {{\bm{L}}^{2D}}} \right) 
				\!  {\bm{V}}
				\! =  \!  \left( {\left( {{{\bm{S}}^{\bm{H}}} \!  \times \!  {{\bm{S}}^{\bm{V}}}} \right)  \! \odot \!  {\bm{L}}} \right) \!  
				{\bm{V}} \!  + \!  \left( {\left( {{{\bm{S}}^{\bm{H}}} \!  \times \!  {{\bm{S}}^{\bm{V}}}} \right)  \! \odot \!  {\bm{\tilde 
				L}}} \right) \!  {\bm{V}},
		\end{array}
	\end{equation}
	where horizontal and vertical attention maps ${{\bm{S}}^{\bm{H}}}, {{\bm{S}}^{\bm{V}}}\! \in \! {\mathbb{R}^{HW\! \times \!HW}}$ satisfy the form in Eq.~\eqref{eq:appendix_Decomposability3} with ${{{\bm A}^i}} \! = \! {{{\rm{softmax}}({{{\bm{\mathcal{Q}}}}_{i,:,:}}{\bm{\mathcal{K}}}_{i,:,:}^ \top)}}$ and ${{\bm{B}^l}} \! = \! {{{\rm{softmax}}({{\bm{\mathcal{Q}}}_{:,l,:}}{\bm{\mathcal{K}}}_{:,l,:}^ \top)}}$, 
	and ${{{\bm{\mathcal{Q}}}}}, {{{\bm{\mathcal{K}}}}} \! \in \! {\mathbb{R}^{H\! \times \!W \! \times \! C}}$ are tensor forms of ${\bm Q}, {\bm K}$, respectively~\cite{huang2020ccnet}.
	Based on Theorem~\ref{theorem:appendix_Decomposability}, we can reformulate the left part of Eq.~\eqref{eq:appendix_PPMCCA} as:
	%
	\begin{equation}\label{eq:appendix_PPMCCA2}
		\begin{array}{l}
			\begin{aligned}
				\left(  {\left( {{{\bm{S}}^{\bm{H}}}  \times  {{\bm{S}}^{\bm{V}}}} \right)  \odot  {\bm{L}}} \right)   {\bm{V}} 
				 &=  \left( {\left( {{{\bm{S}}^{\bm{H}}}  \times  {{\bm{S}}^{\bm{V}}}} \right)    \odot    \left( {{{\bm{L}}^{\bm{H}}}  
				 \times  {{\bm{L}}^{\bm{V}}}} \right)} \right)  {\bm{V}} \\
				 &=  \left(  {\left( {{{{\bm{\hat S}}}^{\bm{H}}}  \odot  {{{\bm{\hat S}}}^{\bm{V}}}} \right)  \odot  \left(  {{{{\bm{\hat 
				 L}}}^{\bm{H}}}  \odot  {{{\bm{\hat L}}}^{\bm{V}}}} \right)}  \right)  {\bm{V}}\\
				&=   \left(   {\left( {{{{\bm{\hat S}}}^{\bm{H}}}  \odot  {{{\bm{\hat L}}}^{\bm{H}}}} \right)  \odot  \left( {{{{\bm{\hat 
				S}}}^{\bm{V}}}  \odot  {{{\bm{\hat L}}}^{\bm{V}}}} \right)}  \right)  {\bm{V}} \\
				 &=  {\left( {{{\bm{S}}^{\bm{H}}}  \odot  {{\bm{L}}^{\bm{H}}}} \right)  \times  \left( {{{\bm{S}}^{\bm{V}}}  \odot  
				 {{\bm{L}}^{\bm{V}}}} \right)}   \times  {\bm{V}} \\
				&=  \left( {{{\bm{S}}^{\bm{H}}}  \odot  {{\bm{L}}^{\bm{H}}}} \right)  \times  \left(\left( {{{\bm{S}}^{\bm{V}}}  \odot  
				{{\bm{L}}^{\bm{V}}}} \right)   \times  {\bm{V}} \right).
			\end{aligned}
		\end{array}
	\end{equation}
	Note that matrices ${{{{\bm{\hat S}}}^{\bm{H}}} \! \odot \! {{{\bm{\hat L}}}^{\bm{H}}}}$ and ${{{{\bm{\hat S}}}^{\bm{V}}} \! \odot \! 
	{{{\bm{\hat L}}}^{\bm{V}}}}$ also satisfy the form (i.e. ${\bm M}^A$and ${\bm M}^B$) in Eq.~\eqref{eq:appendix_Decomposability3}.
	Thus, the computational complexity  of Eq.~\eqref{eq:appendix_PPMCCA2} can be reduced to $\mathcal{O}({N^{\frac{3}{2}}})$ by Algorithm~\ref{algorithm: appendix_EfficientComputation}.
	Similar conclusions can also be derived for the right part of Eq.~\eqref{eq:appendix_PPMCCA}. Thus, the overall computational complexity  
	of Eq.\eqref{eq:appendix_PPMCCA} maintains $\mathcal{O}({N^{\frac{3}{2}}})$.

	\textbf{4) Polyline Path Masked Decomposed Attention.}
	For general decomposable attention which can be decomposed as $\bm{S}={{{\bm{S}}_1} \times {{\bm{S}}_2}}$, where
	${{\bm{S}}_1}\! \in \! {\mathbb{R}^{N \! \times \! D}}$ and ${{\bm{S}}_2}\! \in \! {\mathbb{R}^{D \! \times \! N}}$, 
	the polyline path masked decomposed attention is formulated as:
	\begin{equation}\label{eq:appendix_PPMDA}
		\begin{aligned}
			{\rm{PPDA}}\left( {\bm{X}} \right) 
			&= \left( {\left( {{{\bm{S}}_1} \times {{\bm{S}}_2}} \right) \odot {{\bm{L}}^{2D}}} \right) {\bm{V}} = \left( {\left( {{{\bm{S}}_1} \times {{\bm{S}}_2}} \right) \odot {\bm{L}}} \right) {\bm{V}} + \left( {\left( {{{\bm{S}}_1} \times {{\bm{S}}_2}} \right) \odot {\bm{\tilde L}}} \right) {\bm{V}}
		\end{aligned}
	\end{equation}
	According to Theorem~\ref{theorem:appendix_theorem1}, we can compute $\left(({{{{\bm{S}}_1} \times {{\bm{S}}_2}}) \odot {\bm{L}}} 
	\right){\bm{V}}$ as follows:
	\begin{equation}\label{eq:appendix_PPMDA2}
		\begin{aligned}
			{{\bm{\mathcal{S}}_2^V}} &= {\mathtt{einsum}} \left({\mathtt{'md,mc \to mdc'}}, {\bm{S}_2^\top, {\bm V}}\right)\\
			{\bm{\mathcal{L}}}^{SV} &= {\mathtt{einsum}} \left({\mathtt{'nm,mdc \to mdc'}}, {{\bm L}, {{\bm{\mathcal{S}}_2^V}}}\right)\\
			{\bm{Y}} &= {\mathtt{einsum}} \left({\mathtt{'md,mdc \to mc'}}, {{\bm{S}}_1, {{\bm{\mathcal{L}}^{SV}}}}\right).
		\end{aligned}
	\end{equation}
	Based on Corollary~\ref{corollary:appendix_MaskedAttentionComplexity} and Algorithm~\ref{algorithm: appendix_EfficientComputation}, the 
	computational complexity of Eq.~\eqref{eq:appendix_PPMDA2} can be reduced from $\mathcal{O}(N^{2})$ to $\mathcal{O}(ND)$.
	Thus, the computational complexity of Eq.~\eqref{eq:appendix_PPMDA} is $\mathcal{O}(ND)$.

	\section{Experimental Details}	
	\label{sec:appendix_ExperimentalDetails}
	
	\subsection{Architecture Details}
	\label{sec:appendix_ImplementationDetails}

	\begin{table*}[!b]
		\centering
		\begin{tabular}{c|c c c c|c c}
			\toprule[1pt]
			Model & Blocks & Channels & Heads & Ratios & \makecell{\#Param.\\(M)} & \makecell{FLOPs\\(G)}\\
			\midrule[0.5pt]
			PPMA-T & [2, 2, 8, 2] & [64, 128, 256, 512] & [4, 4, 8, 16] & [3, 3, 3, 3] & 14 & 2.7 \\
			PPMA-S & [3, 4, 18, 4] & [64, 128, 256, 512] & [4, 4, 8, 16] & [4, 4, 3, 3] & 27 & 4.9 \\
			PPMA-B & [4, 8, 25, 8] & [80, 160, 320, 512] & [5, 5, 10, 16] & [4, 4, 3, 3] & 54 & 10.6 \\
			\bottomrule[1pt]
		\end{tabular}
		\caption{Detailed Architectures of the Polyline Path Masked Attention based Vision Transformer.}
		\label{tab:appendix_arc}
	\end{table*}
	
	As illustrated in Fig.~\ref{fig:PPMAViT_architecture}, our backbone adopts the same four-stage hierarchical architecture as RMT~\cite{RMT}, where the first three stages employ Polyline Path Masked Criss-Cross Attention and the final stage employs the Polyline Path Masked Vanilla Attention. 
	Moreover, we develop our model in three scales: tiny (PPMA-T), small (PPMA-S), and base (PPMA-B).
	
	The detailed configurations of PPMA variants are provided in Tab.~\ref{tab:appendix_arc}.
	Following RMT~\cite{RMT}, the stem layer consists of five $3\times3$ convolution layers followed by GELU and batch normalization to embed 
	the input image into $56\times56$ tokens.
	The downsampling layer consists of $3\times3$ convolution layers with stride 2 to reduce the feature map's resolution.
	Moreover, we follow RMT~\cite{RMT} and incorporate RoPE~\cite{RoPE}, CPE~\cite{CPVT}, and LCE~\cite{biformer} into the PPMA blocks.
	All other configurations also follow RMT~\cite{RMT}.
	Code is available at \url{https://github.com/zhongchenzhao/PPMA}.
	
	\subsection{Training Settings for ImageNet-1K}
	\label{sec:appendix_Classification}
	To ensure reproducibility and consistency with prior work, we follow the training strategy of RMT~\cite{RMT} and DeiT~\cite{deit}.
	Specifically, we employ various data augmentation techniques, including RandAugment~\cite{randomaugment}, Mixup~\cite{mixup} (prob=0.8), 
	CutMix~\cite{cutmix} (prob=1.0), Random Erasing~\cite{randera} (prob=0.25).
	For model optimization, we use the AdamW optimizer with a cosine decay learning rate scheduler and train our model 300 epochs from 
	scratch.  
	The initial learning rate, weight decay, and batch size are set to 0.001, 0.05, and 1024, respectively.
	The drop path rates for PPMA-T, PPMA-S, and PPMA-B are set to 0.1, 0.15, and 0.4, respectively.
	We also adopt training techniques from RMT~\cite{RMT}, including Label Smoothing (0.1)~\cite{szegedy2016rethinking} and Exponential Moving 
	Average (EMA)~\cite{EMA}.

	\subsection{Training Settings for Downstream Tasks}
	\label{sec:appendix_Downstream}
	For experiments on the ADE20K~\cite{ade20k} and MSCOCO2017~\cite{coco} datasets, we follow the training settings of  
	TransNeXT~\cite{TransNeXt}, and utilize the MMDetection~\cite{MMDetection} and MMSegmentation~\cite{mmsegmentation} libraries for 
	training.
	Specifically, in the MMDetection~\cite{MMDetection} library, we adopt Mask R-CNN~\cite{maskrcnn} as the basic framework and use the AdamW 
	optimizer with an initial learning rate of $0.0001$ and a weight decay of 0.01.
	The model is trained for 12 epochs with a batch size of 16 using the standard $1\times$ schedule.
	In the MMSegmentation~\cite{mmsegmentation} library, we adopt UPerNet~\cite{upernet} as the basic framework and use the AdamW optimizer 
	with the initial learning rate of $6 \! \times \! 10^{-5}$ and the weight decay of 0.01.
	All models are trained for 160K iterations with a batch size of 16 on the ADE20K dataset.
	The input size of images is set to 512 $\times$ 512 .

	\subsection{Throughput Comparison}
	\label{sec:appendix_Throughput}
	To evaluate the inference speed of our model, we measure the throughput of PPMA-T/S/B on an A800 GPU with a batch size of 64 and the image 
	resolution of $224 \times 224$.
	As shown in Table~\ref{table:appendix_throughtput}, the inference throughput of PPMA-T/S/B decrease by 37\%/30\%/21\% compared to 
	RMT-T/S/B, respectively.
	This is mainly caused by the additional GPU kernel launches and memory transactions required to compute the polyline path mask.
	As shown in Table~\ref{table:appendix_throughtput}, the CUDA implementation of TransNeXt achieves a significant speedup over the PyTorch 
	implementation.
	In our implementation, the polyline path mask is currently computed using PyTorch.
	Similar to TransNeXt, our implementation can also be optimized through engineering efforts, such as using CUDA or Triton-based 
	implementations, to accelerate inference speed.
	
	\begin{table}[!t]
		\centering
		\caption{Comparison of inference speed across different models on ImageNet-1K. Throughput is measured on an A800 GPU with a 
		batch size of 64.}
		\begin{tabular}{l|ccc|c}
			\Xhline{1.0pt}
			\toprule
			Model & \makecell{\#Param. (M)} & \makecell{FLOPs (G)} & \makecell{Throughput (imgs/s)} & \makecell{Top-1 (\%)}\\
			\midrule
			BiFormer-T~\cite{biformer} & 13 & 2.2 & 1602 &81.4 \\
			SMT-T~\cite{SMT} & 12 & 2.4 & 636 & 82.2 \\
			RMT-T~\cite{RMT} & 14 & 2.5 & 1650 & 82.4 \\
			TransNeXt-M (PyTorch)~\cite{TransNeXt} & 13 & 2.7 & 742 & 82.5 \\
			TransNeXt-M (CUDA)~\cite{TransNeXt}    & 13 & 2.7 & 1299 & 82.5 \\
			\rowcolor{gray!30}PPMA-T & 14 & 2.7 & 1034 & 82.6 \\
			\midrule[0.5pt]
			CMT-S~\cite{cmt} & 25 & 4.0 & 848 & 83.5 \\
			MaxViT-T~\cite{maxvit} & 31 & 5.6 & 826 & 83.6 \\
			SMT-S~\cite{SMT} & 20 & 4.8 & 356 & 83.7\\
			BiFormer-S~\cite{biformer} & 26 & 4.5 & 766 & 83.8\\
			RMT-S~\cite{RMT} & 27 & 4.5 & 876 & 84.0 \\
			TransNeXt-T (PyTorch)~\cite{TransNeXt} & 28 & 5.7 & 508 & 84.0 \\
			TransNeXt-T (CUDA)~\cite{TransNeXt}    & 28 & 5.7 & 947 & 84.0 \\
			\rowcolor{gray!30}PPMA-S & 27 & 4.9 & 612 & 84.2 \\
			\midrule[0.5pt]
			SMT-B~\cite{SMT} & 32 & 7.7 & 237 & 84.3 \\
			BiFormer-B~\cite{biformer} & 57 & 9.8 & 498 & 84.3\\
			CMT-B~\cite{cmt} & 46 & 9.3 & 447 & 84.5 \\
			TransNeXt-T (PyTorch)~\cite{TransNeXt} & 50 & 10.3 & 266 & 84.7 \\
			TransNeXt-T (CUDA)~\cite{TransNeXt}    & 50 & 10.3 & 436 & 84.7 \\
			RMT-B~\cite{RMT} & 54 & 9.7 & 457 & 84.9 \\
			\rowcolor{gray!30}PPMA-B & 54 & 10.6 & 362 & 85.0 \\
			\bottomrule
			\Xhline{1.0pt}
		\end{tabular}
		\label{table:appendix_throughtput}
	\end{table}

	\subsection{Visualization}
	\label{sec:appendix_Visualization}
	
	The visualizations of the polyline path masked attention map are shown in Fig.~\ref{fig:Appendix_MaskedAttentionMap}.
	Input images are taken from the ImageNet-1K validation set, and the query token is marked by a red box on each input image.
	The decay factors and attention maps are generated by the second block of the first stage in the  PPMA-T model trained on the ImageNet-1K 
	training set.

	\textbf{Input-dependent Decay Factor.} As shown in Fig.~\ref{fig:Appendix_MaskedAttentionMap} (b) and (c), the decay factors $\alpha$ and 
	$\beta$ learned by the network can roughly capture the edge information of objects in the feature map: decay factors at edges tend to be 
	smaller (approaching zero), whereas those in homogeneous regions tend to be larger (approaching one).
	Moreover, the supervised training encourages the decay factors $\alpha$ and $\beta$ to focus on horizontal and vertical edge information, 
	respectively.

	\textbf{Polyline Path Mask.} As shown in Fig.~\ref{fig:Appendix_MaskedAttentionMap} (e), the polyline path mask, generated by the 
	cumulative multiplication of decay factors, effectively captures the semantic continuity in the feature space.
	It maintains continuity in homogeneous regions sharing the same semantics and shows discontinuity at the edges between regions of 
	different semantics.

	\begin{figure}[!t]
	\centering
	\includegraphics[width=\linewidth]{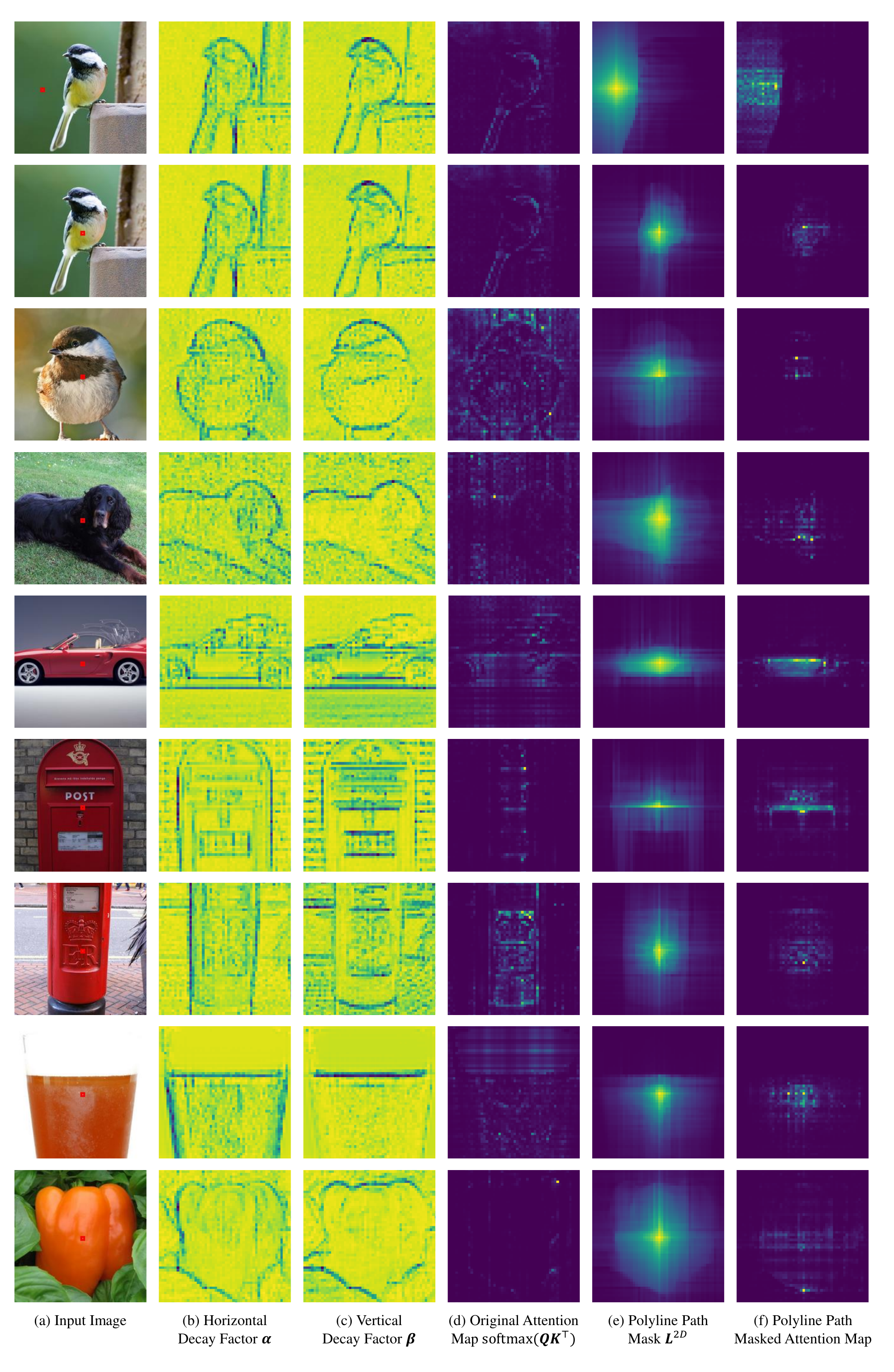}
	\caption{
		Visualizations of the decay factors and the polyline path masked attention maps of the well-trained PPMA model. In each input 
		image, the query token is marked by a red box.
	}
	\label{fig:Appendix_MaskedAttentionMap}
	\vspace{-5mm}
\end{figure}

	\textbf{Polyline Path Masked Attention Map.}  Fig.~\ref{fig:Appendix_MaskedAttentionMap} (d) shows that attention maps from shallow layers in typical ViT models often struggle to focus on tokens relevant to the query token.
	In contrast, Fig.~\ref{fig:Appendix_MaskedAttentionMap} (f) demonstrates that integrating the polyline path mask ${\bm L} ^{2D}$ 
	successfully suppresses interference from distant and irrelevant tokens, resulting in more semantically accurate masked attention maps.

	\section{Discussion}	
	\label{sec:appendix_Discussion}

	\subsection{Selectivity of Polyline Path Mask}
	\label{sec:appendix_DiscussionSelectivity}
	Compared to previous state-space models (SSMs) such as RetNet~\cite{retnet}, the primary contribution of Mamba~\cite{mamba} and 
	Mamba2~\cite{mamba2} is the introduction of a selective mechanism into the structured mask, which leads to significant performance 
	improvements. 
	However, current studies still lack a deep understanding of this crucial selectivity mechanism.
	
	In this work, we argue that the selective mechanism in Mamba explicitly models the semantic continuity in sequences, which corresponds to 
	the local smoothness prior in images. 
	Building on this insight, we adopt the selective structured mask of Mamba2 and naturally generalize it into a 2D polyline path mask for 
	Vision Transformers (ViTs).

	\textbf{Semantic Continuity in Sequence.} 
	Similar to self-attention maps in ViTs, the structured mask ${\bm L} \! \in \! {\mathbb{R} ^{N \! \times \! N}}$ can also be viewed as a 
	weighting matrix that maps input tokens ${\bm X} \! \in \! {\mathbb{R} ^{N \! \times \! C}}$ to output tokens ${\bm Y} \! \in \! 
	{\mathbb{R} ^{N \! \times \! C}}$ along the sequence length dimension.
	In this weighting matrix ${\bm L} $, a larger decay weight ${\bm L}_{i,j}$ indicates a greater influence of the input token ${\bm X}_i$ on 
	the output token ${\bm Y}_j$, and vice versa. 
	In Mamba2, ${\bm L}_{i,j}$ is computed as the cumulative multiplication of decay factors $a^{\times}_{i:j}$ to achieve linear complexity.
	As a result, if any factor is close to zero, ${\bm L}_{i,j}$ approaches zero;
	conversely, ${\bm L}_{i,j}$ approaches one only when all decay factors are close to one.
	
	For most semantic‐related tasks, an ideal structured mask should model semantic continuity in the sequence: it should maintain continuous 
	between connected tokens with the same semantic, while breaking between tokens with different semantics.
	This enables the aggregation and separation of tokens according to their semantics.
	Accordingly, decay factors should ideally be larger in homogeneous regions and smaller at heterogeneous regions. 
	As illustrated in Fig.~\ref{fig:Appendix_MaskedAttentionMap} (b) and (c), the decay factors learned through supervised training align well 
	with this assumption.
	
	\textbf{Local Smoothness Prior in Images.}
	In natural images, spatially adjacent patches are more likely to belong to the same object and share similar semantics.
	This local smoothness prior plays a crucial role in natural image processing tasks, especially those requiring fine-grained feature 
	extraction.
	The selectivity of polyline path mask aligns naturally with this prior by modeling semantic continuity within homogeneous regions and 
	allowing discontinuities at object edges.
	Experimental results also show that integrating the polyline path mask yields significant performance improvements on the ADE20K semantic 
	segmentation task.

	\subsection{3D Extension of Polyline Path Mask}
	\label{sec:appendix_Discussion3DExtension}

	\begin{wrapfigure}{h}{0.33\linewidth}
		\centering
		\vspace{-5mm}
		\includegraphics[width=0.33\textwidth]{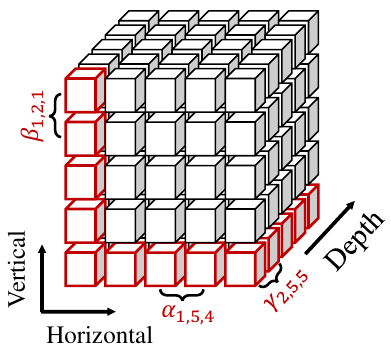}
		\vspace{-5mm}
		\caption{
			Illustration of the 3D extension of polyline path mask. 
		}
		\label{fig:Appendix_3DPolylinePathMask}
	\end{wrapfigure}
	
	Based on the decomposability, we naturally extend the 2D polyline path mask to 3D applications.
	As illustrated in Fig.~\ref{fig:Appendix_3DPolylinePathMask}, the 3D polyline path mask ${\bm L} ^{3D}$ can be decomposed as the 
	multiplication of three 1D structured masks, ${{\bm L}^H} \! \times \! {{\bm L}^V} \! \times \!  {{\bm L}^D}$, representing the 
	horizontal, vertical, and depth scanning masks, respectively.
	Specifically, for each token pair $(\bm{x}_{i,j,k}, \bm{x}_{l,m,n})$ in the 3D grid, the 3D polyline path mask is defined as:
	\begin{equation}\label{eq:appendix_3DPolylinePathMaskDefinition}
		\small
		{\bm{\mathcal{L}}}_{(i,j,k),(l,m,n)}^{3D} = {\alpha _{i,j,k:n}} {\beta _{i, j:m, n}} {\gamma _{i:l,m, n}},
	\end{equation}
	where ${\bm{\mathcal{L}}}^{3D}$ is the tensor form of matrix ${\bm L} ^{3D}$, $\alpha$, $\beta$, and $\gamma$ are the decay factors along  
	the horizontal, vertical, and depth axes, respectively.
	Compared to the cross-scanning strategy~\cite{Vmamba}, the 3D polyline path scanning strategy better preserves the adjacency relationships 
	of 3D tokens.

	\subsection{Limitations}
	\label{sec:appendix_DiscussionLimitations}
	In this work, we introduce a learnable, input-dependent polyline path mask as the explicit positional encoding for ViTs, replacing the fixed decay mask in RMT~\cite{RMT}.  
	Experiments on high-level tasks  demonstrate the superiority of our method, particularly in fine-grained segmentation benchmarks, where 
	PPMA-T/S/B outperform RMT-T/S/B by 0.7\%/1.3\%/0.3\% SS mIoU on ADE20K, respectively.
	
	Notably, our carefully designed polyline path mask ${\bm L}^{2D}$ is decomposable as described in 
	Eq.~\eqref{eq:appendix_decomposedexample}, enabling efficient computation via algorithms~\ref{algorithm: appendix_EfficientComputation} to 
	optimize the computational complexity. 
	However, despite these optimizations, the large size of the mask ${\bm L}^{2D}$ still inevitably incurs extra GPU memory occupation and 
	slower inference speed compared to RMT~\cite{RMT}.
	As shown in Table~\ref{table:appendix_throughtput}, the inference throughput of PPMA-T/S/B decrease by 37\%/30\%/21\% in comparison with RMT-T/S/B, respectively.
	This limitation can be mitigated through engineering optimizations, such as CUDA or Triton-based implementations, which we plan to 
	investigate in the future work.


\end{document}